\newtheorem{theorem}{Theorem}
\newtheorem*{theorem*}{Theorem}
\newtheorem{proposition}{Proposition}
\newtheorem{corollary}{Corollary}[theorem]
\newtheorem{lemma}[theorem]{Lemma}
\newtheorem*{lemma*}{Lemma}
\newtheorem{definition}{Definition}[section]
\theoremstyle{remark}
\newtheorem*{remark}{Remark}
\newcommand{\indic}[1]{\mathds{1}_{\!\{#1\}}}
\DeclareMathOperator*{\argmax}{argmax}
\newcommand{\norm}[1]{\left\lVert#1\right\rVert}
\newcommand{\intint}[1]{[#1]} 
\newcommand{\card}[1]{|#1|}
\newcommand{\expe}[2]{\mathbb{E}_{#1}\left[{#2}\right]}
\newcommand{\expef}[1]{\mathbb{E}_{#1}}
\newcommand{\pr}[2]{\mathbb{P}_{#1}\left[{#2}\right]}
\newcommand{\group}{g}
\newcommand{\grarm}{k}
\newcommand{\user}{m}
\newcommand{\otheruser}{n}
\newcommand{\act}{a}
\newcommand{\ctx}{x}
\newcommand{\rew}{r}
\newcommand{\exprew}{\rho}
\newcommand{\util}{\mu}
\newcommand{\iutil}{u}
\newcommand{\cnt}{N}
\newcommand{\event}{\mathcal{E}}
\newcommand{\costexp}{C}
\newcommand{\conf}{\delta}
\newcommand{\narms}{K}
\newcommand{\nusers}{M}
\newcommand{\actS}{\mathcal A}
\newcommand{\actSsmall}{\mathcal{A}} 
\newcommand{\ctxS}{\mathcal X}
\newcommand{\rad}{\beta}
\newcommand{\activepullS}{A}
\newcommand{\sumt}{\sum_{s=1}^t}
\newcommand{\sumCtx}{\sum_{\ctx \in \ctxS}}
\newcommand{\sumAct}{\sum_{\act \in \actS}}
\newcommand{\sumArm}{\sum_{\grarm=1}^\narms}
\newcommand{\sumActive}{\sum_{s \in \activepullS_{t-1}}}
\newcommand{\ucb}{\overline{\util}}
\newcommand{\lcb}{\underline{\util}}
\newcommand{\empEst}{\widehat{\util}}
\newcommand{\banditalg}{OCEF\xspace}
\newcommand{\bandialgcomplete}{OCEF (Online Certification of Envy-Freeness)\xspace}
\newcommand{\auditalg}{AUDIT\xspace}
\newcommand{\tmp}{f}
\newcommand{\piu}[1]{\pi^{#1, *}} 
\newcommand{\pipar}[1]{\pi^{#1, {\rm par}}} 
\newcommand{\piee}[1]{\pi^{#1, {\rm eq}}} 
\newcommand{\euureg}{b}
\newcommand{\pieuu}{\pi^{{\rm euu}}_\euureg}
\newcommand{\degenvy}{\Delta}
\newcommand{\dpol}{D}
\newcommand{\nsampled}{\tilde{M}}
\newcommand{\efparams}{\epsilon,\gamma,\lambda}
\newcommand{\ceil}[1]{\left\lceil#1\right\rceil}
\newcommand{\nico}[1]{}
\newcommand{\virginie}[1]{}
\newcommand{\jamal}[1]{}
\title{Online certification of preference-based fairness\\ for personalized recommender systems}
\author {
    Virginie Do\textsuperscript{\rm 1,2},
    Sam Corbett-Davies\textsuperscript{\rm 2},
    Jamal Atif\textsuperscript{\rm 1},
    Nicolas Usunier\textsuperscript{\rm 2}
}
\begin{document}

\maketitle
\begin{abstract}
Recommender systems are facing scrutiny because of their growing impact on the opportunities we have access to. Current audits for fairness are limited to coarse-grained parity assessments at the level of sensitive groups. We propose to audit for \emph{envy-freeness}, a more granular criterion aligned with individual preferences: every user should prefer their recommendations to those of other users. Since auditing for envy requires to estimate the preferences of users beyond their existing recommendations, we cast the audit as a new pure exploration problem in multi-armed bandits. We propose a sample-efficient algorithm with theoretical guarantees that it does not deteriorate user experience. We also study the trade-offs achieved on real-world recommendation datasets.
\end{abstract}

\section{Introduction}

Recommender systems shape the information and opportunities available to us, as they help us prioritize content from news outlets and social networks, sort job postings, or find new people to connect with. To prevent the risk of unfair delivery of opportunities across users, substantial work has been done to audit recommender systems \cite{sweeney2013discrimination,asplund2020auditing,imana2021auditing}. 
For instance, \citet{datta2015automated} found that women received fewer online ads for high-paying jobs than equally qualified men, while \citet{imana2021auditing} observed different delivery rates of ads depending on gender for different companies proposing similar jobs. 


The audits above aim at controlling for the possible acceptable justifications of the disparities, such as education level  in job recommendation audits. Yet, the observed disparities in recommendation do not necessarily imply that a group has a less favorable treatment: they might as well reflect that individuals of different groups tend to prefer different items. To strengthen the conclusions of the audits, it is necessary to develop methods that account for user preferences. Audits for equal satisfaction between user groups follow this direction  \citep{mehrotra2017auditing}, but they also have limitations. For example, they require interpersonal comparisons of measures of satisfaction, a notoriously difficult task \citep{sen1999possibility}. 

We propose an alternative approach to incorporating user preferences in audits which focuses on \emph{envy-free recommendations}: the recommender system is deemed fair if each user prefers their recommendation to those of all other users. Envy-freeness allows a system to be fair even in the presence of disparities between groups as long as these are justified by user preferences. On the other hand, if user B systematically receives better opportunities than user A \emph{from A's perspective}, the system is unfair. The criterion does not require interpersonal comparisons of satisfaction, since it relies on comparisons of different recommendations from the perspective of the same user. Similar fairness concepts have been studied in classification tasks under the umbrella of preference-based fairness \citep{zafar2017parity,kim2019preference,ustun2019fairness}. Envy-free recommendation is 
the extension of these approaches to personalized recommender systems.

Compared to auditing for recommendation parity or equal satisfaction, auditing for envy-freeness poses new challenges. First, envy-freeness requires answering counterfactual questions such as ``would user A get higher utility from the recommendations of user B than their own?'', while searching for the users who most likely have the best recommendations from A's perspective. This type of question can be answered reliably only through active exploration, hence we cast it in the framework of pure exploration bandits  \cite{bubeck2009pure}. To make such an exploration possible, we consider a scenario where the auditor is allowed to replace a user’s recommendations with those that another user would have received in the same context. Envy, or the absence thereof, is estimated by suitably choosing whose recommendations should be shown to whom. While this scenario is more intrusive than some black-box audits of parity, auditing for envy-freeness provides a more compelling guarantee on the wellbeing of users subject to the recommendations.

The second challenge is that active exploration requires randomizing the recommendations, which in turn might alter the user experience. In order to control this cost of the audit (in terms of user utility), we follow the framework of conservative exploration \cite{wu2016conservative,garcelon2020conservative}, which guarantees a performance close to the audited system. We provide a theoretical analysis of the trade-offs that arise, in terms of the cost and duration of the audit (measured in the number of timesteps required to output a certificate).

Our technical contributions are twofold. \textbf{(1)} We provide a novel formal analysis of envy-free recommender systems, including a comparison with existing item-side fairness criteria and a probabilistic relaxation of the criterion. \textbf{(2)} We cast the problem of auditing for envy-freeness as a new pure exploration problem in bandits with conservative exploration constraints, and propose a sample-efficient auditing algorithm which provably maintains, throughout the course of the audit, a performance close to the audited system.

We discuss the related work in Sec. \ref{sec:related}. Envy-free recommender systems are studied in Sec. \ref{sec:envy}. In Sec. \ref{sec:algo}, we present the bandit-based auditing algorithm. In Sec. \ref{sec:exps}, we investigate the trade-offs achieved on real-world datasets.

\section{Related work}\label{sec:related}

\paragraph{Fair recommendation}
The domain of fair machine learning is organized along two orthogonal axes. The first axis is whether fairness is oriented towards groups defined by protected attributes \cite{barocas2016big}, or rather oriented towards individuals \cite{dwork2012fairness}. The second axis is whether fairness is a question of \emph{parity} (predictions [or prediction errors] should be invariant by group or individual) \cite{corbett2018measure,kusner2017counterfactual}, or \emph{preference-based} (predictions are allowed to be different 
if they faithfully reflect the preferences of all parties) \cite{zafar2017parity,kim2019preference,ustun2019fairness}. Our work takes the perspective of envy-freeness, which follows the preference-based approach and is aimed towards individuals.

The literature on fair recommender systems covers two problems: \emph{auditing} existing systems, and \emph{designing} fair recommendation algorithms. Most of the \emph{auditing} literature focused on group parity in recommendations \cite{hannak2014measuring,lambrecht2019algorithmic}, and equal user utility \cite{mehrotra2017auditing,ekstrand2018all}, while our audit for envy-freeness focuses on whether personalized results are aligned with (unknown) user preferences. On the \emph{designing} side,  \citet{patro2020fairrec,ilvento2020multi} cast fair recommendation as an allocation problem, with criteria akin to envy-freeness. They do not address the partial observability of preferences, so they cannot guarantee user-side fairness 
without an additional certificate that the estimated preferences effectively represent the true user preferences. Our work is thus complementary to theirs.

While we study fairness for users, recommender systems are multi-sided \cite{burke2017multisided, patro2020fairrec}, thus fairness can also be oriented towards recommended items \cite{celis2017ranking, biega2018equity,geyik2019fairness}. 

\paragraph{Multi-armed bandits} In \emph{pure exploration} bandits \cite{bubeck2009pure,audibert2010best}, an agent has to identify a specific set of arms after exploring as quickly as possible, without performance constraints. Our setting is close 
to threshold bandits \cite{locatelli2016optimal,kano2019good} where the goal is to find arms with better performance than a given baseline. Outside pure exploration, in the \emph{regret minimization} setting, conservative exploration \cite{wu2016conservative} enforces the anytime average performance to be not too far worse than that of a baseline arm.

In our work, the baseline is \emph{unknown} -- it is the current recommender system -- and the other ``arms'' are other users' policies. The goal is to make the decision as to whether an arm is better than the baseline, while not deteriorating performance compared to the baseline. We thus combine pure exploration and conservative constraints.

Existing work on fairness in exploration/exploitation \cite{joseph2016fairness,jabbari2017fairness,liu2017calibrated} is different from ours because unrelated to personalization. 

\paragraph{Fair allocation}
Envy-freeness was first studied in fair allocation
\cite{foley1967resource} in social choice.  Our setting is different because: a) the same item can be given to an unrestricted number of users, 
and b) true user preferences are unknown. 


\section{Envy-free recommendations}\label{sec:envy}
\subsection{Framework}
There are $\nusers$ users, and we identify the set of users with $\intint{\nusers}=\{1, \ldots, \nusers\}$. A personalized recommender system has one stochastic recommendation policy $\pi^\user$ per user $\user$. We denote by $\pi^\user(\act|\ctx)$ the probability of recommending item $\act\in\actS$ for user $\user\in\intint{\nusers}$ in context $\ctx\in\ctxS$.  We assume that $\ctxS$ and $\actS$ are finite to simplify notation, but this has no impact on the results. We consider a synchronous setting where at each time step $t$, the recommender system observes a context $\ctx^\user_t\sim q^\user$ for each user, selects an item $\act^\user_t\sim\pi^\user(.|\ctx^\user_t)$ and observes reward $\rew^\user_t\sim\nu^\user(\act^\user_t|\ctx^\user_t)\in[0,1]$.  We denote by $\exprew^\user(\act|\ctx)$ the expected reward for user $\user$ and item $\act$ in context $\ctx$, and, for any recommendation policy $\pi$, $\iutil^\user(\pi)$ is the utility of $m$ for $\pi$:
\begin{equation}
\begin{aligned}
\iutil^\user(\pi) &= \expef{\ctx\sim q^\user}\expef{\act \sim \pi(.|\ctx)}\expe{r \sim \nu^\user(\act|\ctx)}{r} \label{eq:userutil}\\
&=  \sumCtx \sumAct  q^\user(\ctx) \pi(\act|\ctx) \exprew^\user(\act|\ctx)
\end{aligned}
\end{equation}
We assume that the environment is \emph{stationary}: the context and reward distributions $q^\user$ and $\nu^\user$, as well as the policies $\pi^\user$ are fixed. Even though in practice policies evolve as they learn from user interactions and user needs change over time, we leave the study of non-stationarities for future work. The stationary assumption approximately holds when these changes are slow compared to the time horizon of the audit, which is reasonable when significant changes in user needs or recommendation policies take e.g., weeks.
Our approach 
applies when items $\act$ are single products as well as when items are structured objects such as rankings. Examples of (context $\ctx$, item $\act$) pairs include: $\ctx$ is a query to a search engine and $\act$ is a document or a ranking of documents, or $\ctx$ is a song chosen by the user and $\act$ a song to play next or an entire playlist. Remember, our goal is \emph{not} to learn the user policies $\pi^\user$, but rather to audit existing $\pi^\user$s for fairness.


\subsection{$\epsilon$-envy-free recommendations}\label{sec:envy-def}

Existing audits for user-side fairness in recommender systems are based on two main criteria:
\begin{enumerate}[leftmargin=*]
    \item \emph{recommendation parity}: the distribution of recommended items should be equal across (groups of) users,
    \item \emph{equal user utility}: all (groups of) users should receive the same utility, i.e. $\forall m,n, \, u^m(\pi^m) = u^n(\pi^n).$
\end{enumerate}
There are two ways in which these criteria conflict with the goal of personalized recommender systems to best accomodate user preferences. First, recommendation parity does not control for disparities that are aligned with user preferences. Second, equal user utility drives utility down as soon as users have different best achievable utilities. To address these shortfalls, we propose envy-freeness as a complementary diagnosis for the fairness assessment of personalized recommender systems. In this context, envy-freeness requires that users prefer their recommendations to those of any other user: 

\begin{definition}\label{dfn:envyfree}
Let $\epsilon\!\geq\!0$. 
A recommender system is \emph{$\epsilon$-envy-free} if:     $\quad \forall \user, \otheruser \in [\nusers]: \quad\iutil^\user(\pi^{\otheruser}) \leq \epsilon + \iutil^\user(\pi^{\user}).$
\end{definition}

Envy-freeness, originally studied in fair allocation \cite{foley1967resource} and more recently fair classification \cite{balcan2018envy,ustun2019fairness,kim2019preference}, stipulates that it is fair to apply different policies to different individuals or groups as long as it benefits everyone. Following this principle, we consider the personalization of recommendations as fair only if it better accommodates individuals' preferences. In contrast, we consider unfair the failure to give users a better recommendation when one such is available to others.

Unlike parity or equal utility, envy-freeness is in line with giving users their most preferred recommendations (see Sec. \ref{sec:compatibility}). Another improvement from equal user utility is that it does not involve interpersonal utility comparisons.



Envy can arise from a variety of sources, for which we provide concrete examples in our experiments (Sec.~\ref{sec:exp-envy}).

\begin{remark}
We discuss an immediate extension of envy-freeness from individuals to groups of users in App. \ref{app:homogroups}, in the special case where groups have homogeneous preferences and policies. Defining group envy-free recommendations in the general case is nontrivial and left for future work.
\end{remark}

\subsection{Compatibility of envy-freeness}
\label{sec:compatibility}
\paragraph{Optimal recommendations are envy-free\footnote{App.\ref{sec:compat} shows the difference between envy-freeness and optimality certificates.}} Let $\piu{\user} \in\argmax_{\pi} \iutil^\user(\pi)$ denote an optimal recommendation policy for $m$. Then \emph{the optimal recommender system $(\piu{\user})_{\user\in\nusers}$ is envy-free} since:
$\iutil^\user(\piu{\user}) = \max_{\pi} \iutil^\user(\pi) \geq  \iutil^\user(\piu{\otheruser})$. In contrast, achieving equal user utility in general can only be achieved by decreasing the utility of best-served users for the benefit of no one. It is also well-known that achieving parity in general requires to deviate from optimal predictions \citep{barocas2018fairness}.

\paragraph{Envy-freeness vs. item-side fairness} Envy-freeness is a user-centric notion. Towards multisided fairness \citep{burke2017multisided}, we analyze the compatibility of envy-freeness with item-side fairness criteria for rankings from \citet{singh2018fairness}, based 
on sensitive categories of items (denoted $\actSsmall_1, ..., \actSsmall_{S}$). \emph{Parity of exposure} prescribes that for each user, the exposure of an item category should be proportional to the number of items in that category. In \emph{Equity of exposure}
\footnote{\citet{singh2018fairness} use the terminology of demographic parity (resp. disparate treatment) for what we call parity (resp. equity) of exposure. Our use of ``equity'' follows \citet{biega2018equity}.}, 
the exposure of item categories should be proportional to their average relevance to the user.

The optimal policies under parity and equity of exposure constraints, denoted respectively by $(\pipar{\user})_{\user=1}^\nusers$ and $(\piee{\user})_{\user=1}^\nusers$, are 
defined given user $\user$ and context $\ctx$ as:
%
\begin{align}
\text{\emph{(parity)}}  &&  
    \pipar{\user}(.|\ctx) = \argmax_{\substack{p:\actS\to[0,1]\\\sum_a p(a) = 1}}  \sum_{a\in\actS} p(a)\exprew^\user(a |\ctx)\nonumber\\ 
    &&
        \text{u.c. } \forall s\in \intint{S},  \sum_{a\in\actSsmall_s} p(a) = \frac{\card{\actSsmall_s}}{\card{\actS}}.\label{eq:FEparityconstraint}
\end{align}
Optimal policies under equity of exposure are defined similarly\footnote{The original criterion \citep[][Eq. 4]{singh2018fairness} would be written in our case as $\forall s, s'\in \intint{S}, \frac{1}{\card{\actSsmall_s}}\sum_{a\in\actSsmall_s} p(a) = \frac{1}{\card{\actSsmall_{s'}}}\sum_{a\in\actSsmall_{s'}} p(a)$, which is equivalent to \eqref{eq:FEparityconstraint}. A similar remark holds for the equity constraint.}, but the constraints are $\forall s, \sum\limits_{a\in\actSsmall_s} p(a) = \frac{\sum\limits_{a\in\actSsmall_s}  \exprew^\user(a |\ctx)}{\sum\limits_{a\in\actS}  \exprew^\user(a |\ctx)}$.
%
We show their relation to envy-freeness:
\begin{proposition}\label{prop:envyEE} With the above notation:
\begin{itemize}
    \item the policies $(\pipar{\user})_{\user=1}^\nusers$ are envy-free, while
    \item the policies $(\piee{\user})_{\user=1}^\nusers$ are not envy-free in general.
\end{itemize}
\end{proposition}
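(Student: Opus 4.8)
The plan is to treat the two statements separately: the first follows from a one-line feasibility argument, and the second requires an explicit counterexample with two users.

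For the first part, the key observation is that the constraint set in \eqref{eq:FEparityconstraint} depends only on the fixed partition of items into sensitive categories $\actSsmall_1,\dots,\actSsmall_S$, not on the user. Hence, for every context $\ctx$, both $\pipar{\user}(.|\ctx)$ and $\pipar{\otheruser}(.|\ctx)$ lie in the same polytope $\mathcal P_\ctx=\{p:\actS\to[0,1]\ \mid\ \sum_a p(a)=1,\ \sum_{a\in\actSsmall_s}p(a)=\card{\actSsmall_s}/\card{\actS}\ \text{for all }s\in\intint{S}\}$. Since $\pipar{\user}(.|\ctx)$ maximizes $\sum_a p(a)\exprew^\user(a|\ctx)$ over $\mathcal P_\ctx$ and $\pipar{\otheruser}(.|\ctx)\in\mathcal P_\ctx$ is a feasible point, we get $\sum_a \pipar{\user}(a|\ctx)\exprew^\user(a|\ctx)\ge\sum_a \pipar{\otheruser}(a|\ctx)\exprew^\user(a|\ctx)$ for every $\ctx\in\ctxS$. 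Multiplying by $q^\user(\ctx)\ge0$ and summing over $\ctx$ yields $\iutil^\user(\pipar{\user})\ge\iutil^\user(\pipar{\otheruser})$ via \eqref{eq:userutil}, i.e.\ $0$-envy-freeness, hence $\epsilon$-envy-freeness for every $\epsilon\ge0$. I expect this half to be completely routine.

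For the second part, I would build a minimal instance violating Definition~\ref{dfn:envyfree}: one context (dropped from the notation), two users $\user,\otheruser$, and two items $a_1,a_2$ forming singleton categories $\actSsmall_1=\{a_1\}$, $\actSsmall_2=\{a_2\}$. With singleton categories the equity-of-exposure constraint leaves no freedom, so necessarily $\piee{\user}(a_s)=\exprew^\user(a_s)/(\exprew^\user(a_1)+\exprew^\user(a_2))$, and analogously for $\otheruser$. Writing $\iutil^\user(\piee{\otheruser})-\iutil^\user(\piee{\user})$ over the common (positive) denominator $(\exprew^\user(a_1)+\exprew^\user(a_2))(\exprew^\otheruser(a_1)+\exprew^\otheruser(a_2))$, the numerator factors as $(\exprew^\user(a_2)-\exprew^\user(a_1))\big(\exprew^\user(a_1)\exprew^\otheruser(a_2)-\exprew^\user(a_2)\exprew^\otheruser(a_1)\big)$, so $\user$ envies $\otheruser$ iff this product is positive. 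Taking $\exprew^\user(a_1)=\tfrac12,\ \exprew^\user(a_2)=1,\ \exprew^\otheruser(a_1)=\tfrac14,\ \exprew^\otheruser(a_2)=1$ makes both factors positive, giving $\iutil^\user(\piee{\user})=\tfrac56$ and $\iutil^\user(\piee{\otheruser})=\tfrac{9}{10}$; the instance is thus not $\epsilon$-envy-free for any $\epsilon<\tfrac1{15}$.

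The only delicate point is choosing reward values in the second part that simultaneously lie in $[0,1]$ and yield strict envy; this is made painless by the sign criterion above together with the fact that the envy condition is invariant under separately rescaling $\exprew^\user$ and $\exprew^\otheruser$. Everything else is elementary arithmetic, so the ``obstacle'' here is really just exhibiting a clean witness rather than anything of mathematical substance.
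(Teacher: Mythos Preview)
Your proof is correct on both halves. For the parity claim your argument is exactly the paper's: the constraint set is user-independent, so each $\pipar{\user}$ is optimal over a feasible set that also contains $\pipar{\otheruser}$, and the inequality follows.

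For the equity claim you take a genuinely different, more minimal route. The paper exhibits a $4$-item, $2$-category instance (two items per category) in which \emph{both} users envy each other; this uses the fact that within a category all exposure collapses onto the single best item, so a category with high average relevance can crowd out a category containing the globally best item. Your construction instead uses two singleton categories, so the equity constraint pins down the policy uniquely (no optimization left), and the envy comes purely from the exposure shares being tied to each user's own relevance profile. Your instance is smaller and your closed-form factorization $(\exprew^\user(a_2)-\exprew^\user(a_1))(\exprew^\user(a_1)\exprew^\otheruser(a_2)-\exprew^\user(a_2)\exprew^\otheruser(a_1))$ makes it transparent exactly when envy occurs. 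The paper's example has the advantage of showing mutual envy and of involving non-degenerate categories (so the $\argmax$ in the definition of $\piee{\user}$ actually matters), which arguably makes it a more representative witness; yours is cleaner and self-contained. Either establishes the proposition.
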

Optimal recommendations under parity of exposure are envy-free because the parity constraint \eqref{eq:FEparityconstraint} is the same for all users. Given two users $\user$ and $\otheruser$, $\pipar{\user}$ is optimal for $\user$ under \eqref{eq:FEparityconstraint} and $\pipar{\otheruser}$ satisfies the same constraint, so we have $\iutil^{\user}(\pipar{\user}) \geq \iutil^\user(\pipar{\otheruser})$.

In contrast, the optimal recommendations under equity of exposure are, in general, not envy-free. A first reason is that less relevant item categories reduce the exposure of more relevant categories: a user who prefers item $a$ but who also likes item $b$ from another category envies a user who only liked item is $a$. Note that \emph{amortized} versions of the criterion and other variants considering constraint averages over user/contexts \citep{biega2018equity,patro2020fairrec} have similar pitfalls unless envy-freeness is explictly enforced, as in \citet{patro2020fairrec} who developed an envy-free algorithm assuming the true preferences are known. For completeness, we describe in App.\ref{sec:compat} a second reason why equity of exposure constraints create envy, and an edge case where they do not.

\subsection{Probabilistic relaxation of envy-freeness}\label{sec:relax}

Envy-freeness, as defined in Sec. \ref{sec:envy-def}, (a) compares the recommendations of a target user to those of \emph{all} other users, and (b) these comparisons must be made for \emph{all} users. In practice, as we show, this means that the sample complexity of the audit increases with the number of users, and that all users must be part of the audit. 

In practice, it is likely sufficient to relax both conditions on all users to give a guarantee for most recommendation policies and most users. Given two small probabilities $\lambda$ and $\gamma$,
%
the relaxed criterion we propose requires that for at least $1-\lambda$ fraction of users, the utility of users for their own policy is in the top-$\gamma\%$ of their utilities for anyone else's policy. The formal definition is given below. The fundamental observation, which we prove in Th. \ref{th:audit-proba} in Sec. \ref{sec:full-audit}, is that 
the sample complexity of the audit and the number of users impacted by the audit are now \emph{independent on the total number of users}. 
We believe that these relaxed criteria are thus likely to encourage the deployment of envy-free audits in practice.



\begin{definition}\label{dfn:proba-envyfree}
Let $\epsilon,\gamma,\lambda \!\geq\!0$. Let $U_\nusers$ denote the discrete uniform distribution over $\intint{\nusers}.$
A user $\user$ is \emph{$(\epsilon,\gamma)$-envious} if:     
\begin{align*}
    \mathbb{P}_{\otheruser\sim U_\nusers}\big[u^m(\pi^m) + \epsilon < u^m(\pi^n)\big] > \gamma.
\end{align*}
A recommender system is \emph{$(\efparams)$-envy-free} if at least a $(1-\lambda)$ fraction of its users are not \emph{$(\epsilon,\gamma)$-envious}.
\end{definition}




\section{Certifying envy-freeness}\label{sec:algo}

\newcommand{\envyfree}{{\tt envy-free}\xspace}
\newcommand{\notenvyfree}{{\tt not-envy-free}\xspace}
\newcommand{\envy}{{\tt envy}\xspace}
\newcommand{\noenvy}{{\tt no-envy}\xspace}

\subsection{Auditing scenario}
The envy-freeness auditor must answer the counterfactual question: ``had user $\user$ been given the recommendations of user $\otheruser$, would $\user$ get higher utility?''. The main challenge is that the answer requires to access to user preferences, which are only partially observed since users only interact with recommended items. There is thus a need for an active exploration process that recommends items which would not have been recommended otherwise. 

To make such an exploration possible, we consider the following auditing scenario: at each time step $t$, the auditor chooses to either (a) give the user a ``normal'' recommendation, or (b) explore user preferences by giving the user a recommendation from another user (see Fig. \ref{fig:audit-scenar})
. This scenario has the advantage of lightweight infrastructure requirements, since the auditor only needs to query another user's policy, rather than  implementing a full recommender system within the operational constraints of the platform. Moreover, this interface is sufficient to estimate envy because envy is defined based on the performance of other user's policies.
This type of internal audit \cite{raji2020closing} requires more access than usual external audits that focus on recommendation parity, but this is necessary to explore user preferences.

We note that the auditor must make sure that this approach follows the relevant ethical standard for randomized experiments in the context of the audited system. The auditor must also check that using other users' recommendation policies does not pose privacy problems. From now on, we assume these issues have been resolved.

\begin{figure}
    \centering
    \includegraphics[width=0.6\linewidth]{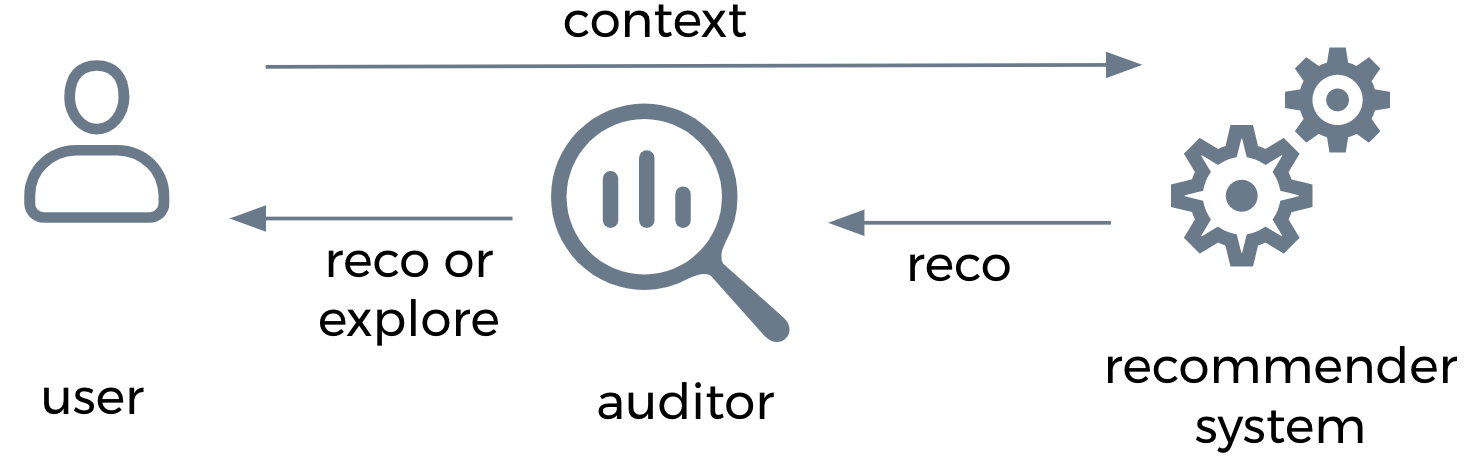}
    \caption{Auditing scenario: the auditor either shows the user their recommendation in the current rec. system, or explores by showing the recommendation given to another user.  \label{fig:audit-scenar}}
\end{figure}

\subsection{The equivalent bandit problem}
We now cast the audit for envy-freeness as a new variant of pure exploration bandit problems. We first focus on auditing envy for a single target user and define the corresponding objectives, then we present our auditing algorithm. Finally we specify how to use it for the certification of either the exact or probabilistic envy-freeness criteria.

For a target user $\user$, the auditor must estimate whether $u^m(\pi^m) + \epsilon \geq u^m(\pi^n)$, for $n$ in a subset $\{n_1,...,n_K\}$ of $K$ users from $\intint{\nusers}$ (where $K$ is specified later, depending on the criterion). As we first focus on auditing envy for one target user $\user$, we drop all superscripts $m$ to simplify notation. We identify $\{n_1,...,n_K\}$ with $\intint{K}$ and rename $\big(u^m(\pi^{n_1}),..., u^m(\pi^{n_K})\big)$ as $(\mu_1,...,\mu_\narms).$ To estimate $\util_k$, we obtain samples by making recommendations using the policy $\pi^k$ and observing the reward. The remaining challenge is to choose which user $\grarm$ to sample at each time step while not deteriorating the experience of the target user too much. Index $0$ represents the target user: we use $\util_0$ for the utility of the user for their policy (i.e., $u^m(\pi^m)$). Because the audit is a special form of bandit problem, following the bandit literature, an index of a  user is called an \emph{arm}, and arm $0$ is the \emph{baseline}.

\begin{algorithm}[t]
\caption{\banditalg algorithm. $\xi_t$ (line 4) evaluates the conservative exploration constraint and is defined in \eqref{eq:xi}. Values for $\rad_k(t)$ and confidence bounds $\lcb_k$ and $\ucb_k$ are given in Lemma \ref{lem:chernoff2}.\label{alg:bandit-sync}}
\DontPrintSemicolon
 \SetKwInOut{Input}{input}\SetKwInOut{Output}{output}
 
 \Input{Confidence parameter $\delta$, conservative exploration parameter $\alpha$, envy parameter $\epsilon$}
 \Output{\envy or $\epsilon-$\noenvy}
 $S_0 \gets \intint{\narms}$    \tcp*[l]{all\! arms\! except\! $0$}

 \For{t=1, \ldots}{
  Choose $\ell_t$ from $S_{t-1}$ \tcp*[l]{e.g., \!unif. \!\!\!\!\!sample}
  \lIf{$\rad_0(t{-}1)\!>\!\!\min\limits_{\grarm \in S_{t{-}1}}\!\! \rad_\grarm(t{-}1)$ {\bf or} $\xi_t\!<\!0$
  }{
    \!$k_t \gets0$
    }
  \lElse{
   $k_t \gets \ell_t$
  }
   Observe context $x_t\sim q$, show $\act_t \sim \pi^{k_t}(.|\ctx_t)$ and observe $\rew_t\sim \nu(\act_t| \ctx_t)$ \tcp*[l]{i.e., pull arm $k_t$ and update conf. \!\!\!\!\!intervals with Lem.\!\!\!\!\! \ref{lem:chernoff2}}
   
  $S_t \gets \big\{k \in S_{t-1} : \ucb_k(t) > \lcb_0(t) + \epsilon \big\}$

\lIf{$\exists k\in S_t, \lcb_{k}(t)> \ucb_0(t)$}{\Return ~\envy}
\lIf{$S_t = \emptyset$}{
 \Return $\epsilon$-\noenvy}
}
\end{algorithm}

\paragraph{Objectives and evaluation metrics}
We present our algorithm \bandialgcomplete in the next subsection. Given $\epsilon >0$ and $\alpha\geq 0$, \banditalg returns either \envy or $\epsilon$-\noenvy and has two objectives:
\begin{enumerate}[leftmargin=*]
    \item Correctness: if \banditalg returns \envy, then $\exists \grarm, \util_\grarm > \util_0$. If \banditalg returns $\epsilon$-\noenvy then 
    $\max\limits_{ \grarm\in\intint{\narms}}\util_\grarm\leq \util_0+\epsilon$.
    \item Recommendation performance: during the audit, \banditalg must maintain a fraction $1{-}\alpha$ of the baseline performance. Denoting by $\grarm_s\in\{0, \ldots, \narms\}$ the arm (group index) chosen 
    at round $s$, 
    this requirement is formalized as a conservative exploration constraint \cite{wu2016conservative}:\begin{align}\label{eq:c-constraint}
    \forall t, \frac{1}{t}\sum_{s=1}^t \util_{\grarm_s} \geq (1 - \alpha) \util_0 \,.
\end{align} 
\end{enumerate}

We focus on the \emph{fixed confidence} setting, where given a confidence parameter $\delta\in(0,1)$ the algorithm provably satisfies both objectives with probability $1-\delta$. In addition, there are two 
criteria to assess an online auditing algorithm:
\begin{enumerate}[leftmargin=*]
\item Duration of the audit: the number of time-steps before the algorithm stops.
\item Cost of the audit: the cumulative loss of rewards incurred. Denoting the duration by $\tau$, the cost is $ \tau\util_0 - \sum_{s=1}^\tau \util_{k_s}$.

It is possible that the cost is negative when there is envy. In that case, the audit increased recommendation performance by finding better recommendations for the group.
\end{enumerate}
We note the asymmetry in the return statements of the algorithm: \envy does not depend on $\epsilon$. This asymmetry is necessary to obtain finite worst-case bounds on the duration and the cost of audit, as we see in Theorem \ref{thm:all}. 

Our setting had not yet been addressed by the pure exploration bandit literature, which mainly studies the identification of ($\epsilon$-)optimal arms \citep{audibert2010best}. Auditing for envy-freeness requires proper strategies in order to efficiently estimate the arm performances compared to the unknown baseline. Additionally, by making the cost of the audit a primary evaluation criterion, we also bring the principle of conservative exploration to the pure exploration setting, while it had only been studied in regret minimization \citep{wu2016conservative}. In our setting, conservative constraints involve nontrivial trade-offs between the duration and cost of the audit. We now present the algorithm, and then the theoretical guarantees for the objectives and evaluation measures.


\subsection{The \banditalg algorithm}

\banditalg is described in Alg.~\ref{alg:bandit-sync}.
It maintains confidence intervals on arm performances $(\util_\grarm)_{\grarm=0}^\narms$. Given the confidence parameter $\delta$, the lower and upper  bounds on $\util_\grarm$ at time step $t$, denoted by $\lcb_\grarm(t)$ and $\ucb_\grarm(t)$,
are chosen so that with probability at least $1-\delta$, we have
$\forall k,t, \util_\grarm\in[\lcb_k(t), \ucb_k(t)]$.  In the algorithm, $\beta_k(t) = (\ucb_\grarm(t) - \lcb_k(t))/2$. As \citet{jamieson2014lil}, we use anytime bounds inspired by the law of the iterated logarithm. These are given in Lem.~\ref{lem:chernoff2} in App. \ref{sec:proofs}.


\banditalg maintains an active set $S_t$ of all arms in $\intint{\narms}$ (i.e., excluding the baseline) whose performance are not confidently less than $\util_0+\epsilon$. It is initialized to $S_0 = \intint{\narms}$ (line 1). At each round $t$, the algorithm selects an arm $\ell_t\in S_t$ (line 3). Then, depending on the state of the conservative exploration constraint (described later), the algorithm pulls $k_t$, which is either $\ell_t$ or the baseline (lines 4-6). After observing the reward $\rew_t$, the confidence interval of $\util_{\ell_t}$ is updated, and all active arms that are confidently worse than the baseline plus $\epsilon$ are de-activated
(line 7). The algorithm returns \envy if an arm $\grarm$ is confidently better than the baseline (line 8)
, returns $\epsilon$-\noenvy if there are no more active arms, (line 9) or continues if neither of these conditions are met.

\paragraph{Conservative exploration}
To deal with the conservative exploration constraint \eqref{eq:c-constraint}, we follow \cite{garcelon2020conservative}.
Denoting $\activepullS_t = \{s\leq t:\grarm_s\neq 0\}$ the time steps at which the baseline was not pulled, we maintain a confidence interval  such that with probability $\geq1-\delta$, we have $\forall t>0, \big\vert\sum_{s\in\activepullS_t}(\util_{\grarm_s} - \rew_s) \big\vert \leq  \Phi(t)$. The formula for $\Phi$ is given in Lem.~\ref{lem:bound-constraint} in App. \ref{sec:proofs}. This confidence interval is used to estimate whether the conservative constraint \eqref{eq:c-constraint} is met at round $t$ as follows. First, let us denote by $N_\grarm(t)$ the number of times arm $\grarm$ has been pulled until $t$, and notice that \eqref{eq:c-constraint} is equivalent to $\sum_{s\in\activepullS_t} \util_{\grarm_s} - ((1-\alpha)t - N_0(t)) \util_0 \geq 0$.  After choosing $\ell_t$ (line 3), we use the lower bound on $\sum_{s \in \activepullS_{t}} \util_{\grarm_s}$ and the upper bound for $\util_0$ to obtain a conservative estimate of \eqref{eq:c-constraint}. Using $\tau=t-1$, this leads to:
\begin{equation}\label{eq:xi}
\resizebox{.9\linewidth}{!}{$\displaystyle
    \xi_t = \sum\limits_{s\in \activepullS_{\tau}} \!\rew_s - \Phi(t) + \lcb_{\ell_t}\!\!(\tau)+ (N_0(\tau) - (   1-\alpha) t) \ucb_0(\tau)\,.
$}
\end{equation}

Then, as long as the confidence intervals hold, pulling $\ell_t$ does not break the constraint~\eqref{eq:c-constraint} if $\xi_t \geq 0$. The algorithm
thus pulls the baseline arm when $\xi_t<0$. To simplify the theoretical analysis, \banditalg also pulls the baseline if it does not have the tightest confidence interval (lines 4-6).

\subsection{Analysis}

The main theoretical result of the paper is the following:
\nico{check the bound for $\delta$ and the duration/cost}
\begin{theorem}\label{thm:all}
Let $\epsilon\in(0,1]$, $\alpha\in(0,1], \delta\in(0,\frac{1}{2})$ and $
    \eta_\grarm = \max(\util_\grarm-\util_0, \util_0 + \epsilon - \util_\grarm) \text{~and~} h_\grarm = \max(1,\frac{1}{\eta_\grarm}).$
Using $\lcb,\ucb$ and $\Phi$ given in Lemmas~\ref{lem:chernoff2} and \ref{lem:bound-constraint} (App.~\ref{sec:proofs}), \banditalg achieves the following guarantees with probability $\geq1-\delta$:
\begin{itemize}[leftmargin=*]
    \item \banditalg is correct and satisfies the conservative constraint on the recommendation performance \eqref{eq:c-constraint}.
    \item The duration is in
    $\displaystyle O\bigg(\sum_{k=1}^K\frac{h_\grarm \log\big(\frac{\narms\log(\nicefrac{\narms h_\grarm}{\delta \eta_\grarm)}}{\delta}\big)}{\min(\alpha\util_0,\eta_k)}\bigg)$.
    \item The cost is in $ O\bigg(\mathlarger{\mathlarger{\sum}}\limits_{k:\util_k<\util_0}\!\!\frac{(\util_0 - \util_k) h_\grarm}{\eta_k}\log\big(\frac{\narms\log(\nicefrac{\narms h_\grarm}{\delta \eta_\grarm)})}{\delta}\big)\bigg)$.
\end{itemize}
\end{theorem}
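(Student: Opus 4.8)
The plan is to fix once and for all a ``good event'' $\mathcal{E}$ on which all the confidence statements of Lemmas~\ref{lem:chernoff2} and~\ref{lem:bound-constraint} hold, i.e. $\util_k\in[\lcb_k(t),\ucb_k(t)]$ for every $k\in\{0,\dots,\narms\}$ and every $t$, and $\bigl|\sum_{s\in\activepullS_t}(\util_{\grarm_s}-\rew_s)\bigr|\leq\Phi(t)$ for every $t$; a union bound gives $\mathbb{P}(\mathcal{E})\geq 1-\delta$, and all four claims of the theorem will be established on $\mathcal{E}$. Throughout I write $\tau$ for the (random) number of rounds and $N_0(t)$ for the number of baseline pulls.

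\textbf{Correctness and conservativeness.} On $\mathcal{E}$, correctness is read off the stopping rules: if line~8 returns \envy for some $k\in S_t$ then $\util_k\geq\lcb_k(t)>\ucb_0(t)\geq\util_0$; if line~9 returns $\epsilon$-\noenvy then each $k$ was dropped at some round $s$ with $\ucb_k(s)\leq\lcb_0(s)+\epsilon$, whence $\util_k\leq\util_0+\epsilon$ — here one uses that $S_t$ is nonincreasing, so a dropped arm stays dropped. For the constraint~\eqref{eq:c-constraint} I would prove by induction on $t$ its equivalent form $\sum_{s\in\activepullS_t}\util_{\grarm_s}\geq\bigl((1-\alpha)t-N_0(t)\bigr)\util_0$. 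When $\grarm_t=0$ the step is immediate from the induction hypothesis and $\alpha\util_0\geq 0$; when $\grarm_t=\ell_t$, the pull happened only because $\xi_t\geq 0$, so substituting the bounds of $\mathcal{E}$ (the confidence bound on $\sum\rew_s$ via $\Phi$, $\lcb_{\ell_t}(t{-}1)\leq\util_{\ell_t}$, $\ucb_0(t{-}1)\geq\util_0$) and monotonicity of $\Phi$ into~\eqref{eq:xi} yields the claim at $t$; the sub-case $(1-\alpha)t-N_0(t{-}1)<0$ is trivial since utilities are nonnegative.

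\textbf{Per-arm pulls and the cost.} The key structural fact, coming from the ``tightest-interval'' clause of line~4, is that arm $k\geq 1$ is pulled at round $t$ only when $\rad_0(t{-}1)\leq\min_{j\in S_{t-1}}\rad_j(t{-}1)\leq\rad_k(t{-}1)$. Hence if $\rad_k(t{-}1)<\eta_k/4$ then also $\rad_0(t{-}1)<\eta_k/4$, so $\rad_k(t{-}1)+\rad_0(t{-}1)<\eta_k/2$, which by the very definition of $\eta_k=\max(\util_k-\util_0,\util_0+\epsilon-\util_k)$ forces, on $\mathcal{E}$, either line~8 to fire or $k$ to leave $S_{t-1}$ at round $t{-}1$ — contradicting that $k$ is pulled at $t$. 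Therefore $k$ is never pulled once $\rad_k$ has dipped below $\eta_k/4$, and inverting the iterated-logarithm form of $\rad_k$ from Lemma~\ref{lem:chernoff2} gives $N_k(\tau)=O\!\bigl(\tfrac{h_k}{\eta_k}\log\tfrac{\narms\log(\narms h_k/(\delta\eta_k))}{\delta}\bigr)$. The cost bound is then immediate, since $\mathrm{cost}=\sum_{s\in\activepullS_\tau}(\util_0-\util_{\grarm_s})\leq\sum_{k:\util_k<\util_0}(\util_0-\util_k)N_k(\tau)$, and we also record $\sum_{k\geq 1}N_k(\tau)=O\!\bigl(\sum_k\tfrac{h_k}{\eta_k}\log(\cdots)\bigr)$, which is one of the three pieces of the duration.

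\textbf{Baseline pulls and the duration.} It remains to bound $N_0(\tau)$, splitting baseline pulls into those forced by the tightest-interval clause and those forced by $\xi_t<0$. For the first kind: while the algorithm runs every active $j$ obeys $\rad_j(t)+\rad_0(t)\geq\eta_j/2$ (else it would have been dropped), so such a pull needs $\rad_0(t{-}1)>\min_j\rad_j(t{-}1)\geq\min_k\eta_k/2-\rad_0(t{-}1)$, i.e. $\rad_0(t{-}1)>\min_k\eta_k/4$; this bounds their number by $O(\eta_{\min}^{-2}\log(\cdots))$, dominated by the previous display. For the conservative pulls, unfolding $\xi_t<0$ with the bounds of $\mathcal{E}$ exactly as in the conservativeness argument gives $\sum_{s\in\activepullS_{t-1}}(\util_0-\util_{\grarm_s})>\util_0(\alpha t-1)-2\Phi(t)$; bounding the left side by the cost bound just obtained and $\Phi(t)$ by its (lower-order) value shows every conservative pull happens at a round $t=O\!\bigl(\alpha^{-1}+\mathrm{cost}/(\alpha\util_0)\bigr)$, hence their number is of that order. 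The final simplification uses $\util_0-\util_k\leq\eta_k$ whenever $\util_k<\util_0$ (then $\eta_k=\util_0+\epsilon-\util_k$), so that $\tfrac{(\util_0-\util_k)h_k}{\eta_k\,\alpha\util_0}=O\!\bigl(\tfrac{h_k}{\min(\alpha\util_0,\eta_k)}\bigr)$; summing the three contributions yields the duration. I expect the conservative-pull count to be the main obstacle: one must keep the $t$-dependent error term $\Phi(t)$ genuinely lower order without circularity, and extract the sharp $\min(\alpha\util_0,\eta_k)$ in the denominator rather than the naive $\alpha^{-1}\eta_k^{-1}$.
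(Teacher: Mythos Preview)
Your plan follows exactly the paper's architecture: a global good event $\mathcal{E}$ with $\mathbb{P}(\mathcal{E})\geq 1-\delta$, correctness and conservativeness read off directly, the per-arm pull bound via ``$4\rad_k\geq\eta_k$ whenever $k$ is pulled'' (this is the paper's Lemma~\ref{lem:rad}), the cost bound as a corollary, and the baseline count split into the ``tightest-interval'' case versus the ``$\xi_t<0$'' case, followed by the $\util_0-\util_k\leq\eta_k$ simplification. Your Case~1 route ($\rad_0(t{-}1)>\eta_{\min}/4$) is a slight variant of the paper's ($N_0(t{-}1)<\max_k N_k(t{-}1)$), but yields the same bound.

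The one place to tighten is the displayed inequality in your Case~2 step. From $\xi_t<0$ you get $(\alpha t-1-|\activepullS_{t-1}|)\,\ucb_0(t{-}1)<2\Phi(t)-\sum_s\util_{\grarm_s}$, but replacing $\ucb_0$ by $\util_0$ via $\ucb_0\geq\util_0$ is only valid when the coefficient $\alpha t-1-|\activepullS_{t-1}|=N_0(t{-}1)-(1-\alpha)t$ is nonnegative; when it is negative the inequality goes the wrong way, and ``$t\leq(1+|\activepullS_{t-1}|)/\alpha$'' alone would only give the naive $\alpha^{-1}\eta_k^{-1}$ denominator you warned against. The paper avoids this by isolating the always-nonnegative term $\alpha N_0(t{-}1)\ucb_0(t{-}1)$ on one side before invoking $\ucb_0\geq\util_0$, and handling the remaining $|\activepullS_{t-1}|\,\ucb_0$ on the other side via $\ucb_0\leq\util_0+2\rad_0\leq\util_0+2\rad_k$ (here the Case-2 property $\rad_0\leq\min_k\rad_k$ is used), which produces exactly an extra $\sum_k N_k\rad_k$ term of the same order as $\Phi(t)$. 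With that correction your argument goes through and matches the paper's bounds.
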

The important problem-dependent quantity $\eta_\grarm$ is the gap between the baseline and other arms $\grarm$. It is asymmetric depending on whether the arm is better than the baseline $(\util_\grarm-\util_0)$ or the converse  ($\util_0 - \util_\grarm + \epsilon$) because the stopping condition for \envy does not depend on $\epsilon$. This leads to a worst case that only depends on $\epsilon$, since $\eta_\grarm = \max(\util_\grarm-\util_0, \util_0 - \util_\grarm + \epsilon) \geq \frac{\epsilon}{2}$, while if the condition was symmetric, we would have possibly unbounded duration when $\util_k = \util_0 + \epsilon$ for some $\grarm\neq 0$. Overall, ignoring log terms, we conclude that when $\alpha\util_0$ is large, the duration is of order $\sum_k \frac{1}{\eta_k^2}$ and the cost is of order $\sum_k \frac{1}{\eta_k}$. This becomes $\sum_k\frac{1}{\alpha\util_0 \eta_k}$ and $\sum_k \frac{1}{\eta_k}$ when $\alpha\util_0$ is small compared to $\eta_k$. This means that the conservative constraint has an impact mostly when it is 
strict. It also means that when either $\alpha\util_0 \ll \eta_k$ or $\eta_k^2 \ll \eta_k$ the cost can be small even when the duration is fairly high. 

\subsection{Full audit} \label{sec:full-audit}
\paragraph{Exact criterion} To audit for envy-freeness on the full system, we apply \banditalg to all $M$ users simultaneously and with $K=M$, meaning that the set of arms corresponds to all the users' policies. By the union bound, using $\delta' = \frac{\delta}{\nusers}$ instead of $\delta$ in \banditalg's confidence intervals, the guarantees of Theorem~\ref{thm:all} hold simultaneously for all users.

For recommender systems with large user databases, the duration of \banditalg thus becomes less manageable as $\nusers$ increases. We show how to use \banditalg to certify the probabilistic criterion with guarantees that do not depend on $\nusers$.

\paragraph{Probabilistic criterion} The \auditalg algorithm for auditing the full recommender system is described in Alg. \ref{alg:full-audit}. \auditalg samples a subset of users and a subset of arms for each sampled user. Then it applies \banditalg to each user simultaneously with their sampled arms. It stops either upon finding an envious user, or when all sampled users are certified with $\epsilon$-no envy. Again there is a necessary asymmetry in the return statements of \auditalg to obtain finite worst-case bounds whether or not the system is envy-free.

The number of target users $\nsampled$ and arms $\narms$ in Alg.~\ref{alg:full-audit} are chosen so that $\epsilon$-envy-freeness \emph{w.r.t.} the sampled users and arms translates into $(\efparams)$-envy-freeness. Combining these random approximation guarantees with Th.~\ref{thm:all}, we get:
 

\begin{theorem} \label{th:audit-proba}
Let $\nsampled = \ceil{\frac{\log(3/\delta)}{\lambda}}$ and $\narms=\ceil{\frac{\log(3\nsampled/\delta)}{\log(1/(1-\gamma))}}$. With probability $1-\delta$, \auditalg is correct, it satisfies the conservative constraint \eqref{eq:c-constraint} for all $\nsampled$ target users, and the bounds on duration and cost from Th.~\ref{thm:all} (using $\frac{\delta}{3\tilde{M}}$ instead of $\delta$) are simultaneously valid. 
\end{theorem}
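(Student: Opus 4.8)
The plan is to reduce the statement to Theorem~\ref{thm:all} applied to the $\nsampled$ independent runs of \banditalg launched inside \auditalg, together with two elementary sampling bounds: one over the $\nsampled$ target users, one over the $\narms$ arms drawn for each target user. Recall that \auditalg draws $\nsampled$ target users i.i.d.\ from $U_\nusers$, draws for each of them $\narms$ arms i.i.d.\ from $U_\nusers$, runs \banditalg on each (target user, sampled arms) pair with confidence $\frac{\delta}{3\nsampled}$, and returns \notenvyfree as soon as some run returns \envy, or $(\efparams)$-\envyfree once all runs have returned $\epsilon$-\noenvy. I would split $\delta$ into three equal parts via three events. First, let $\mathcal E_1$ be the event that every run satisfies all conclusions of Theorem~\ref{thm:all} at confidence $\frac{\delta}{3\nsampled}$ (legitimate since $\delta<\frac12$ gives $\frac{\delta}{3\nsampled}<\frac12$); a union bound over the $\nsampled$ runs gives $\mathbb P[\mathcal E_1]\ge 1-\frac\delta3$, and on $\mathcal E_1$ every run terminates (the gap $\eta_\grarm\ge\frac\epsilon2>0$ keeps the duration bound finite), hence \auditalg terminates. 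Second, let $\mathcal E_2$ be the event that every \emph{sampled} target user $m$ that is $(\epsilon,\gamma)$-envious has among its sampled arms an ``$\epsilon$-witness'', i.e.\ a sampled arm $n$ with $u^m(\pi^m)+\epsilon<u^m(\pi^n)$: by Definition~\ref{dfn:proba-envyfree} a uniformly drawn user is a witness with probability $>\gamma$, so a fixed envious user has no witness among its $\narms$ i.i.d.\ arms with probability $<(1-\gamma)^\narms\le\frac{\delta}{3\nsampled}$ by the choice of $\narms$; conditioning on the set of sampled users and summing over them (the implication being vacuous for non-envious ones) yields $\mathbb P[\mathcal E_2]\ge 1-\frac\delta3$. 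Third, let $\mathcal E_3$ be the event that either at most a $\lambda$ fraction of all users are $(\epsilon,\gamma)$-envious, or at least one sampled target user is $(\epsilon,\gamma)$-envious: if the true fraction exceeds $\lambda$, all $\nsampled$ i.i.d.\ draws avoid the envious users with probability $<(1-\lambda)^{\nsampled}\le e^{-\lambda\nsampled}\le\frac\delta3$ by the choice of $\nsampled$, and otherwise $\mathcal E_3$ is automatic, so $\mathbb P[\mathcal E_3]\ge 1-\frac\delta3$. A union bound gives $\mathbb P[\mathcal E_1\cap\mathcal E_2\cap\mathcal E_3]\ge 1-\delta$.

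The remaining task is to verify that, on $\mathcal E_1\cap\mathcal E_2\cap\mathcal E_3$, all claims hold deterministically. The conservative constraint~\eqref{eq:c-constraint} for each of the $\nsampled$ target users and the duration/cost bounds (with $\frac{\delta}{3\nsampled}$ in place of $\delta$) are exactly $\mathcal E_1$, run by run. For correctness: if \auditalg returns \notenvyfree then some run returned \envy, so by $\mathcal E_1$ there are a sampled target user $m$ and a sampled arm $n$ with $u^m(\pi^n)>u^m(\pi^m)$, i.e.\ a user strictly envies another, so the system is not envy-free and the output is correct. If \auditalg returns $(\efparams)$-\envyfree, I argue by contradiction: suppose more than a $\lambda$ fraction of users are $(\epsilon,\gamma)$-envious; by $\mathcal E_3$ some sampled target user $m$ is $(\epsilon,\gamma)$-envious; by $\mathcal E_2$ the arms of $m$ contain an $\epsilon$-witness $n$, which as an arm $\grarm$ of $m$'s bandit instance has $\util_\grarm=u^m(\pi^n)>\util_0+\epsilon$, hence $\max_\grarm\util_\grarm>\util_0+\epsilon$; on $\mathcal E_1$ the run for $m$ never issues a false certificate, so it cannot return $\epsilon$-\noenvy, and since it terminates it returns \envy, which would have forced \auditalg to return \notenvyfree --- contradiction. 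Therefore the system is $(\efparams)$-envy-free and the output is correct, completing the proof on an event of probability at least $1-\delta$.

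The hard part will be getting the logical direction of \banditalg's correctness right in the ``grey zone'': one never argues that \banditalg actively \emph{detects} envy, only that on $\mathcal E_1$ it cannot \emph{certify} $\epsilon$-no-envy while a genuine $\epsilon$-witness sits among its arms, so that together with guaranteed termination the \envy output is forced --- this is precisely where the asymmetry of the return statements matters. Everything else is bookkeeping: keeping the three sources of randomness separate (target-user draws, arm draws, and \banditalg's internal and reward randomness), invoking independence only for the arm draws conditionally on the target-user set, and partitioning $\delta$ into thirds so that the final union bound closes.
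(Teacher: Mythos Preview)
Your proof is correct and follows essentially the same three-way split of $\delta$ as the paper: $\mathcal E_1$ matches the paper's invocation of Theorem~\ref{thm:all} with a union bound over the $\nsampled$ runs, while $\mathcal E_2$ and $\mathcal E_3$ are the contrapositives of the paper's Step~1 (quantile/random-subset argument over the $\narms$ arms) and Step~2 (binomial-tail argument over the $\nsampled$ users). Your contrapositive framing is arguably cleaner, and you are more explicit than the paper about why termination plus correctness forces the \envy output when an $\epsilon$-witness is present among the sampled arms.
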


Importantly, in contrast to naively using \banditalg to compare all users against all, the audit for the probabilistic relaxation of envy-freeness only requires to query a constant number of users and policies that \emph{does not depend on the total number of users $\nusers$}. Therefore, the bounds on duration and cost are also independent of $\nusers$, which is a drastic improvement.

\begin{algorithm}[t]
\caption{\auditalg algorithm. The algorithm either outputs a probabilistic certificate of $(\efparams)$-envy-freeness, or evidence of envy. \label{alg:full-audit}}
\DontPrintSemicolon
 \SetKwInOut{Input}{input}\SetKwInOut{Output}{output}
 
 \Input{Confidence parameter $\delta$, conservative exploration parameter $\alpha$, envy parameters $(\efparams)$}
 \Output{$(\efparams)$-\envyfree or \notenvyfree}
 Draw a sample $\tilde{S}$ of $\nsampled=\ceil{\frac{\log(3/\delta)}{\lambda}}$ users from $\intint{\nusers}$\;
 \For{each user $m \in \tilde{S}$ in parallel}{
    Sample $\narms=\ceil{\frac{\log(3\nsampled/\delta)}{\log(1/(1-\gamma))}}$ arms from $\intint{\nusers}\setminus{\{m\}}$\;
    Run OCEF\big($\frac{\delta}{3\nsampled},\alpha,\epsilon$\big) for user $m$ with the $\narms$ arms\;
    \lIf{\banditalg outputs \envy}{\Return ~\notenvyfree}
 }
\Return $(\efparams)$-\envyfree
\end{algorithm}

\section{Experiments} \label{sec:exps}


We present experiments describing sources of envy (Sec. \ref{sec:exp-envy}) and evaluating the auditing algorithm \banditalg on two recommendation tasks (Sec. \ref{sec:exp-ocef}). 

We create a music recommendation task based on the Last.fm dataset from \citet{Cantador:RecSys2011}, which contains the music listening histories of $1.9$k users. We select the $2500$ items most listened to, and simulate ground truth user preferences by filling in missing entries with a popular matrix completion algorithm for implicit feedback data\footnote{Using the Python library Implicit: \url{https://github.com/benfred/implicit} (MIT License).}. We also address movie recommendation with the MovieLens-1M dataset \cite{harper2015movielens}, which contains ratings of movies by real users, and from which we extract the top $2000$ users and $2500$ items with the most ratings. We binarize ratings by setting those $<3$ to zero, and as for Last.fm we complete the matrix to generate ground truth preferences.



For both recommendation tasks, the simulated recommender system estimates relevance scores using low-rank matrix completion \cite{bell1995information} on a training sample of $70\%$ of the ground truth preferences, where the rated / played items are sampled uniformly at random. Recommendations are given by a fixed-temperature  
\emph{softmax} policy over the predicted scores. We generate binary rewards using a Bernoulli distribution with expectation given by our ground truth preferences.

\subsection{Sources of envy}\label{sec:exp-envy}
We consider two measures of the degree of envy. Denoting $\degenvy^m = \max(\max\limits_{n\in \intint{\nusers}}\iutil^m(\pi^n)- \iutil^m(\pi^m), 0)$, these are:
\begin{itemize}[leftmargin=*]
    \item the average envy experienced by users: $\frac{1}{\nusers}\sum\limits_{m\in \intint{\nusers}}\degenvy^m $,
    \item the proportion of $\epsilon$-envious users: $\frac{1}{\nusers}\sum\limits_{m\in \intint{\nusers}}\indic{\degenvy^m > \epsilon}$.
\end{itemize}
\subsubsection{Envy from model mispecification }
\begin{figure}
    \centering
    \includegraphics[width=\linewidth]{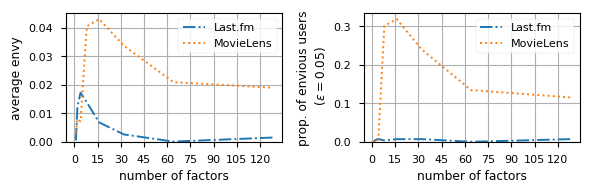}
    \caption{Envy from model mispecification on MovieLens and Lastfm: envy is high when the latent factor model is mispecified, but it decreases as the number of factors increases.\label{fig:mispeargmaxageocc}}
\end{figure}

We demonstrate that envy arises from a standard recommendation model when the modeling assumptions are too strong.  We vary the number of latent factors of the matrix completion model and evaluate a softmax policy with inverse temperature set to $5$. In Fig. \ref{fig:mispeargmaxageocc}, with one latent factor we observe no envy. This is because all users receive the same recommendations since matrix completion is then equivalent to a popularity-based recommender system. With enough latent
factors, preferences are properly captured by the model and
the degree of envy decreases. For intermediate number of latent factors, envy is visible. 

\subsubsection{Envy from equal user utility} We show that in contrast to envy-freeness, enforcing equal user utility (EUU) degrades user satisfaction and creates envy between users. We compute optimal EUU policies and unconstrained optimal policies (OPT) on the ground truth preferences of Last.fm and MovieLens. Our results in Table \ref{tab:euu-bad} confirm the pitfalls of EUU, while illustrating that OPT policies are always envy-free. 

We discuss more sources of envy and provide the details of these computations in App. \ref{app:sources}.

\begin{table}[]
\footnotesize
\centering
\begin{tabular}{l|l|l|l|l|}
\cline{2-5}
\multirow{2}{*}{}                   & \multicolumn{2}{l|}{Last.fm} & \multicolumn{2}{l|}{MovieLens} \\ \cline{2-5} 
                                    & EUU            & OPT         & EUU             & OPT          \\ \hline
\multicolumn{1}{|l|}{Total utility} & 1552          & \textbf{1726}         & 1671           & \textbf{1761}          \\ \hline
\multicolumn{1}{|l|}{Average envy}          & 0.10            & \textbf{0}           & 0.04            & \textbf{0}            \\ \hline
\multicolumn{1}{|l|}{Prop. 0.05-envious}          & 0.61            & \textbf{0}           & 0.13             & \textbf{0}            \\ \hline
\end{tabular}
\caption{Optimal policies with equal user utility penalty (EUU) vs. Unconstrained optimal policies (OPT), computed on ground truth preferences: EUU deteriorates total utility and creates envy between users.}
\label{tab:euu-bad}
\end{table}


\subsection{Evaluation of the auditing algorithm}\label{sec:exp-ocef}

Our goal is now to answer for \banditalg: in practice, what is the interplay between the required sample size per user, the cost of exploration and the conservative exploration parameter? 

\subsubsection{Bandit experiments}

\begin{figure}[t]
    \centering
    \includegraphics[width=0.95\linewidth,height=3.6cm]{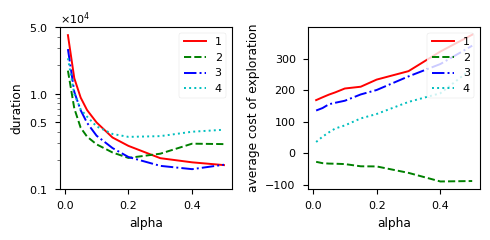}
    \caption{Effect of the conservative exploration parameter $\alpha$ on the duration and cost of auditing on Bandit experiments.\label{fig:banditexp}}
\end{figure}

\begin{figure}[t]
    \centering
    \includegraphics[width=0.95\linewidth,height=3.6cm]{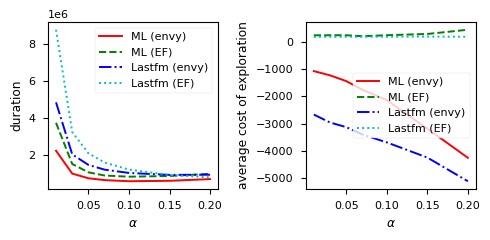}
    \caption{Scaling w.r.t. $\alpha$ on MovieLens (ML) and Last.fm, for recommender systems that are either envy-free (EF) or with envy. There are $41$ target users and $75$ arms.}
    \label{fig:ml-lasfm-ocef}
\end{figure}

We first study the trade-off between duration and cost of the audit on 4 bandit problems with Bernoulli rewards and 10 arms. 
In Problem 1, the baseline is the best arm and all other arms are equally bad. In Prob. 2, arm $1$ is best and all other arms are as bad as the baseline. In Prob.3 the baseline is best and the means of arms from best to worst decrease rapidly. Prob. 4 uses the same means as Prob. 3,  but the means of the baseline and arm $1$ are swapped, making the baseline second-to-best. We set $\delta=\epsilon=0.05$ and report results averaged over $100$ trials. The details of the bandit configurations are given in Appendix \ref{sec:banditexps}.

Figure~\ref{fig:banditexp} plots the duration and the cost of exploration \eqref{eq:costexp} as a function of the conservative constraint parameter $\alpha$ (smaller $\alpha$ means more conservative). 
The curves show that for Problems 2, 3, and 4, duration is minimal for a non-trivial $\alpha$.
This is because when $\alpha$ is large, all arms are pulled as much as the baseline, so their confidence intervals are similar. When $\alpha$ decreases, the baseline is pulled more, which reduces the length of the relevant confidence intervals $\rad_0(t) + \rad_k(t)$ for \emph{all} arms $k$. This, in turn, shortens the audit because non-baseline arms are more rapidly discarded or declared better. When $\alpha$ becomes too small, however, the additional pulls of the baseline have no effect on $\rad_0(t) + \rad_k(t)$ because it is dominated by $\rad_k(t)$, so the duration only increases. This subtle phenomenon is not captured by our analysis (Th. \ref{thm:all}), because the ratios $\rad_0(t)/\rad_k(t)$ are difficult to track formally.

The sign of the cost of exploration depends on whether there is envy. In Prob. 2 where the baseline has the worst performance, exploration is beneficial to the user and so the cost is negative. On all other instances however, the cost is positive. The cost of exploration is closest to 0 when $\alpha$ becomes small because then  $\rad_0(t) + \rad_k(t)$ is the smallest possible for a given number of pulls of $k$. For instance, in Prob. 4, the cost is close to $0$ when $\alpha$ is very small and increases with $\alpha$. It is the case where the baseline is not the best arm but is close to it, and there are many bad arms. When the algorithm is very conservative, bad arms are discarded rapidly thanks to the good estimation of the baseline performance. In this ``low-cost'' regime however, the audit is significantly longer.

Appendix \ref{sec:banditexps} contains additional results when varying the number of arms and the confidence parameter $\delta$.

\subsubsection{MovieLens and Last.fm experiments}
We now evaluate the certification of the (absence of) envy of recommendation policies on MovieLens (ML) and Last.fm. We consider two recommendation policies which are softmax functions over predicted relevance scores with inverse temperature set to either $5$ or $10$. These scores were obtained by matrix completion with $48$ latent factors. On both datasets, with inverse temperature equal to $5$, the softmax recommender system is envy-free, whereas there is envy when it is set to $10$. We use \auditalg with \banditalg to certify the probabilistic criterion. The envy parameters are set to $\epsilon=\delta=0.05$ and $\lambda=\gamma=0.1$, therefore we have $\nsampled=41$ target users and $\narms=75$ arms, independently on the number of users in each dataset.

The results of applying OCEF on each dataset (ML or Last.fm) with each policy (envy-free or with envy) are shown in Fig.~\ref{fig:ml-lasfm-ocef}. For the $(\efparams)$-envy-free policies, results are averaged over $20$ trials and over all the non-$(\epsilon,\gamma)$-envious users, whereas when there is envy, results are averaged over the target users who are $\epsilon$-envious. We observe clear tendencies similar to those of the previous section, although the exact sweet spots in terms of $\alpha$ depends on the specific configuration. In particular, on envy-free configurations, the cost of the audit is positive and grows when relaxing the conservative constraint, while it is negative and decreasing with $\alpha$ when there is envy.  More details are provided in App. \ref{sec:movielensexps}.


\section{Conclusion}
We proposed the audit of recommender systems for user-side fairness with the criterion of envy-freeness. The auditing problem requires an explicit exploration of user preferences, which leads to a formulation as a bandit problem with conservative constraints. We presented an algorithm for this problem and analyzed its performance experimentally. 

\clearpage

\section*{Acknowledgments}
We would like to thank Jérôme Lang, Levent Sagun and the anonymous reviewers for their constructive comments on earlier versions of this paper.
\bibliography{references}

\clearpage
\appendix
\section{(In-)Compatibility of envy-freeness}\label{sec:compat}
\subsection{Envy-freeness vs. optimality certificates}
We showed in Section \ref{sec:compatibility} that envy-freeness is compatible with optimal predictions. To understand the differences between a certificate of envy-freeness and a certificate of optimality, let us denote by $\Pi^{*} = \{\pi :\exists \iutil \text{ satisfying \eqref{eq:userutil} }, \pi \in\argmax_{\pi'} \iutil(\pi')\}$ the set of potentially optimal policies. If the set of users policies approximately covers the set of potentially optimal policies $\Pi^*$, then an envy-free system is also optimal. Formally, let $\dpol(\pi, \pi')$  such that $| \iutil(\pi) - \iutil(\pi')| \leq \dpol(\pi, \pi')$. It is easy to see that if $\max\limits_{\pi \in \Pi^*} \min\limits_{\user \in \nusers} \dpol(\pi, \pi^\user) \leq \tilde{\epsilon}$, then $\epsilon$-envy-freeness implies $\epsilon+\tilde{\epsilon}$-optimality.

In practice, the space of optimal policies is much larger than the number of users (for instance, there are $\card{\actS}^{\card{\ctxS}}$ optimal policies in our setting), so that auditing for envy is tractable in cases where auditing for optimality is not.

\subsection{Envy-freeness vs. equity of exposure} We remind the definition of optimal policies with equity of exposure constraints from Section \ref{sec:compatibility}:
\begin{align}
\text{\emph{(equity)}}   && \piee{\user}(.|\ctx) = \argmax_{\substack{p:\actS\to[0,1]\\\sum_a p(a) = 1}}  \sum_{a\in\actS} p(a)\exprew^\user(a |\ctx)\nonumber \\ 
    &&  \text{u.c. } \forall s \in \intint{S}, \sum_{a\in\actSsmall_s} p(a) = \frac{\sum\limits_{a\in\actSsmall_s}  \exprew^\user(a |\ctx)}{\sum\limits_{a\in\actS}  \exprew^\user(a |\ctx)}\nonumber
\end{align}

The constraints should be ignored when $\sum\limits_{a\in\actS}  \exprew^\user(a |\ctx) = 0.$

Following Proposition \ref{prop:envyEE} from Section \ref{sec:compatibility}, we describe here a second source of envy when using optimal policies with equity of exposure constraints. By the linearity of the optimization problem for $\piee{\user}$, the policy assigns to the best item in a category the exposure of the entire category. It implies that categories with high average relevance have more exposure than categories with few but highly relevant items. Table~\ref{tab:FEequitycounterexample} gives an example with two users and two categories of items where both users envy each other with the optimal recommendations under equity of exposure constraints.

\begin{table}[t]
    \centering
    \small
    \begin{tabular}{cc|cccc||cc}
    \toprule
        && \multicolumn{2}{c}{item cat. 1} & \multicolumn{2}{c||}{item cat. 2} &\multicolumn{2}{c}{utilities} \\
        \midrule
        (item idx) && 1 & 2 & 3 & 4 & $\iutil^1$ & $\iutil^2$ \\ \midrule
        \multirow{2}{*}{(rewards)}&$\exprew^1$ &  1 & 0 & 0.8 & 0.7& &\\
        &$\exprew^2$ & 0.8 & 0.7 & 1 & 0& & \\ \midrule
        \multirow{2}{*}{(policies)}&$\piee{1}$ & 0.4 & 0 & $0.6$ & 0 & {\bf 0.88} & {\bf 0.92}\\
        &$\piee{2}$ & $0.6$ & 0 & $0.4$ & 0 & {\bf 0.92} & {\bf 0.88}\\\bottomrule
    \end{tabular}
    \caption{Example where the optimal recommendations under item-side equity of exposure constraints are not user-side fair because both users envy each other. There are 4 items, 2 item categories and 2 users. User $1$ envies user 2 since $\iutil^1(\piee{2}) > \iutil^1(\piee{1})$. Also, $\iutil^2(\piee{1}) > \iutil^2(\piee{2})$.}
    \label{tab:FEequitycounterexample}
\end{table}

In some degenerate cases though, equity of exposure policies are envy-free.

\begin{lemma}
If for all contexts $\ctx \in \ctxS$, each user $\user \in \intint{\nusers}$ only likes a single item category $\actS_{s_m}$, i.e. $\forall a \in \actS \setminus \actS_{s_\user}, \rho^\user(\act|\ctx)= 0$, then the policies $(\piee{\user})_{\user=1}^\nusers$ are envy-free.
\end{lemma}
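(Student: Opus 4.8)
The goal is to show that when every user likes items from only one category, the equity-of-exposure constraint becomes vacuous in the relevant sense, so that $\piee{\user}$ coincides with the unconstrained optimum $\piu{\user}$ for user $\user$, and then invoke the already-established fact that optimal recommendations are envy-free. First I would fix a user $\user$ and a context $\ctx$, and let $s_\user$ denote the unique category with $\rho^\user(a|\ctx) \neq 0$ possible, i.e. $\exprew^\user(a|\ctx) = 0$ for all $a \notin \actSsmall_{s_\user}$. I would then compute the equity constraint for this user: for the category $s_\user$ the right-hand side is $\frac{\sum_{a\in\actSsmall_{s_\user}}\exprew^\user(a|\ctx)}{\sum_{a\in\actS}\exprew^\user(a|\ctx)} = 1$ (assuming the denominator is nonzero; the degenerate all-zero case is handled separately since the constraint is dropped there), and for every other category $s \neq s_\user$ the right-hand side is $0$. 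So the constraint forces $\sum_{a\in\actSsmall_{s_\user}} p(a) = 1$ and $p(a) = 0$ for $a$ outside $\actSsmall_{s_\user}$.

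Next I would argue that this constraint does not actually restrict the attainable utility. The unconstrained optimum for user $\user$ puts all probability mass on $\argmax_a \exprew^\user(a|\ctx)$, and since $\exprew^\user$ is supported on $\actSsmall_{s_\user}$, that argmax lies inside $\actSsmall_{s_\user}$; hence there is an unconstrained-optimal $p$ that already satisfies the equity constraint. Therefore $\piee{\user}(\cdot|\ctx) \in \argmax_{\pi} \iutil^\user(\pi)$ restricted to this context, and aggregating over contexts (using the separability of $\iutil^\user(\pi)$ over contexts in \eqref{eq:userutil}), $\piee{\user}$ is an unconstrained-optimal policy for $\user$, i.e. $\iutil^\user(\piee{\user}) = \max_\pi \iutil^\user(\pi)$.

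Finally, given any other user $\otheruser$, the policy $\piee{\otheruser}$ is just some policy, so by optimality $\iutil^\user(\piee{\user}) = \max_\pi \iutil^\user(\pi) \geq \iutil^\user(\piee{\otheruser})$, which is exactly $0$-envy-freeness (hence $\epsilon$-envy-freeness for all $\epsilon \geq 0$). The only mild subtlety, and the step I would be most careful about, is the bookkeeping for contexts $\ctx$ where $\sum_{a\in\actS}\exprew^\user(a|\ctx) = 0$: there the equity constraint is dropped by convention, so $\piee{\user}(\cdot|\ctx)$ is again unconstrained-optimal (trivially, since every item has reward $0$ for that user in that context), and the argument goes through unchanged. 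No genuine obstacle arises; the result is essentially an observation that the one-category hypothesis collapses the equity constraint onto a set that always contains an unconstrained optimum.
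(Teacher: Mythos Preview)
Your proposal is correct and follows essentially the same approach as the paper: both show that under the single-category hypothesis the equity-of-exposure constraint forces all mass into $\actSsmall_{s_\user}$, where the unconstrained optimum already lives, so $\piee{\user}$ attains $\max_\pi \iutil^\user(\pi)$ and envy-freeness follows from optimality. The only cosmetic difference is that the paper first writes down an explicit analytical form for $\piee{\user}$ and its utility and then specializes, whereas you argue directly about the feasible set; the substance is identical.
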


\begin{proof}
We set contexts $\ctx$ aside to simplify notation, but the generalization is straightforward.

We actually prove a stronger result than the lemma: if each user $\user$ only likes a single item, then $(\piee{\user})_{\user=1}^\nusers = (\pi^{\user,*})_{\user=1}^\nusers$, where $\pi^{\user,*}$ is the optimal unconstrained policy for $\user$.

Let  $\act_s^\user = \argmax_{\act \in \actS_s} \rho^\user(a)$ be the favorite item in category $\actS_s$ for user $\user$, then the optimal equity of exposure constrained policies has the following analytical expression: 
$$\forall s \in S, \forall \act \in \actS_s, \quad \piee{\user}(\act) = \indic{a = a^\user_s} \frac{\sum\limits_{a\in\actSsmall_s}  \exprew^\user(a')}{\sum\limits_{a'\in\actS}  \exprew^\user(a' )},$$
and we thus have:
\begin{align}\label{eq:ee-util}
    u^\user(\piee{\user}) = \sum_{s \in \intint{S}} \rho^\user(a_s^\user)\frac{\sum\limits_{a\in\actSsmall_s}  \exprew^\user(a)}{\sum\limits_{a\in\actS}  \exprew^\user(a)}.
\end{align}

If each user $\user \in \intint{\nusers}$ only likes a single item category $s_\user \in \intint{S}$, i.e.  $\forall a \in \actS \setminus \actS_{s_\user}, \rho^\user(\act)= 0$, then $\frac{\sum\limits_{a\in\actSsmall_s}  \exprew^\user(a)}{\sum\limits_{a\in\actS}  \exprew^\user(a)} = \indic{s = s_\user}.$

Then $u^\user(\piee{\user}) = \rho^\user(a_{s_\user}^\user) = \max_{a \in \actS} \rho^\user(a).$

Then $\piee{\user}$ is the optimal unconstrained policy for user $\user$, meaning the whole system is envy-free (cf. Sec \ref{sec:envy-def}).

From Eq. \ref{eq:ee-util}, we actually note that $(\piee{\user})_{\user=1}^\nusers = (\pi^{\user,*})_{\user=1}^\nusers$ if and only if each user $\user$ equally values their favorite items in each category they like, i.e. $\forall \user, \, \exists \kappa > 0, \forall s \in S, \rho^\user(a^\user_s) > 0 \Rightarrow \rho^\user(a^\user_s) = \kappa.$

\end{proof}

\section{Extension to group envy-freeness} \label{app:homogroups}

We briefly discuss an extension of envy-free recommendation to groups, since most of the literature on fair machine learning focuses on systematic differences between groups. Certifying envy-freeness at the level of groups rather than individuals also relaxes the criterion because it requires less exploration. 
Let us assume we are given a partition $G$ of the users into disjoint groups. For $g,g'\in G,$ we define the group utility of $g$ with respect to $g'$ as:
\begin{equation}\label{eq:defgrouputil}
    U(g,g') = \frac{1}{\card{g}}\sum_{\user \in g} \iutil^\user\bigg(\frac{1}{\card{g'}}\sum_{n\in 
    g'} \pi^n\bigg)\,.
\end{equation}
\begin{definition}Given $\epsilon\!\geq\! 0$, the recommender system is \emph{$\epsilon$-group-envy-free} if:
$\quad\forall g,g' \in G,\quad U(g,g') \leq U(g,g) + \epsilon\,.$ 
\end{definition}
Group envy-freeness is equivalent to envy-freeness when each group is a singleton. When we have prior knowledge that user preferences and policies are homogeneous within each group, $\epsilon$-envy-freeness translates to $\epsilon'$-group envy-freeness, with $\epsilon'\approx \epsilon$, and the reciprocal is also true:
\begin{proposition}\label{prop:GEFtoEF}
Let $\epsilon, \tilde{\epsilon}>0$, and assume that for all groups and all pairs of users $\user, \otheruser$ in the same group $\group$, we have $\sup\limits_{\ctx\in\ctxS} \norm{\pi^\user(.|\ctx) - \pi^\otheruser(.|\ctx)}_1 \leq \tilde{\epsilon}$ and $\sup\limits_{\ctx\in\ctxS} \norm{\exprew^\user(.|\ctx) - \exprew^\otheruser(.|\ctx)}_1 \leq \tilde{\epsilon}.$
Then, $\epsilon$-group envy-freeness implies $(\epsilon+4\tilde{\epsilon})$-envy-freeness.
\end{proposition}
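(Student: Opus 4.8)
\emph{Proof plan.} The idea is to relate the individual utilities $\iutil^m(\pi^n)$ and $\iutil^m(\pi^m)$ to the group utilities $U(g,g')$ and $U(g,g)$ (where $g$ and $g'$ are the groups of $m$ and $n$), invoke $\epsilon$-group-envy-freeness, and keep track of the $\tilde\epsilon$-errors introduced along the way. Fix arbitrary users $m,n$, let $g\ni m$ and $g'\ni n$, and write $\overline{\pi}^{h}:=\frac{1}{\card h}\sum_{k\in h}\pi^k$ for the average policy of a group $h$, so that $U(h,h')=\frac{1}{\card h}\sum_{k\in h}\iutil^k(\overline{\pi}^{h'})$. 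The elementary tool used repeatedly is that $\iutil^k(\pi)=\sum_{\ctx,\act}q^k(\ctx)\pi(\act|\ctx)\exprew^k(\act|\ctx)$ is linear in $\pi$ with coefficients in $[0,1]$, so $\abs{\iutil^k(\pi)-\iutil^k(\pi')}\le\sup_{\ctx}\norm{\pi(.|\ctx)-\pi'(.|\ctx)}_1$ for every user $k$ and any two policies.

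First I would pass from individual policies to group averages. By the triangle inequality and the policy-homogeneity hypothesis, $\norm{\pi^n(.|\ctx)-\overline{\pi}^{g'}(.|\ctx)}_1\le\frac{1}{\card{g'}}\sum_{k\in g'}\norm{\pi^n(.|\ctx)-\pi^k(.|\ctx)}_1\le\tilde\epsilon$, hence $\iutil^m(\pi^n)\le\iutil^m(\overline{\pi}^{g'})+\tilde\epsilon$, and symmetrically $\iutil^m(\overline{\pi}^{g})\le\iutil^m(\pi^m)+\tilde\epsilon$. So it remains to prove $\iutil^m(\overline{\pi}^{g'})\le\iutil^m(\overline{\pi}^{g})+\epsilon+2\tilde\epsilon$.

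The crux is to replace the fixed user $m$ by the group average $\frac{1}{\card g}\sum_{k\in g}\iutil^k(\cdot)$ that appears in $U(g,\cdot)$. For this I would show that for same-group users $m,k\in g$ and any policy $\pi$, $\abs{\iutil^m(\pi)-\iutil^k(\pi)}\le\tilde\epsilon$: writing $\iutil^m(\pi)-\iutil^k(\pi)=\sum_{\ctx}q^m(\ctx)\sum_{\act}\pi(\act|\ctx)(\exprew^m(\act|\ctx)-\exprew^k(\act|\ctx))+\sum_{\ctx}(q^m(\ctx)-q^k(\ctx))\sum_{\act}\pi(\act|\ctx)\exprew^k(\act|\ctx)$, the first sum is $\le\tilde\epsilon$ by the reward-homogeneity hypothesis, while the second is controlled by the within-group closeness of the context distributions $q^k$ (which I would expect the ``homogeneous groups'' setting of App.~\ref{app:homogroups} to provide in addition to the two displayed hypotheses). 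Averaging this over $k\in g$ gives $\iutil^m(\overline{\pi}^{g'})\le U(g,g')+\tilde\epsilon$ and $U(g,g)\le\iutil^m(\overline{\pi}^{g})+\tilde\epsilon$; combining with $U(g,g')\le U(g,g)+\epsilon$ yields $\iutil^m(\overline{\pi}^{g'})\le\iutil^m(\overline{\pi}^{g})+\epsilon+2\tilde\epsilon$, and with the first step $\iutil^m(\pi^n)\le\iutil^m(\pi^m)+\epsilon+4\tilde\epsilon$, i.e. $(\epsilon+4\tilde\epsilon)$-envy-freeness.

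The main obstacle is this last step together with the bookkeeping of constants: the total error budget breaks up as $\tilde\epsilon$ (for $\pi^n\to\overline{\pi}^{g'}$) $+$ $\tilde\epsilon$ (for $\overline{\pi}^{g}\to\pi^m$) $+$ $\tilde\epsilon$ (passing $m$ through the group average for $g'$) $+$ $\tilde\epsilon$ (the same for $g$), so nothing can be wasted — in particular one must bound $\iutil^m(\pi)-\iutil^k(\pi)$ directly rather than routing through an intermediate policy, and one needs the context distributions, not only the rewards, to be homogeneous within a group. The remaining computations (the degenerate case $\sum_{\act}\pi(\act|\ctx)\exprew^k(\act|\ctx)=0$ and the strictness of the inequalities) are routine and do not affect the argument.
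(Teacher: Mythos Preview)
The paper does not actually prove this proposition: it only says ``The proof is straightforward and omitted.'' Your four-step decomposition (two $\tilde\epsilon$'s to pass between individual policies and group-average policies via the policy hypothesis, and two $\tilde\epsilon$'s to pass between $\iutil^m$ and the group-average utility via the reward hypothesis, sandwiching the $\epsilon$ from group envy-freeness) is exactly the natural argument and gives the stated constant $4\tilde\epsilon$.

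You are also right to flag the missing ingredient: the two displayed hypotheses constrain only $\pi^\user$ and $\exprew^\user$, but comparing $\iutil^m(\pi)$ to $\iutil^k(\pi)$ for $m,k$ in the same group also requires the context distributions $q^m,q^k$ to agree (or be $\tilde\epsilon$-close). Without this the statement is false as written: take $g=\{m,k\}$ with $\pi^m=\pi^k$ and $\exprew^m=\exprew^k$ (so $\tilde\epsilon=0$) but $q^m,q^k$ supported on disjoint contexts, chosen so that $\pi^n$ is much better than $\pi^m$ on $m$'s context and much worse on $k$'s; then $U(g,g')=U(g,g)$ yet $m$ is strictly envious of $n$. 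The paper's informal discussion of ``homogeneous preferences and policies'' does not explicitly add a hypothesis on $q^\user$, so your caveat is well placed. With $q^m=q^k$ within groups your bookkeeping yields exactly $\epsilon+4\tilde\epsilon$; with only $\norm{q^m-q^k}_1\le\tilde\epsilon$ one would pick up two extra $\tilde\epsilon$'s.
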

The result is natural since when all groups have users with homogeneous preferences and policies, groups and users are a similar entity as regards the assessment of envy-freeness. The proof is straightforward and omitted. 
When groups have heterogeneous policies, the ``average policy'' $\frac{1}{\card{g}}\sum_{n\in 
    g} \pi^n$ is uninformative because it does not represent any user's policy. Defining a notion of group utility in the general case is thus nontrivial and left for future work.
\section{Sources of envy}\label{app:sources}

In this section, we first list a few possible sources of envy in recommender systems. Then we provide the details of experiments\footnote{For all our experiments, we used Python and a machine with Intel Xeon Gold 6230 CPUs, 2.10 GHz, 1.3 MiB of cache.} which showcase one of these sources, namely model mispecification (App. \ref{sec:env-mispec}).

\subsection{Examples of sources of envy}

\paragraph{Model mispecification} Recommender systems often rely on strong modeling assumptions and multi-task learning, with methods such as low-rank matrix factorization \cite{koren2009matrix}. The limited capacity of the models (e.g., a rank that is too low) or incorrect assumptions might leave aside users with less common preference patterns. Appendix \ref{sec:env-mispec} gives a more detailed example  on two simulated  recommendation tasks.

\paragraph{Misaligned incentives} A recommender system might have incentives to recommend some items to specific users, e.g., sponsored content. Envy appears when there is a mismatch between users who like these items and users to whom they are recommended.

\paragraph{Measurement bias} Many hybrid recommender systems rely on user interactions together with user-side data \cite{burke2002hybrid}. This includes side-information such as browsing history on third-party, partner websites. Envy arises in these settings if there is measurement bias \cite{suresh2019framework}, e.g., if the side information is unevenly collected for all users (e.g., browsing  patterns are different across users and partners are aligned with the patterns of a user groups only). 

\paragraph{Operational constraints} Regardless of incentives, recommendations might need to obey additional constraints. As described in Proposition \ref{prop:envyEE}, the item-side fairness constraint of equity of exposure is an example of possible source of (user-side) envy. The user-side fairness constraint of equal utility also creates envy, as we showed in Sec. \ref{sec:exp-envy}.

\paragraph{}In the following, we provide the details of our experiments from Sec. \ref{sec:exp-envy} where we showcase examples of environments with envy based on movie and music recommendation tasks.

In these experiments, we measure envy based on the quantity:\begin{align*}\degenvy^m = \max\big(\max\limits_{n\in \intint{\nusers}}\iutil^m(\pi^n)- \iutil^m(\pi^m), 0\big)\end{align*} In line with \cite{chevaleyre2017distributed}, we consider two ways of measuring the degree of envy:
\begin{itemize}[leftmargin=*]
    \item the average envy experienced by users: $\frac{1}{\nusers}\sum\limits_{m\in \intint{\nusers}}\degenvy^m $,
    \item the proportion of $\epsilon$-envious users: $\frac{1}{\nusers}\sum\limits_{m\in \intint{\nusers}}\indic{\degenvy^m > \epsilon}$.
\end{itemize}

\subsection{Setup of the experiments on envy from model mispecification} \label{sec:env-mispec}

We describe in this section the details of the experiments on envy from mispecification presented in Section \ref{sec:exp-envy}. We used Lastfm-2k \citep{Cantador:RecSys2011}, a dataset from the online music service Last.fm\footnote{\url{http://www.lastfm.com}} which contains real play counts of $2k$ users for $19k$ artists, and was used by \citet{patro2020fairrec} who also study envy-freeness as a user-side fairness criterion. We filter the top $2,500$ items most listened to. Following \cite{johnson2014logistic}, we pre-process the raw counts with $\log$-transformation. We split the dataset into train/validation/test sets, each including $70\%/10\%/20\%$ of the user-item listening counts. We create three different splits using three random seeds. We estimate relevance scores for the whole user-item matrix using the standard matrix factorization algorithm\footnote{Using the Python library Implicit: \url{https://github.com/benfred/implicit} (MIT License).} of \citet{hu2008collaborative} trained on the train set, with hyperparameters selected on the validation set by grid search with DCG@40 as metric. The number of latent factors is chosen in $[16, 32, 64, 128]$, the regularization in $[0.01, 0.1, 1., 10.]$, and the confidence weighting parameter in $[0.1, 1., 10., 100.]$. The resulted matrix of estimated relevance scores serves as the ground truth preferences.

We also address movie recommendation using the MovieLens-1M dataset \cite{harper2015movielens}, which contains 1 million ratings on a 5-star scale from approximately 6000 users and 4000 movies. We extract a $2000\times2500$ user $\times$ items matrix, keeping users and items with the most rating. We transform MovieLens ratings into an implicit feedback dataset similar to Last.fm. Since setting ratings $<3$ are usually considered as negative \cite{wang2018modeling}, we set ratings $<3$ to zero, resulting in a dataset with preference values among $\{0,3,3.5,4,4.5,5\}$. We then use the same algorithm as for Last.fm to obtain relevance scores that we use to simulate ground truth preferences.

We then simulate a recommender system's estimation of preferences using low-rank matrix completion\footnote{Using the implementation of \url{https://github.com/gbolmier/funk-svd} (MIT License).} \cite{bell1995information} on a training sample of $70\%$ of the whole ``ground truth'' preferences, with hyperparameter selection on a $10\%$ validation sample. Here, the regularization is chosen in $[0.001, 0.01, 0.1, 1.]$, and the confidence weighting parameter in $[0.1, 1., 10., 100.]$. The estimated preference scores are given as input to the recommendation policies.

The recommendation policies we consider are softmax distributions over the predicted scores with fixed inverse temperature. These policies recommend a single item, drawn from the softmax distribution.

We generate binary rewards using a Bernoulli distribution with expectation given by our ground truth. We consider no context in these experiments, so that the policies and rewards only depend on the user and the item. 

Figure \ref{fig:mispeargmaxageocc} in Sec. \ref{sec:exp-envy} was generated by varying the number of latent factors in the recommender system's preference estimation model. For each number of latent factors in the range $[1,2,4,8,16,32,64,128,256]$, a new model was trained on the train set with hyperparameter selection on the validation set. The degrees of envy are measured on the whole ground truth preference matrix.

\subsection{Envy from equal user utility constraints} \label{app:envy-euu}
We provide the full details of the experiments on envy from equal user utility presented in Sec. \ref{sec:exp-envy} from the main paper. The goal of these experiments is to show that in contrast to envy-freeness, enforcing equal user utility (EUU) degrades user satisfaction and creates envy between users. We remind from Sec. \ref{sec:envy-def} that the fairness constraint of EUU is defined as:
\begin{align*}
    \forall \user,\otheruser \in \intint{\nusers}, \iutil^\user(\pi^\user) = \iutil^\otheruser(\pi^\otheruser),
\end{align*}
or equivalently:
\begin{align*}
    \forall \user \in \intint{\nusers}, \iutil^\user(\pi^\user) = \frac{1}{\nusers}\sum_{\otheruser \in \intint{\nusers}}\iutil^\otheruser(\pi^\otheruser).
\end{align*}
Equal user utility is enforced by adding a penalty to the maximization of user utilities. Optimal EUU policies are found by maximizing the following concave objective function, where the parameter $b>0$ controls the strength of the penalty: 

\begin{align}
&\text{\emph{(EUU)}}  \quad  
    \pieuu = \argmax_{\substack{p:\actS\to[0,1]^M\\\forall \user, \sum_a p^m(a) = 1}}  \sum_{\user \in \intint{\nusers}}\iutil^\user(p^m) - \euureg \sqrt{D(p)}\nonumber\\ 
    &
\quad\text{with }\quad  D(p) = \sum_{\user \in \intint{\nusers}}\bigg(u^m(p^m) - \frac{1}{M}\sum_{\otheruser \in \intint{\nusers}} u^n(p^n) \bigg)^2.\label{eq:EUU-obj}
\end{align}

We infer EUU policies using the Frank-Wolfe algorithm \cite{frank1956algorithm} with the ground truth preferences given as input. The parameter of the penalty is set to $b=50.$ We also generate the unconstrained optimal policies (OPT) based on the ground truth (recall that these are $\iutil^\user(\piu{\user}) = \max_{\pi} \iutil^\user(\pi) \geq  \iutil^\user(\piu{\otheruser})$).

A comparison of EUU and OPT is provided in Table \ref{tab:euu-bad} in Sec. \ref{sec:exp-envy}, with the following evaluation measures : total utility (higher is better), average envy and proportion of $0.05$-envious users (lower is better). The results on both dataset confirm the claim that enforcing EUU penalties deteriorates total utility and creates envy between users, while illustrating the known property that OPT policies are compatible with envy-freeness.
\section{\banditalg experiments}
\subsection{Bandit experiments}\label{sec:banditexps}

\begin{figure}[t]
    \centering
    \includegraphics[width=\linewidth]{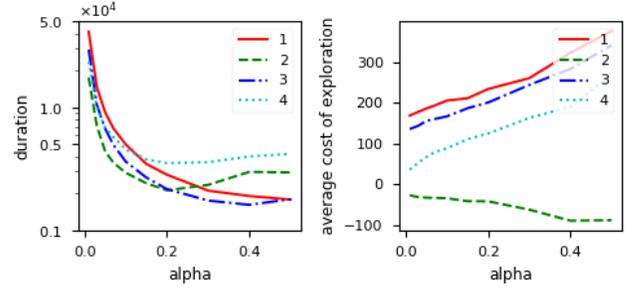}
    \caption{Effect of the conservative exploration parameter $\alpha$ on the duration and cost of auditing on Bandit experiments.\label{fig:banditexp2}}
\end{figure}

We performed experiments on toy bandit environments to assess the performance of our algorithm \banditalg on various configurations, which were also considered in \cite{jamieson2014best}. The four bandits instances have 10 arms. They are Bernoulli variables with means equal to
\begin{enumerate}[label=\arabic*),leftmargin=*]
    \item $\util_0 = 0.6$ and $\util_\grarm = 0.3$ for $k\in\intint{9}$,
    \item $\util_0 = 0.3$, $\mu_1 = 0.6$ and $\util_\grarm = 0.3$ for $k=2..9$,
    \item $\mu_k = 0.7 - 0.7*\big(\frac{k}{10}\big)^{0.6}, k = 0, ..., 9$, and the baseline is $\mu_0$,
    \item same as 3), but permuting $\util_0$ and $\util_1$.
\end{enumerate}

Fig.~\ref{fig:banditexp2} shows the result of applying \banditalg on the various configurations, where we set $\delta=\epsilon=0.05$, $\omega=0.99$ \footnote{Following \citep{jamieson2014lil} who recommend $\omega$ close to 1.} and report results averaged over $100$ trials. We observe clear tendencies similar to those presented in Section \ref{sec:exp-ocef}, although the exact sweet spots in terms of $\alpha$ depends on the specific configuration.

The cost of exploration follows similar patterns as in in Section \ref{sec:exp-ocef}. In Prob. 2, the baseline has the worst performance, so exploration is beneficial to the user and the cost is negative. On the other hand, for instance in Prob. 4, the cost is close to $0$ when $\alpha$ is very small and increases with $\alpha$. It is the case where the baseline is not the best arm but is close to it, and there are many bad arms. When the algorithm is very conservative, bad arms are discarded rapidly thanks to the good estimation of the baseline performance. In this ``low-cost'' regime however, the audit is significantly longer.

We show additional results when varying $\delta$ in Figure~\ref{fig:banditexpdelta}. Results are averaged over $100$ simulations and the conservative exploration parameter is set to $\alpha=0.05$. The duration decreases as $\delta$ increases, i.e. a lower confidence certificate requires fewer samples per user. The duration for Problem 1 is longer than for the other instances. This is because with $\alpha$ set to $0.05$ and the baseline mean being much higher than non-baseline arms, the conservative constraint \ref{eq:c-constraint} enforces many pulls of the baseline, since each exploration round is very costly. As a consequence, too little data is collected on the non-baseline arms to conclude that they are below $\util_0 + \epsilon$. Since all non-baseline arms have equal means, the size of the active set remains the same for a long time, while in Problem 3, where the baseline is also the best arm, arms are eliminated one at a time.

We show how \banditalg scales with the number of arms in Figure~\ref{fig:banditexparms}, for fixed values $\alpha = \delta = \epsilon = 0.05$. We set $K_{\max} = 100$ and define 4 instances as in the list above, except that $K=K_{\max}$ instead of $K=9$. We run \banditalg on the instances $\util_{0:K'}$ and vary the value of $K' \leq K_{\max}$. The duration increases for all problems, and the slope depends on the gaps between $\util_0$ and the $\util_\grarm$.

\begin{figure}[t]
    \centering
    \includegraphics[width=\linewidth]{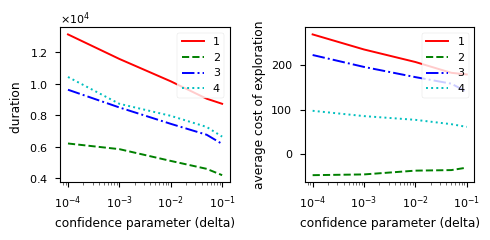}
    \caption{Effect of the confidence parameter $\delta$ on the duration and cost on 4 different bandit instances.\label{fig:banditexpdelta}}
\end{figure}

\begin{figure}[t]
    \centering
    \includegraphics[width=\linewidth/2]{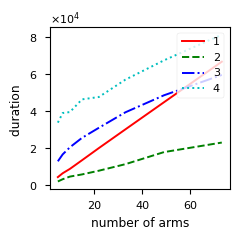}
    \caption{Effect of the number of arms on the duration on 4 different bandit instances.\label{fig:banditexparms}}
\end{figure}

\subsection{Setup of the MovieLens and Last.fm experiments}\label{sec:movielensexps}

We now provide additional details on the experimental evaluation of \banditalg on MovieLens and Last.fm presented in Sec. \ref{sec:exp-ocef}. The protocole to generate the recommendation task is the same as the one described in App. \ref{app:sources} for the experiments on sources of envy. The policies are softmax distributions over scores predicted by the matrix factorization model with a number of factors equal to $48$.

In these experiments, the auditor interacts with the audited users. Rewards are drawn from Bernoulli distributions with expectation equal to the ground truth preferences.

Two recommendation policies are audited. The first one is a softmax with inverse temperature equal to $5$. Since the inverse temperature is small, the softmax distribution is closer to random, which means users get more similar recommendations: the recommender system is thus envy-free. The second one is a softmax with inverse temperature equal to $15$. With higher inverse temperature, the distribution is more peaked, which exacerbates differences between policies. Since the model with $48$ factors is mispecified (see Sec.\ref{sec:exp-envy}), envy is visible.

\section{Proofs}\label{sec:proofs}

\subsection{Theoretical results}

\subsubsection{Useful lemmas}\label{sec:lemma}

Recall that \banditalg considers a single audited group $\user$, therefore we do not use superscripts $\user$ in the following (e.g., $\mu_k, r_t$...).

The algorithm relies on valid confidence intervals. As in \cite{jamieson2014lil}, we use anytime bounds inspired by the law of the iterated algorithm (LIL), and a union bound. 

We say that a random variable is $\sigma$-subgaussian if it is subgaussian with variance proxy $\sigma^2$. Since we assume the rewards for each user are bounded, more precisely $r_t\in [0,1]$, they are $\frac{1}{2}$-subgaussian.

Throughout the paper, we assume that rewards for each user are independent conditionally to the arm played.

\begin{lemma}\label{lem:chernoff2} Let $\conf \in (0,1)$. Assume the rewards are $\sigma$-subgaussian.

Let $\omega \in (0,1), \quad\theta = \log(1+\omega)\big(\frac{\omega\conf}{2(2+\omega)}\big)^{\frac{1}{1+\omega}}$. 
\begin{align*}
    \text{Let}\quad N_\grarm(t) = \sumt \indic{\grarm_s = \grarm} && \empEst_\grarm(t) = \frac{\sumt \rew_s \indic{\grarm_s = \grarm}}{ N_\grarm(t)}
\end{align*}
\begin{align*}
    \rad_\grarm(t) = &\sqrt{\frac{2\sigma^2(1+\sqrt{\omega})^2(1+\omega)}{\cnt_\grarm(t)}}\\
    &\times \sqrt{\log\left(\frac{2(\narms+1)}{\theta}\log((1+\omega)\cnt_\grarm(t))\right)}
\end{align*}
\begin{align*}
\lcb_k(t) = \empEst_k(t) - \rad_k(t) &&
\ucb_k(t) = \empEst_k(t) + \rad_k(t) 
\end{align*} 
Then, 
\begin{align*} \pr{}{\forall t>0, \forall \grarm\in\intint{\narms}, ~~\util_\grarm\in[\lcb_\grarm(t); \ucb_\grarm(t)]} \geq 1- \frac{\delta}{2}\,.
\end{align*}
\end{lemma}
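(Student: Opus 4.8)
The statement is an anytime, uniform-over-arms confidence bound, and the natural route is the standard ``LIL stopping-time'' argument of \citet{jamieson2014lil} combined with a union bound over the $\narms+1$ arms (including the baseline arm $0$). First I would fix a single arm $\grarm$ and consider the martingale-type process formed by the partial sums of the centered rewards $\rew_s - \util_\grarm$ over the rounds $s$ at which arm $\grarm$ was pulled. Since each reward is $\sigma$-subgaussian conditionally on the played arm and the rewards are conditionally independent given the arm, these partial sums are sums of independent $\sigma$-subgaussian increments, so after $\cnt_\grarm(t)$ pulls the deviation $\empEst_\grarm(t)-\util_\grarm$ has the subgaussian tail of a mean of $\cnt_\grarm(t)$ such variables. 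The difficulty is that $\cnt_\grarm(t)$ is a random, data-dependent number of samples that can grow without bound, so a naive Hoeffding bound at each $t$ followed by a union bound over $t$ would diverge; this is exactly the obstacle the LIL-flavored radius $\rad_\grarm(t)$ is designed to overcome.

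To handle the random sample size I would invoke the finite-time law-of-the-iterated-logarithm bound for subgaussian martingales (Lemma 1 of \citet{jamieson2014lil}, or the equivalent peeling argument): for a fixed arm and any $\omega\in(0,1)$, with probability at least $1-\theta'$ (for an appropriate $\theta'$ expressed through $\omega$ and the failure budget) it holds \emph{simultaneously for all} $\cnt_\grarm\geq 1$ that
\[
\bigl|\empEst_\grarm - \util_\grarm\bigr| \;\leq\; \sqrt{\tfrac{2\sigma^2(1+\sqrt\omega)^2(1+\omega)}{\cnt_\grarm}}\sqrt{\log\!\Bigl(\tfrac{2}{\theta'}\log((1+\omega)\cnt_\grarm)\Bigr)}.
\]
Because this inequality is uniform over the number of samples $\cnt_\grarm$, it automatically covers the realized random value $\cnt_\grarm(t)$ at every time $t>0$; one just plugs in $\cnt_\grarm=\cnt_\grarm(t)$. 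The key bookkeeping step is to choose the per-arm failure probability so that the stated $\theta = \log(1+\omega)\bigl(\tfrac{\omega\conf}{2(2+\omega)}\bigr)^{1/(1+\omega)}$, with the factor $2(\narms+1)$ inside the $\log$ in $\rad_\grarm(t)$, yields per-arm failure probability at most $\tfrac{\delta}{2(\narms+1)}$.

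Finally I would take a union bound over the $\narms+1$ arms $\grarm\in\{0,1,\dots,\narms\}$ (the notation $\intint{\narms}$ in the lemma statement being understood to include the baseline index, consistent with the $2(\narms+1)$ appearing in $\rad_\grarm$): summing $\narms+1$ failure probabilities of $\tfrac{\delta}{2(\narms+1)}$ each gives total failure probability at most $\tfrac{\delta}{2}$, hence the claimed $1-\tfrac{\delta}{2}$ high-probability event on which $\util_\grarm\in[\lcb_\grarm(t),\ucb_\grarm(t)]$ holds for all $t>0$ and all arms simultaneously. The only steps requiring care are (i) checking the conditional-independence/subgaussianity setup really does put us in the scope of the LIL martingale bound despite the adaptive arm selection rule of \banditalg, and (ii) the arithmetic that propagates $\omega$, $\conf$, and $\narms$ through $\theta$ into the final confidence level — both are routine given \citet{jamieson2014lil}, so I would cite that lemma and verify the constants rather than re-derive the peeling argument from scratch.
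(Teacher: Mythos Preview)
Your proposal is correct and matches the paper's approach: the paper does not give a standalone proof of this lemma but simply states (just before the lemma) that it uses ``anytime bounds inspired by the law of the iterated logarithm'' from \citet{jamieson2014lil} together with ``a union bound'' over the $\narms+1$ arms, which is precisely the argument you outline. Your remarks about handling the adaptive sample size via the uniform-in-$N$ LIL bound and about the $2(\narms+1)$ factor arising from the union bound over arms (including the baseline) are exactly the intended reasoning.
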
 

Notice that the choice of $\theta$ makes sure that $\rad_\grarm$ is well defined as long as $\cnt_\grarm(t) > 0$. We use the convention that when $\cnt_\grarm(t) = 0$, $\rad_\grarm(t)$ is strictly larger than when $\cnt_\grarm(t) = 1$ to ensure $\rad_\grarm$ is strictly decreasing with $\cnt_\grarm$. Also, when $\cnt_\grarm(t) = 0$, we set $\empEst_\grarm(t) = 0$.

Following \cite{garcelon2020conservative}, our lower bound on the conservative constraint relies on Freedman's martingale inequality \cite{freedman1975tail}.

\begin{lemma}\label{lem:conf-interval}Assume all rewards are $\sigma$-subgaussian. Let $\activepullS_{t}=\{s\leq t: \grarm_s \neq 0\}$ be the number of times a non-baseline arm $\grarm \neq 0$ has been pulled up to time $t$. Let
$\phi(t) = \sigma\sqrt{2\card{\activepullS_{t-1}}\log\big(\frac{6 \card{\activepullS_{t-1}}^2}{\delta}\big)} + \frac{2}{3}\log\big(\frac{6 \card{\activepullS_{t-1}}^2}{\delta}\big).$
 
 Then,$\quad\forall \conf > 0,$\begin{align*}
     \pr{}{\forall t>0, \bigg\vert\sum_{s\in\activepullS_{t-1}} (\util_{\grarm_s} - \rew_s) \bigg\vert \leq  \phi(t)} \geq 1-\frac{\conf}{2}.
\end{align*}
\end{lemma}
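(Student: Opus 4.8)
\textbf{Proof plan for Lemma~\ref{lem:conf-interval}.}
The statement is a concentration inequality for the martingale difference sequence $(\util_{\grarm_s} - \rew_s)\indic{\grarm_s \neq 0}$. The plan is to apply a self-normalized / peeling-free time-uniform version of Freedman's inequality, following \cite{garcelon2020conservative}. First I would set up the filtration $(\mathcal{F}_t)$ generated by the algorithm's history through round $t$ (contexts, arms pulled, rewards). Since the arm $\grarm_s$ is chosen based on $\mathcal{F}_{s-1}$ and $\rew_s \sim \nu(\cdot|\ctx_s)$ has conditional mean $\util_{\grarm_s}$, the increments $D_s = (\util_{\grarm_s} - \rew_s)\indic{\grarm_s\neq 0}$ form a martingale difference sequence with respect to $(\mathcal{F}_s)$, bounded by $1$ in absolute value (since $\rew_s\in[0,1]$ and $\util_{\grarm_s}\in[0,1]$), and with conditional variance bounded by the subgaussian variance proxy $\sigma^2$ (so the predictable quadratic variation up to time $t$ is at most $\sigma^2\card{\activepullS_{t-1}}$).

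The core step is to invoke Freedman's martingale tail inequality \cite{freedman1975tail}: for a martingale with increments bounded by $c$ and predictable quadratic variation $V$, one has $\pr{}{\exists t: \abs{\sum_{s\leq t} D_s} \geq \lambda, V_t \leq v} \leq 2\exp\big(-\tfrac{\lambda^2}{2(v + c\lambda/3)}\big)$. Here $c=1$ and, on the event that exactly $\card{\activepullS_{t-1}}=N$ non-baseline pulls have occurred, $V_t \leq \sigma^2 N$. To obtain a \emph{time-uniform} bound I would apply a union bound over the possible values $N = \card{\activepullS_{t-1}} = 1, 2, 3, \ldots$ of the number of non-baseline pulls (note $\card{\activepullS_{t-1}}$ only increases by $0$ or $1$ each round, so the bound need only hold at the rounds where it increments). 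Setting the per-$N$ failure probability to $\frac{\delta}{2}\cdot\frac{1}{N(N+1)}$ (or $\propto \frac{\delta}{N^2}$, matching the $\frac{6N^2}{\delta}$ inside the log of $\phi$), and solving the quadratic $\lambda^2 = 2(\sigma^2 N + \lambda/3)\log\big(\frac{6N^2}{\delta}\big)$ for $\lambda$, yields by the standard inequality $\sqrt{a+b}\leq\sqrt a + \sqrt b$ (or $x\leq a + \sqrt{bx} \Rightarrow x \leq a + b + \sqrt{ab}$, appropriately) the explicit form $\phi(t) = \sigma\sqrt{2N\log(6N^2/\delta)} + \tfrac{2}{3}\log(6N^2/\delta)$ with $N = \card{\activepullS_{t-1}}$. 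Summing the failure probabilities: $\sum_{N\geq1}\frac{\delta}{2}\cdot\frac{6}{\pi^2 N^2}$ — wait, more simply with the $\frac{1}{N(N+1)}$ weighting the sum telescopes to $\frac{\delta}{2}$; one just has to check the constant $6$ in $\frac{6N^2}{\delta}$ is chosen to absorb whatever summable weighting is used, which it is.

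I expect the main obstacle to be purely bookkeeping rather than conceptual: (i) correctly handling the union bound over the \emph{random} quantity $\card{\activepullS_{t-1}}$ — the clean way is to note that for each fixed integer $N$, the stopping time $\tau_N = \min\{t : \card{\activepullS_{t-1}} = N\}$ is well-defined, apply Freedman to the stopped martingale at $\tau_N$, and union over $N$; and (ii) the algebra of inverting the Freedman bound to get exactly the stated closed form for $\phi$, which requires care with the $\lambda/3$ cross-term and the choice of the summable sequence of confidence levels so the constants land on $6N^2/\delta$. Neither step involves a genuinely new idea; the whole argument is the standard conservative-exploration concentration bound from \cite{garcelon2020conservative} specialized to our subgaussian rewards, and I would cite that work for the template while verifying the constants match our normalization ($\sigma = 1/2$ for $[0,1]$-bounded rewards).
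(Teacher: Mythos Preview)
Your proposal is correct and follows essentially the same approach as the paper: the paper does not give a detailed proof of this lemma but simply states that the bound relies on Freedman's martingale inequality \cite{freedman1975tail} following \cite{garcelon2020conservative}, which is exactly the template you outline (martingale differences for non-baseline pulls, Freedman with a union bound over the count $N=\card{\activepullS_{t-1}}$, then inverting the bound to obtain the stated $\phi$). Your bookkeeping plan for the union over $N$ and the algebraic inversion is the standard one and matches the constants in the statement.
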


As in Lemma \ref{lem:chernoff2}, we use the convention $\phi(t)= 0$ when $\card{A_{t-1}}=0$.

\begin{lemma}\label{lem:bound-constraint} Let $\conf \in (0, 1)$. 

Let $\Phi(t) = \min\big(\sumArm \rad_\grarm(t-1) \cnt_\grarm(t-1), \phi(t)\big)$, with $\phi(t)$ defined in Lemma \ref{lem:conf-interval}. Let $\mathcal E$ be the event under which all confidence intervals are valid, i.e.:
\begin{equation*}
\begin{aligned}
    &\mathcal{E} = \mathcal{E}_1\cap \mathcal{E}_2 \quad \, \text{ with}\\
    &\mathcal{E}_1 = \big\{\forall k\in\{0, \ldots, K\}, \forall t>0, \mu_k(t)\in[\lcb_k(t); \ucb_k(t) \big\}\\
    &\mathcal{E}_2 = \big\{\forall t>0, \bigg\vert\sum_{s\in\activepullS_{t-1}} (\util_{\grarm_s} - \rew_s) \bigg\vert \leq  \Phi(t) \big\}.
\end{aligned}
\end{equation*}
Then $\pr{}{\mathcal{E}} \geq 1 - \delta$.
\end{lemma}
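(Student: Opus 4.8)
The plan is to realize $\mathcal{E}$ as a superset of the intersection of the two high‑probability events already furnished by Lemmas~\ref{lem:chernoff2} and \ref{lem:conf-interval}, and then finish with a union bound. Write $\mathcal{F}$ for the event of Lemma~\ref{lem:conf-interval}, namely $\mathcal{F} = \big\{\forall t>0,\ \big\vert\sum_{s\in\activepullS_{t-1}}(\util_{\grarm_s}-\rew_s)\big\vert \leq \phi(t)\big\}$, so that $\pr{}{\mathcal{F}^c}\leq\frac{\delta}{2}$, and recall $\pr{}{\mathcal{E}_1^c}\leq\frac{\delta}{2}$ from Lemma~\ref{lem:chernoff2}. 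The crux is the purely deterministic inclusion $\mathcal{E}_1\cap\mathcal{F}\subseteq\mathcal{E}_2$; granting it, we get $\mathcal{E}_1\cap\mathcal{F}\subseteq\mathcal{E}_1\cap\mathcal{E}_2=\mathcal{E}$, hence $\pr{}{\mathcal{E}}\geq 1-\pr{}{\mathcal{E}_1^c}-\pr{}{\mathcal{F}^c}\geq 1-\delta$.

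To establish $\mathcal{E}_1\cap\mathcal{F}\subseteq\mathcal{E}_2$, fix $t>0$ and regroup the rounds of $\activepullS_{t-1}$ by the arm pulled. Since $\activepullS_{t-1}$ contains only rounds with $\grarm_s\neq 0$, and since $\cnt_\grarm(t-1)\empEst_\grarm(t-1) = \sum_{s\leq t-1:\,\grarm_s=\grarm}\rew_s$ by definition of $\empEst_\grarm$,
\begin{equation*}
\sum_{s\in\activepullS_{t-1}}(\util_{\grarm_s}-\rew_s) \;=\; \sum_{\grarm=1}^{\narms}\cnt_\grarm(t-1)\big(\util_\grarm - \empEst_\grarm(t-1)\big),
\end{equation*}
where each term with $\cnt_\grarm(t-1)=0$ simply drops out (consistently with the stated conventions $\empEst_\grarm=0$ and the degenerate value of $\rad_\grarm$). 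On $\mathcal{E}_1$ we have $\vert\util_\grarm-\empEst_\grarm(t-1)\vert\leq\rad_\grarm(t-1)$ for every $\grarm\in\intint{\narms}$, so the triangle inequality gives $\big\vert\sum_{s\in\activepullS_{t-1}}(\util_{\grarm_s}-\rew_s)\big\vert\leq\sum_{\grarm=1}^{\narms}\rad_\grarm(t-1)\cnt_\grarm(t-1)$; on $\mathcal{F}$ the same quantity is at most $\phi(t)$. Taking the smaller of the two bounds is exactly $\Phi(t)$, so $\mathcal{E}_2$ holds on $\mathcal{E}_1\cap\mathcal{F}$.

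The only points requiring care are bookkeeping rather than any real estimation: one must check that the re-indexing of the sum over $\activepullS_{t-1}$ is exact (arm $0$ never appears in $\activepullS_{t-1}$, and $\empEst_\grarm(t-1)$ averages precisely the $\cnt_\grarm(t-1)$ rewards collected when pulling $\grarm$ up to time $t-1$), and that the degenerate cases $\cnt_\grarm(t-1)=0$ and $\card{\activepullS_{t-1}}=0$ are absorbed by the conventions fixed in Lemmas~\ref{lem:chernoff2} and \ref{lem:conf-interval}. I also use that the confidence intervals of Lemma~\ref{lem:chernoff2} are valid for arm $0$ as well as arms $1,\dots,\narms$, which is the role of the $2(\narms+1)$ factor inside its logarithm. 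With these checks in place the proof is the two‑line union bound above.
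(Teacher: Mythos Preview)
Your proposal is correct and follows essentially the same route as the paper: regroup $\sum_{s\in\activepullS_{t-1}}(\util_{\grarm_s}-\rew_s)$ arm by arm, bound it by $\sum_\grarm \rad_\grarm(t-1)\cnt_\grarm(t-1)$ on $\mathcal{E}_1$, combine with the $\phi(t)$ bound from Lemma~\ref{lem:conf-interval}, and conclude by a union bound. The paper phrases the key step as the equality $\mathcal{E}_1\cap\mathcal{E}_2=\mathcal{E}_1\cap\mathcal{F}$ rather than your inclusion $\mathcal{E}_1\cap\mathcal{F}\subseteq\mathcal{E}_2$, but the content is identical.
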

\begin{proof}
By Lemma \ref{lem:chernoff2}, $\pr{}{\mathcal{E}_1} \geq 1 -\frac{\delta}{2}$.
By the lemma above, with probability $1-\frac{\delta}{2}$, we have for all $t>0$, $\big\vert\sum_{s\in\activepullS_{t-1}} (\util_{\grarm_s} - \rew_s) \big\vert \leq  \phi(t).$

Then, notice that \begin{align*}
    \bigg\vert\sum_{s\in\activepullS_{t-1}} (\util_{\grarm_s} - \rew_s)\bigg\vert = \bigg\vert\sumArm \cnt_\grarm(t-1) (\util_\grarm - \empEst_\grarm(t-1))\bigg\vert.
\end{align*}

Hence under $\mathcal{E}_1$ we also have: 
 \begin{align*}
    \bigg\vert\sum_{s\in\activepullS_{t-1}} (\util_{\grarm_s} - \rew_s)\bigg\vert \leq \sumArm \cnt_\grarm(t-1) \rad_\grarm(t-1).
\end{align*}

Therefore, $$\mathcal{E} = \mathcal{E}_1 \cap \mathcal{E}_2 =  \mathcal{E}_1 \cap \bigg\{\big\vert\sum_{s\in\activepullS_{t-1}} (\util_{\grarm_s} - \rew_s) \big\vert \leq  \phi(t)\bigg\}, $$ and thus, by a union bound, we have: $\pr{}{\mathcal E} \geq 1-\conf$.
\end{proof}

\subsubsection{Theorems}

We now provide our complete theoretical guarantees for correctness (Theorem \ref{th:banditcorrectness}), duration (Theorem \ref{th:banditsample}) and cost (Theorem \ref{th:costexp}), which we then prove in App. \ref{app:proof-correct} and \ref{app:proof-sample-cost}. From these results, we derive Theorem \ref{thm:all} in the main paper, which we prove in App. \ref{app:proof-all}. 

\begin{theorem}[Correctness] \label{th:banditcorrectness}
With probability at least $1-\delta$:
\begin{enumerate}
    \item \banditalg satisfies the safety constraint \eqref{eq:c-constraint} at every time step,
    \item if \banditalg outputs $\epsilon$-\noenvy then the user $\user$ is not $\epsilon$-envious, and if it outputs \envy, then $\user$ is envious.
\end{enumerate}
\end{theorem}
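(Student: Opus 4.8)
The plan is to work on the high-probability event $\mathcal{E}=\mathcal{E}_1\cap\mathcal{E}_2$ of Lemma~\ref{lem:bound-constraint}, which has $\pr{}{\mathcal{E}}\geq 1-\delta$, and to show that \emph{both} claims hold deterministically on $\mathcal{E}$. On $\mathcal{E}_1$ every mean satisfies $\util_\grarm\in[\lcb_\grarm(t),\ucb_\grarm(t)]$ for all $t$ and all $\grarm\in\{0,\dots,\narms\}$; on $\mathcal{E}_2$ the running deviation obeys $\big|\sum_{s\in\activepullS_{t-1}}(\util_{\grarm_s}-\rew_s)\big|\leq\Phi(t)$ for all $t$, with the same $\Phi$ as in~\eqref{eq:xi}. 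These are the only stochastic ingredients; everything else is deterministic.

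\emph{Output correctness.} If \banditalg returns \envy at round $t$ (line~8), some $\grarm\in S_t$ has $\lcb_\grarm(t)>\ucb_0(t)$, so on $\mathcal{E}_1$, $\util_\grarm\geq\lcb_\grarm(t)>\ucb_0(t)\geq\util_0$: an arm strictly beats the baseline and $\user$ is envious. If instead it returns $\epsilon$-\noenvy at round $t$ (line~9), then $S_t=\emptyset$; since $S_0=\intint{\narms}$ and line~7 produces nested sets $S_t\subseteq S_{t-1}$, each arm $\grarm\in\intint{\narms}$ was removed at some round $t_\grarm\leq t$, which by the update rule of line~7 forces $\ucb_\grarm(t_\grarm)\leq\lcb_0(t_\grarm)+\epsilon$. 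On $\mathcal{E}_1$ this gives $\util_\grarm\leq\ucb_\grarm(t_\grarm)\leq\lcb_0(t_\grarm)+\epsilon\leq\util_0+\epsilon$, hence $\max_\grarm\util_\grarm\leq\util_0+\epsilon$ and $\user$ is not $\epsilon$-envious.

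\emph{Safety constraint.} I would prove by induction on $t$ that, on $\mathcal{E}$, $\sum_{s=1}^t\util_{\grarm_s}\geq(1-\alpha)t\,\util_0$, working with the equivalent form $\sum_{s\in\activepullS_t}\util_{\grarm_s}\geq\big((1-\alpha)t-N_0(t)\big)\util_0$ obtained by separating out the baseline pulls. The base case $t=1$ is immediate, since the empty history makes $\xi_1<0$, so $\grarm_1=0$ and $\util_0\geq(1-\alpha)\util_0$. For the step: if $\grarm_t=0$, the inductive hypothesis gives $\sum_{s=1}^t\util_{\grarm_s}=\sum_{s=1}^{t-1}\util_{\grarm_s}+\util_0\geq(1-\alpha)(t-1)\util_0+\util_0\geq(1-\alpha)t\,\util_0$. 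If $\grarm_t=\ell_t\neq0$, then line~4 guarantees $\xi_t\geq0$, and I split on the sign of $(1-\alpha)t-N_0(t)$. If this quantity is $\leq0$, safety is free: $\sum_{s=1}^t\util_{\grarm_s}\geq N_0(t)\util_0\geq(1-\alpha)t\,\util_0$ because $\util_{\grarm_s}\geq0$. If it is $>0$, then $N_0(t)-(1-\alpha)t<0$, and on $\mathcal{E}$ one bounds $\sum_{s\in\activepullS_{t-1}}\util_{\grarm_s}\geq\sum_{s\in\activepullS_{t-1}}\rew_s-\Phi(t)$ (from $\mathcal{E}_2$), $\util_{\ell_t}\geq\lcb_{\ell_t}(t-1)$ (from $\mathcal{E}_1$), and $\big(N_0(t)-(1-\alpha)t\big)\util_0\geq\big(N_0(t)-(1-\alpha)t\big)\ucb_0(t-1)$ (negative coefficient, $\ucb_0(t-1)\geq\util_0$); since $N_0(t)=N_0(t-1)$, adding these up reproduces exactly the expression defining $\xi_t$ in~\eqref{eq:xi}, whence $\sum_{s\in\activepullS_t}\util_{\grarm_s}-\big((1-\alpha)t-N_0(t)\big)\util_0\geq\xi_t\geq0$.

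The crux is the last case of the induction. One has to notice that $\xi_t$ is a valid \emph{lower} bound on the target quantity only when $(1-\alpha)t-N_0(t)>0$ --- otherwise replacing $\util_0$ by $\ucb_0$ points the wrong way --- and to cover the complementary regime via the trivial bound $\util\geq0$ (precisely the regime in which the baseline has already been pulled enough for safety to hold for free). Making the telescoped inequality match the definition of $\xi_t$ verbatim also demands some bookkeeping care: $\activepullS_{t-1}$ versus $\activepullS_t$, $N_0(t-1)$ versus $N_0(t)$ (equal here only because $\grarm_t\neq0$), and the $\tau=t-1$ shift inside $\xi_t$. Together with $\pr{}{\mathcal{E}}\geq1-\delta$ from Lemma~\ref{lem:bound-constraint}, this yields both claims with probability at least $1-\delta$.
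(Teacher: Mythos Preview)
Your proof is correct and follows the same approach as the paper's --- condition on the event $\mathcal{E}$ of Lemma~\ref{lem:bound-constraint}, verify output correctness from the validity of the confidence intervals, and verify the safety constraint by checking that $\xi_t$ lower-bounds the safety budget when $\ell_t$ is played. Your version is in fact more careful than the paper's terse argument: the paper simply asserts that $\xi_t$ is a lower bound on $Z_t$, whereas you correctly observe that this holds only when $(1-\alpha)t - N_0(t) > 0$ (otherwise replacing $\util_0$ by $\ucb_0(t-1)$ points the wrong way), and you handle the complementary regime via the trivial nonnegativity bound and the $k_t=0$ case by induction.
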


We denote $\log^+(.) = \max(1, \log(.))$.

\begin{theorem}[Duration] \label{th:banditsample} Let $\eta_\grarm = \max(\util_\grarm-\util_0, \util_0 + \epsilon - \util_\grarm)$, $\conf \in (0, 1)$, $\theta = \log(2)\sqrt{\frac{\conf}{6}}$, and 
\begin{align*} \forall \grarm\neq 0,\text{~~~} H_\grarm = 1 + \frac{64}{\eta_\grarm^2}
\log\bigg(\frac{2(\narms+1)\log^+\big(\frac{128(\narms+1)}{\theta \eta_\grarm^2}\big)}{\theta}\bigg),
\end{align*}

\begin{align*}
    H_0 =  \max\bigg(\max_{\grarm\in\intint{K}} &H_\grarm, \frac{6K+ 2}{\alpha\util_0}  \\
    &+ \sum_{k=1}^K \frac{256\log\left(\frac{2(\narms+1)\log(2H_\grarm)}{\theta}\right)}{\alpha \util_0 \eta_\grarm}\bigg).
\end{align*}

With probability at least $1-\conf$, \banditalg stops in at most $\tau$ steps, with 
\begin{equation*}
    \tau \leq \sum_{k=0}^K H_k\,.
\end{equation*}
\end{theorem}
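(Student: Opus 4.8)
The plan is to argue deterministically on the event $\mathcal E$ of Lemma~\ref{lem:bound-constraint} (all confidence intervals $[\lcb_\grarm,\ucb_\grarm]$ and the deviation bound $\Phi$ valid), which holds with probability at least $1-\conf$; it then suffices to prove $\tau\le\sum_{k=0}^{\narms}H_\grarm$ on $\mathcal E$. The engine is a \emph{separation claim}: on $\mathcal E$, if at the end of a round $t$ some arm $\grarm\neq 0$ has $2\big(\rad_\grarm(t)+\rad_0(t)\big)<\eta_\grarm$, then $\grarm\notin S_t$ or \banditalg has already returned. This follows by splitting on which term attains $\eta_\grarm=\max(\util_\grarm-\util_0,\util_0+\epsilon-\util_\grarm)$: if $\eta_\grarm=\util_\grarm-\util_0$ then on $\mathcal E$, $\lcb_\grarm(t)\ge\util_\grarm-2\rad_\grarm(t)>\util_0+2\rad_0(t)\ge\ucb_0(t)$, so if $\grarm\in S_t$ the \envy return at line~8 fires; if $\eta_\grarm=\util_0+\epsilon-\util_\grarm$ then $\ucb_\grarm(t)\le\util_\grarm+2\rad_\grarm(t)<\util_0-2\rad_0(t)+\epsilon\le\lcb_0(t)+\epsilon$, so $\grarm$ is dropped from $S_t$ at line~7.

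I then bound $\cnt_\grarm(\tau)$ for $\grarm\neq 0$. Arm $\grarm$ is pulled at round $t$ only if $\ell_t=\grarm\in S_{t-1}$ (so the algorithm has not returned) and the tightest-interval test of line~4 fails, i.e.\ $\rad_0(t-1)\le\min_{j\in S_{t-1}}\rad_j(t-1)\le\rad_\grarm(t-1)$; the contrapositive of the separation claim at round $t-1$ then forces $2\big(\rad_\grarm(t-1)+\rad_0(t-1)\big)\ge\eta_\grarm$, hence $4\rad_\grarm(t-1)\ge\eta_\grarm$. Because $\rad_\grarm$ is strictly decreasing in $\cnt_\grarm$, this caps $\cnt_\grarm(t-1)$, and inverting the closed form of $\rad_\grarm$ from Lemma~\ref{lem:chernoff2} --- first with a crude upper bound on $\cnt_\grarm$ inside the iterated logarithm, then re-substituting (this is where the $\log^+\!\big(128(\narms+1)/(\theta\eta_\grarm^2)\big)$ factor appears) --- gives $\cnt_\grarm(\tau)\le H_\grarm$.

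For $\cnt_0(\tau)$ I look at the last round $t_0$ with $\grarm_{t_0}=0$ (if none, $\cnt_0(\tau)=0\le H_0$). Either $\rad_0(t_0-1)>\min_{j\in S_{t_0-1}}\rad_j(t_0-1)$, so picking $j^\star$ attaining the minimum and using that all arms share the same strictly decreasing radius gives $\cnt_0(t_0-1)<\cnt_{j^\star}(t_0-1)\le\max_{\grarm\in\intint{\narms}}H_\grarm$, hence $\cnt_0(\tau)=\cnt_0(t_0-1)+1\le\max_\grarm H_\grarm$; or the conservative test fired, $\xi_{t_0}<0$. In the latter case I expand $\xi_{t_0}$ from \eqref{eq:xi}, use $\mathcal E_2$ to replace $\sum_{s\in\activepullS_{t_0-1}}\rew_s$ by $\sum_{s\in\activepullS_{t_0-1}}\util_{\grarm_s}$ up to $\Phi(t_0)$ and $\mathcal E_1$ to replace $\lcb_{\ell_{t_0}},\ucb_0$ by the true means up to their radii, rewrite using $\cnt_0(t_0-1)+\card{\activepullS_{t_0-1}}=t_0-1$ and $\sum_{s\in\activepullS_{t_0-1}}\util_{\grarm_s}=\sum_{\grarm\ge 1}\cnt_\grarm(t_0-1)\util_\grarm$, and rearrange to $\alpha\util_0(t_0-1)\lesssim\sum_{\grarm:\util_\grarm<\util_0}(\util_0-\util_\grarm)\cnt_\grarm(t_0-1)+2\Phi(t_0)+2\rad_{\ell_{t_0}}(t_0-1)+O(1)$. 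Inserting $\cnt_\grarm(t_0-1)\le H_\grarm$, using $\util_\grarm<\util_0\Rightarrow\eta_\grarm\ge\util_0-\util_\grarm$ so that $(\util_0-\util_\grarm)H_\grarm\lesssim\eta_\grarm^{-1}\log(\cdots)$, and bounding $\Phi(t_0)\le\sum_{\grarm\ge 1}\rad_\grarm(t_0-1)\cnt_\grarm(t_0-1)$ together with $\rad_\grarm(\cnt)\,\cnt\lesssim\eta_\grarm^{-1}\log(\cdots)$ for $\cnt\le H_\grarm$, then dividing by $\alpha\util_0$ yields $t_0\le\frac{6\narms+2}{\alpha\util_0}+\sum_{\grarm=1}^{\narms}\frac{256\log(\cdots)}{\alpha\util_0\,\eta_\grarm}$, hence $\cnt_0(\tau)\le t_0\le H_0$. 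Summing, $\tau=\sum_{\grarm=0}^{\narms}\cnt_\grarm(\tau)\le\sum_{\grarm=0}^{\narms}H_\grarm$.

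The main obstacle is the conservative-test case: invoking the confidence bounds inside $\xi_{t_0}$ requires controlling the sign of $\cnt_0(t_0-1)-(1-\alpha)t_0$ so that $\ucb_0$ enters the estimate in the correct direction (handled by splitting off the initial regime $t\le 1/\alpha$, whose contribution is absorbed into the $\tfrac{6\narms+2}{\alpha\util_0}$ term), and it requires careful bookkeeping of the subgaussian and LIL constants so that the crude bounds collapse exactly to the stated $H_\grarm$. The separation claim and the non-baseline counting are comparatively routine once the anytime confidence intervals of Lemma~\ref{lem:chernoff2} are available.
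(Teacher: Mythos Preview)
Your skeleton is exactly the paper's: the ``separation claim'' is Lemma~\ref{lem:rad}, the non-baseline count bound is Lemma~\ref{lem:counts}, and the two-case analysis of the last baseline pull is Lemma~\ref{lem:baseline-bound}. The non-baseline part and Case~1 of the baseline part are handled correctly and essentially verbatim.

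The one place your route diverges from the paper is the conservative-test case, and there the divergence creates the very obstacle you flag. You substitute $\ucb_0(t_0{-}1)\to\util_0$ (up to $2\rad_0$) \emph{before} rearranging $\xi_{t_0}<0$, which forces you to control the sign of $\cnt_0(t_0{-}1)-(1-\alpha)t_0$; your proposed fix ``split off $t\le 1/\alpha$'' does not resolve it, because that sign can be negative for arbitrarily large $t_0$ (it only says $\card{\activepullS_{t_0-1}}>\alpha t_0-1$, which yields $t_0\lesssim\sum_{\grarm}H_\grarm/\alpha$, not $H_0$). The paper sidesteps the issue entirely: it keeps $\ucb_0$ symbolic, expands $t_0=\cnt_0(t_0{-}1)+\sum_{\grarm\ge1}\cnt_\grarm(t_0{-}1)+1$ inside the coefficient, and rearranges so that the left-hand side is $\alpha\,\cnt_0(t_0{-}1)\,\ucb_0(t_0{-}1)$ with a \emph{manifestly nonnegative} coefficient. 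Only then does it use $\ucb_0\ge\util_0$ on the left; the residual $(1-\alpha)\ucb_0(t_0{-}1)$ on the right is bounded by $2$ via a separate case split (if $\cnt_0(t_0{-}1)\le\max_\grarm H_\grarm$ we are done, otherwise $\rad_0$ is already small enough that $\ucb_0\le 2$). A second minor difference: the paper simply drops $\lcb_{\ell_{t_0}}(t_0{-}1)\ge 0$ (valid after clamping, since rewards are in $[0,1]$) rather than carrying a $2\rad_{\ell_{t_0}}$ term.

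If you adopt the paper's ``keep $\ucb_0$, rearrange first'' trick, your argument goes through and the bookkeeping collapses to exactly the stated constants; without it, the sign case you identify is a genuine gap.
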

Finally, we define the \emph{cost of exploration} as the potential reward lost because of exploration actions, in our case the cumulative reward lost, on average over users in the group:
\begin{align}\label{eq:costexp}
    \costexp_t = t\util_0 - \sum_{s=1}^t \util_{k_s}\,.
\end{align}
In the worst case, the following bound holds:
\begin{theorem}[Cost of exploration]\label{th:costexp} 
Under the assumptions and notation of Theorem \ref{th:banditsample}, let $\tau$ be the time step where \banditalg stops. With probability $1-\delta$, we have:
\begin{equation}
\begin{aligned}
     \costexp_\tau \leq \sum_{\grarm: \util_k<\util_0} (\util_0-\util_\grarm) H_k
\end{aligned}
\end{equation}
\end{theorem}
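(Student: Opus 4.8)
The plan is to reduce the statement to a per-arm bound on the number of pulls, which is essentially the same bound that already drives the duration analysis of Theorem~\ref{th:banditsample}. First I would rewrite the cost as a sum of per-arm contributions. Since $\costexp_\tau = \sum_{s=1}^\tau (\util_0 - \util_{k_s})$ and the rounds with $k_s=0$ contribute nothing, grouping by arm gives $\costexp_\tau = \sum_{k=1}^\narms (\util_0-\util_k)\,N_k(\tau)$, where $N_k(\tau)$ is the number of times arm $k$ has been pulled when \banditalg stops. Every arm $k$ with $\util_k \ge \util_0$ then contributes a nonpositive term (these are arms at least as good as the current recommender, whose exploration can only help the user — matching the remark that the cost can be negative under envy), so dropping those terms yields $\costexp_\tau \le \sum_{k:\util_k<\util_0}(\util_0-\util_k)\,N_k(\tau)$. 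It then remains to show $N_k(\tau)\le H_k$ for each such arm on a high-probability event.

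Next I would condition on the event $\mathcal{E}$ of Lemma~\ref{lem:bound-constraint} (probability at least $1-\delta$); in fact only the validity of the confidence intervals $\mathcal{E}_1$ is needed here. Suppose arm $k$ is actually pulled at round $t$, i.e. $k_t=\ell_t=k$. Two facts then hold: (i) the selection rule of \banditalg (lines 4--6) pulled $\ell_t$ rather than the baseline, which forces $\rad_0(t-1)\le \min_{k'\in S_{t-1}}\rad_{k'}(t-1)\le \rad_k(t-1)$; and (ii) $\ell_t$ was chosen from $S_{t-1}$, so $\ucb_k(t-1)>\lcb_0(t-1)+\epsilon$. Under $\mathcal{E}_1$ we have $\ucb_k(t-1)\le \util_k+2\rad_k(t-1)$ and $\lcb_0(t-1)\ge \util_0-2\rad_0(t-1)$, and since $\util_k<\util_0$ we have $\util_0+\epsilon-\util_k=\eta_k$; combining these gives $4\rad_k(t-1)\ge 2\rad_k(t-1)+2\rad_0(t-1)>\eta_k$. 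Because $\rad_k$ is strictly decreasing in $N_k$, this is an upper bound on $N_k(t-1)$: unfolding the law-of-iterated-logarithm radius of Lemma~\ref{lem:chernoff2} and resolving the resulting self-referential inequality $N_k(t-1)<\frac{64}{\eta_k^2}\log\!\big(\tfrac{2(\narms+1)}{\theta}\log((1+\omega)N_k(t-1))\big)$ (by substituting a crude over-estimate of $N_k(t-1)$ inside the inner logarithm) produces exactly the $\log^+\!\big(\tfrac{128(\narms+1)}{\theta\eta_k^2}\big)$ term appearing in $H_k$. Hence $N_k(t-1)\le H_k-1$, so $N_k(\tau)\le N_k(t-1)+1\le H_k$ (and this holds at every time step, not just at $\tau$). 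This is precisely the per-arm pull count already established in the proof of Theorem~\ref{th:banditsample}, so in the write-up I would simply invoke it.

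Finally, substituting $N_k(\tau)\le H_k$ into the bound of the first step gives $\costexp_\tau\le \sum_{k:\util_k<\util_0}(\util_0-\util_k)H_k$ on $\mathcal{E}$, which completes the proof. The statement inherits essentially all its difficulty from Theorem~\ref{th:banditsample}: once the per-arm bound $N_k(\tau)\le H_k$ is in hand, the cost bound is immediate. The only genuinely new ingredients are the sign argument that arms above the baseline can only decrease the cost, and the observation that the algorithm's extra rule of pulling the baseline whenever it lacks the tightest confidence interval costs only a constant factor ($4\rad_k$ instead of $2\rad_k$) in the elimination threshold. The fiddliest routine step is the self-referential unwinding of the LIL confidence width, but that computation is shared with the duration analysis and need not be redone.
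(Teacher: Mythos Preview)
Your proposal is correct and follows essentially the same route as the paper: rewrite $\costexp_\tau$ as $\sum_{k=1}^{\narms}(\util_0-\util_k)N_k(\tau)$, drop the nonpositive terms from arms with $\util_k\ge\util_0$, and then invoke the per-arm bound $N_k(\tau)\le H_k$ (the paper packages this as Lemma~\ref{lem:counts}). Your detailed re-derivation of that pull-count bound is accurate and matches the lemma, and you correctly note it can simply be cited from the duration analysis.
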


\paragraph{Certification of the exact criterion for all users }

The audit of the full system for the exact envy-freeness criterion consists in running \banditalg for every user. Since we are making multiple tests, we need to use a tighter confidence parameter for each user so that the confidence intervals simultaneously hold for all users. 
\begin{corollary}[Online 
certification]\label{cor:audit-sync} With probability at least $1-\delta$, running \banditalg simultaneously for all $\nusers$ users, each with confidence parameter $\delta' = \frac{\delta}{\nusers}$, we have:
\begin{enumerate}
    \item for all $\user \in [\nusers]$ \banditalg satisfies the constraints \eqref{eq:c-constraint},
    \item all users for which \banditalg returns $\epsilon$-NO ENVY are not $\epsilon$-envious of any other users, and all users for which \banditalg returns ENVY are envious of another user.
    \item For every user, the bounds on the duration of the experiment and the cost of exploration given by Theorems \ref{th:banditsample} and \ref{th:costexp} (using $\delta/\nusers$ instead of $\delta$) are simultaneously valid.
\end{enumerate}
\end{corollary}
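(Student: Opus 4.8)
The plan is to reduce the claim to $\nusers$ applications of the single-user guarantees (Theorems \ref{th:banditcorrectness}, \ref{th:banditsample} and \ref{th:costexp}) followed by a single union bound over users. The crucial observation is that, for a fixed user $\user$, all three theorems are established on the \emph{same} good event $\event$ from Lemma \ref{lem:bound-constraint}: the event on which every arm confidence interval ($\event_1$) and the conservative-constraint confidence interval ($\event_2$) are simultaneously valid. On $\event$ the correctness, safety, duration and cost conclusions all hold with no further randomness, so it suffices to control the probability that this single event fails for each user.

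First I would instantiate, for each user $\user \in \intint{\nusers}$, the event $\event^\user$ that all confidence intervals in $\user$'s run of \banditalg are valid, where that run uses confidence parameter $\conf' = \conf/\nusers$. By Lemma \ref{lem:bound-constraint} applied with $\conf'$ in place of $\conf$, we have $\pr{}{\event^\user} \geq 1 - \conf/\nusers$. Conditioned on $\event^\user$, the proofs of Theorems \ref{th:banditcorrectness}, \ref{th:banditsample} and \ref{th:costexp} (with $\conf'$ substituted for $\conf$) give, deterministically, that \banditalg run on user $\user$ (i) satisfies the conservative constraint \eqref{eq:c-constraint} at every step, (ii) is correct, in the sense that an output of $\epsilon$-\noenvy implies $\user$ is not $\epsilon$-envious and an output of \envy implies $\user$ is envious, and (iii) stops within the duration bound of Theorem \ref{th:banditsample} while incurring at most the cost of Theorem \ref{th:costexp} (all evaluated at $\conf'$).

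Next I would take the union bound over the $\nusers$ users. Since $\pr{}{(\event^\user)^c} \leq \conf/\nusers$, we get $\pr{}{\bigcap_{\user=1}^{\nusers} \event^\user} \geq 1 - \sum_{\user=1}^{\nusers} \pr{}{(\event^\user)^c} \geq 1 - \nusers \cdot \frac{\conf}{\nusers} = 1 - \conf$. On the intersection $\bigcap_\user \event^\user$, the per-user conclusions of the previous paragraph hold simultaneously for every $\user \in \intint{\nusers}$, which is precisely items (1)--(3) of the corollary.

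The only point requiring care --- and the reason the constant does not degrade --- is that the union bound ranges over users alone, not over users times the three guarantees: because correctness, duration and cost are all certified on the \emph{common} event $\event^\user$ rather than on three separate $(1-\conf')$-events, splitting $\conf$ as $\conf/\nusers$ (rather than, say, $\conf/(3\nusers)$) already suffices. I would also note that no independence between users' reward streams is needed, since the union bound $\pr{}{\bigcup_\user (\event^\user)^c} \leq \sum_\user \pr{}{(\event^\user)^c}$ holds regardless of how the runs are coupled; simultaneity of the runs only affects the realized durations, not the validity of the bounds.
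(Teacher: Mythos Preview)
Your proposal is correct and matches the paper's own argument: the paper simply observes that running each instance with confidence $\delta/\nusers$ makes all confidence intervals simultaneously valid with probability $1-\delta$ by a union bound, and since correctness, duration and cost are all proved on that common event $\event$, the three conclusions hold simultaneously for every user. Your write-up is in fact more explicit than the paper's one-sentence proof, and your remarks about why $\delta/\nusers$ (rather than $\delta/(3\nusers)$) suffices and why no independence is needed are accurate elaborations of the same idea.
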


For the certification of the probabilistic envy-freeness criterion, we refer to Theorem \ref{th:audit-proba} in the main paper, which we prove in App. \ref{app:proof-audit-proba}.

\subsection{Proof of Theorem \ref{th:banditcorrectness}}\label{app:proof-correct}

\begin{proof}
We assume that event $\event$ holds true. Then all confidence intervals are valid, i.e., for all $\grarm = 0,..., \narms$, $\lcb_\grarm(t) \leq \util_\grarm \leq \ucb_\grarm(t)$, and $\sumActive\util_{\grarm_s} \geq \sumActive \rew_s - \Phi(t) $. 

Let $Z_t$ be the safety budget, defined as $Z_t = \sum_{s=1}^t \util_{\grarm_s} - (1 - \alpha) \util_0 t$, so that the conservative constraint \eqref{eq:c-constraint} is equivalent to $\forall t, Z_t \geq 0$. We have $Z_t = \sumActive \util_{\grarm_s} + \util_{\grarm_t} +  (N_0(t-1)-(1 - \alpha)t) \util_0 $.
Therefore, $\xi_t$ (eq. \eqref{eq:xi}) is a lower bound on the safety budget $Z_t$ if $\ell_t$ is played. By construction of the algorithm, the safety constraint \eqref{eq:c-constraint} is immediately satisfied since a pull that could violate it is not permitted.

By the validity of confidence intervals under $\event$, if \banditalg stops because of the first condition, then $\exists \grarm, \util_\grarm > \util_0$. Therefore $0$ is not $\epsilon$-envious of $\grarm$ and \banditalg is correct.

If \banditalg stops because of the second condition, i.e.,  $\forall \grarm, \ucb_\grarm(t) \leq \lcb_0(t) + \epsilon$, then $\forall \grarm, \util_\grarm \leq \util_0 + \epsilon$.  Therefore $0$ is not envious and \banditalg is correct.

Since $\pr{}{\event} \geq 1-\delta$, \banditalg satisfies the safety constraint and is correct with probability $\geq 1 - \delta$.

\end{proof}

\subsection{Proofs of Theorem \ref{th:banditsample} and Theorem \ref{th:costexp}}\label{app:proof-sample-cost}

\paragraph{Notation} For conciseness, we use $\tilde{\narms} = \narms+1$, and
\begin{equation*}
\begin{aligned}
    &\psi_\grarm(t) = 2\sigma^2(1+\sqrt{\omega})^2(1+\omega)\log\left(\frac{2\tilde{\narms}}{\theta}\log((1+\omega)\cnt_\grarm(t))\right),\\
    &~~~\text{so that~} \rad_k(t) = \sqrt{\frac{\psi_k(t)}{N_\grarm(t)}}.
\end{aligned}
\end{equation*}

We shall also use $\Gamma_\omega = 2\sigma^2(1+\sqrt{\omega})^2(1+\omega)$. We use the convention $\psi_\grarm(t) = 0$ when $\cnt_\grarm(t) = 0$, and set  $\rad_\grarm(t)$ to some value strictly larger than when $\cnt_\grarm(t) = 1$.

We remind that $\omega \in (0,1), \quad\theta = \log(1+\omega)\big(\frac{\omega\conf}{2(2+\omega)}\big)^{\frac{1}{1+\omega}}$ and $\eta_\grarm = \max( \util_\grarm-\util_0, \util_0 + \epsilon - \util_\grarm)$. We denote by $\eta_{\min} = \min_{\grarm\in\intint{\narms}} \eta_\grarm$.

Finally, we notice that under event $\mathcal E$ (as defined in Sec.~\ref{sec:lemma}), we have for all $\grarm \in \{0,\ldots, \narms\}$ and all $t$: 
\begin{equation}\label{eq:intervals}
    \util_\grarm + 2 \rad_\grarm(t) \geq \ucb_\grarm(t) \geq \util_\grarm \geq \lcb_\grarm(t) \geq \util_\grarm - 2\rad_\grarm(t). 
\end{equation}

\begin{lemma}\label{lem:rad} Under event $\event$, for every $\grarm\in\intint{\narms}$, if $\grarm$ is pulled at round $t$, then $4 \rad_\grarm(t) \geq \eta_\grarm$.
\end{lemma}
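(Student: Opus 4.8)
The plan is the standard ``a pull certifies that the confidence interval is still wide'' argument for elimination-style bandit algorithms, carried out under the good event $\event$, where the sandwich inequalities \eqref{eq:intervals} hold. The first step is to extract from ``$\grarm$ is pulled at round $t$'' exactly three facts about the state of Alg.~\ref{alg:bandit-sync} just before that pull. Since a non-baseline pull happens only via lines 3--6 with $k_t=\ell_t=\grarm$: (a) $\grarm\in S_{t-1}$, so by the definition of $S_{t-1}$ (line 7 at round $t-1$) we have $\ucb_\grarm(t-1)>\lcb_0(t-1)+\epsilon$; (b) the ``switch to the baseline'' test failed, which in particular forces $\rad_0(t-1)\le\min_{k\in S_{t-1}}\rad_k(t-1)\le\rad_\grarm(t-1)$ (the $\xi_t\ge0$ half of that test is not needed here); and (c) \banditalg did not return \envy at round $t-1$, so $\lcb_k(t-1)\le\ucb_0(t-1)$ for every $k\in S_{t-1}$, in particular $\lcb_\grarm(t-1)\le\ucb_0(t-1)$. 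A short preliminary observation makes (c) unconditional: at $t=1$ one checks directly from \eqref{eq:xi} that $\xi_1<0$, so the baseline is always pulled first; hence a non-baseline pull forces $t\ge2$ and also $N_0(t-1)\ge1$, which is what keeps $\rad_0(t-1)$ well-behaved in (b).

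The second step plugs \eqref{eq:intervals} into (a) and (c), each time using (b) to replace $\rad_0$ by $\rad_\grarm$. From (a), $\util_\grarm+2\rad_\grarm(t-1)\ge\ucb_\grarm(t-1)>\lcb_0(t-1)+\epsilon\ge\util_0-2\rad_0(t-1)+\epsilon\ge\util_0-2\rad_\grarm(t-1)+\epsilon$, i.e.\ $4\rad_\grarm(t-1)>\util_0+\epsilon-\util_\grarm$. From (c), $\util_\grarm-2\rad_\grarm(t-1)\le\lcb_\grarm(t-1)\le\ucb_0(t-1)\le\util_0+2\rad_0(t-1)\le\util_0+2\rad_\grarm(t-1)$, i.e.\ $4\rad_\grarm(t-1)\ge\util_\grarm-\util_0$. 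Taking the maximum of the two right-hand sides yields $4\rad_\grarm(t-1)\ge\eta_\grarm$, which is the claim with the radius read at the decision point of round $t$. The essential ingredient is that (b) is precisely the extra rule ``pull the baseline whenever it does not have the tightest interval'' introduced after \eqref{eq:xi}, and it is exactly what lets the final bound be phrased in terms of $\rad_\grarm$ alone rather than $\rad_0+\rad_\grarm$.

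The last step, to get the radius indexed by $t$ rather than $t-1$, is to transfer $4\rad_\grarm(t-1)\ge\eta_\grarm$ to $\rad_\grarm(t)$: since $\psi_\grarm$ is nondecreasing in the pull count and $N_\grarm(t)=N_\grarm(t-1)+1$, on the event $N_\grarm(t-1)\ge1$ one controls $\rad_\grarm(t)^2=\psi_\grarm(t)/N_\grarm(t)$ in terms of $\rad_\grarm(t-1)^2$; and one treats the first pull of $\grarm$ (where $N_\grarm(t-1)=0$ and the conventions on $\rad$ take over) as a separate base case, using $\eta_\grarm\le1+\epsilon$ together with the explicit value of $\rad_\grarm$ at a single observation and the fact that $\theta$ is chosen so that this value is bounded below by an absolute constant. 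I expect this bookkeeping --- not the sandwich argument --- to be the only delicate part: one has to check that the $t{-}1\to t$ monotonicity step and the first-pull base case do not consume more than the slack hidden in the constant $4$.
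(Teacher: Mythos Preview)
Your first two steps are exactly the paper's argument: it derives the same two inequalities $\ucb_\grarm(t-1)>\lcb_0(t-1)+\epsilon$ and $\lcb_\grarm(t-1)\le\ucb_0(t-1)$ from ``$\grarm$ pulled at $t$'', combines them with \eqref{eq:intervals} and with $\rad_0(t-1)\le\rad_\grarm(t-1)$, and concludes $4\rad_\grarm(t-1)\ge\eta_\grarm$. Your extra care about $t=1$ and $N_0(t-1)\ge1$ is a welcome clarification that the paper leaves implicit.

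The only divergence is your Step~3. You are right that the lemma, as literally stated, is indexed at $t$, while the argument naturally lands at $t-1$; the paper's own proof has the same mismatch (it mixes $t$ and $t-1$ and ends with the $t-1$ version). However, your proposed transfer does not recover the constant~$4$: from $N_\grarm(t)=N_\grarm(t-1)+1$ and the monotonicity of $\psi_\grarm$ you only get $\rad_\grarm(t)^2\ge \rad_\grarm(t-1)^2\cdot\frac{N_\grarm(t-1)}{N_\grarm(t-1)+1}\ge\tfrac{1}{2}\rad_\grarm(t-1)^2$, hence $4\rad_\grarm(t)\ge\eta_\grarm/\sqrt{2}$, not $\eta_\grarm$; the ``slack hidden in the constant $4$'' you hope for is not there. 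Fortunately this step is unnecessary: the lemma is only invoked in the proof of Lemma~\ref{lem:counts} in the form $4\rad_\grarm(t-1)\ge\eta_\grarm$, so the $t-1$ version you (and the paper) actually prove is what is used. Treat the index $t$ in the statement as a typo for $t-1$ and drop Step~3.
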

\begin{proof}[Proof of Lemma \ref{lem:rad}]
Since $\grarm$ is pulled at $t$, the two following inequalities hold:
\begin{align}\label{eq:last1}
    \ucb_\grarm(t-1) > \lcb_0(t-1) + \epsilon
\end{align}
\begin{align}\label{eq:last2}
    \lcb_\grarm(t-1) \leq \ucb_0(t-1) 
\end{align}
We prove them by contradiction. If \eqref{eq:last1} does not hold, then $\grarm$ should be discarded from the active set at time $t-1$, and therefore cannot be pulled at $t$. Likewise, if \eqref{eq:last2} does not hold, then the algorithm stops at $t-1$, so $\grarm$ cannot be pulled at $t$.

Using \eqref{eq:last1} and \eqref{eq:intervals}, we have: 
\begin{equation*}
    \util_\grarm + 2 \rad_\grarm(t-1) \geq \ucb_\grarm(t-1) > \lcb_0(t-1) + \epsilon \geq \util_0 - 2\rad_0(t-1) + \epsilon.
\end{equation*}
Since $0$ was not pulled at time $t$, we also have $\rad_0(t-1) \leq \rad_\grarm(t-1)$, hence $4 \rad_\grarm(t-1) \geq \util_0 + \epsilon - \util_\grarm.$

Using \eqref{eq:last2} and \eqref{eq:intervals} we have  $
    \util_k - 2\rad_k(t) \leq \util_0 + 2\rad_0(t)$ and since $\rad_0(t) \leq \rad_k(t)$, we obtain $4 \rad_\grarm(t-1) \geq \util_\grarm - \util_0$.
    
\end{proof}
In the following lemma, we recall that we denote $\log^+(.) = \max(1, \log(.))$.
\begin{lemma}\label{lem:counts} Under event $\mathcal E$, $\forall \tau>0, \forall \grarm\in\intint{\narms}$, we have
\begin{align*}
    &N_k(\tau) \leq H_k \text{ ~~~~ with} \\
    &H_\grarm = 1+
    \frac{32\sigma^2(1+\sqrt{\omega})^2(1+\omega)}{\eta_\grarm^2}\times\\
    &\log\bigg(\frac{2(\narms+1)\log^+\big(\frac{64(\narms + 1)\sigma^2(1+\sqrt{\omega})^2(1+\omega)^2}{\theta\eta_\grarm^2}\big)}{\theta}\bigg)
\end{align*}
\end{lemma}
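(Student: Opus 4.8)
Fix an arm $\grarm\in\intint{\narms}$ and a time horizon $\tau>0$; I want to bound $N_\grarm(\tau)$, the number of pulls of $\grarm$ up to $\tau$. If $N_\grarm(\tau)\le 1$ there is nothing to prove since $H_\grarm>1$, so assume $N_\grarm(\tau)\ge 2$ and let $t^\star\le\tau$ be the \emph{last} round at which $\grarm$ is pulled. Because $N_\grarm$ is non-decreasing and increases only at pulls of $\grarm$, we have $N_\grarm(\tau)=N_\grarm(t^\star)$ and $N_\grarm(t^\star-1)=N_\grarm(\tau)-1\ge 1$. Now I invoke Lemma~\ref{lem:rad} --- concretely the inequality $4\rad_\grarm(t^\star-1)\ge\eta_\grarm$ established in its proof --- and substitute $\rad_\grarm(t)^2=\psi_\grarm(t)/N_\grarm(t)$ together with $\psi_\grarm(t)=\Gamma_\omega\log\!\big(\tfrac{2\tilde{\narms}}{\theta}\log((1+\omega)N_\grarm(t))\big)$. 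Writing $n:=N_\grarm(\tau)-1$ and $A:=\tfrac{16\Gamma_\omega}{\eta_\grarm^2}=\tfrac{32\sigma^2(1+\sqrt{\omega})^2(1+\omega)}{\eta_\grarm^2}$ --- which is exactly the leading coefficient appearing in $H_\grarm$ --- this yields the self-referential bound $n\le g(n)$, where $g(n):=A\log\!\big(\tfrac{2\tilde{\narms}}{\theta}\log((1+\omega)n)\big)$. The ``$1+$'' in the definition of $H_\grarm$ is precisely there to absorb this off-by-one between $N_\grarm(\tau)$ and $n$.

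It then remains to deduce from $n\le g(n)$ the explicit bound $n\le H_\grarm-1$. I would do this by showing that $g(m)<m$ for every $m>H_\grarm-1$, which forces $n\le H_\grarm-1$ and hence $N_\grarm(\tau)\le H_\grarm$. Two ingredients: (i) $g$ is \emph{slowly} increasing --- on $[H_\grarm-1,\infty)$ a chain-rule computation gives $g'(m)=A\big/\!\big(m\,\log((1+\omega)m)\,\log(\tfrac{2\tilde{\narms}}{\theta}\log((1+\omega)m))\big)<1$, using that $m\ge H_\grarm-1>A$ while, by the choice of $\theta$ (which ensures $\theta\le\log(1+\omega)$, hence $\tfrac{2\tilde{\narms}}{\theta}\log((1+\omega)m)\ge 2\tilde{\narms}\ge 4$ for all $m\ge1$), both logarithmic factors are well-defined and at least $1$. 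Therefore $m\mapsto m-g(m)$ is strictly increasing on $[H_\grarm-1,\infty)$, so it suffices to verify $g(H_\grarm-1)\le H_\grarm-1$. (ii) For this last inequality I would argue $\log((1+\omega)(H_\grarm-1))\le\log^{+}\!\big(\tfrac{2\tilde{\narms}(1+\omega)A}{\theta}\big)$ --- which follows, via $\log z\le z$, from $H_\grarm-1\le\tfrac{2\tilde{\narms}A}{\theta}$, i.e. from $\log\!\big(\tfrac{2\tilde{\narms}}{\theta}\log^{+}(\tfrac{2\tilde{\narms}(1+\omega)A}{\theta})\big)\le\tfrac{2\tilde{\narms}}{\theta}$ --- and then substituting this into the \emph{outer} logarithm of $g(H_\grarm-1)$ reproduces exactly $H_\grarm-1$, since $\tfrac{2\tilde{\narms}(1+\omega)A}{\theta}=\tfrac{64\tilde{\narms}\,\sigma^2(1+\sqrt{\omega})^2(1+\omega)^2}{\theta\,\eta_\grarm^2}$, the argument of the $\log^{+}$ in $H_\grarm$.

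The monotonicity step (i) is routine. I expect the real work to be in step (ii): making all the $\sigma$, $(1+\sqrt{\omega})$, $(1+\omega)$, $\tilde{\narms}$ and $\theta$ factors collapse cleanly onto the stated form of $H_\grarm$, and checking the inequalities of the form $\log(\tfrac{2\tilde{\narms}}{\theta}\log^{+}(\cdot))\le\tfrac{2\tilde{\narms}}{\theta}$ (and the attendant $\log$-versus-$\log^{+}$ edge cases) that make the substitutions legitimate. These hold comfortably in the regime of interest --- $\theta\le\log(1+\omega)<1$ forces $\tfrac{2\tilde{\narms}}{\theta}\ge 2\tilde{\narms}\ge 4$, so $e^{2\tilde{\narms}/\theta}$ dominates the polynomial-in-$1/\eta_\grarm$ quantities, and the paper's standing bound $\eta_\grarm\ge\epsilon/2$ keeps $A$ itself bounded --- but it is here that a careless factor of $2$ would land one on a slightly different (though still valid) $H_\grarm$.
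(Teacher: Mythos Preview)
Your proposal is essentially the paper's own proof: take the last pull time of arm $\grarm$, invoke Lemma~\ref{lem:rad} to obtain the self-referential bound $N_\grarm(t-1)\le \frac{16\Gamma_\omega}{\eta_\grarm^2}\log\!\big(\tfrac{2\tilde{\narms}}{\theta}\log((1+\omega)N_\grarm(t-1))\big)$, solve that implicit inequality for $N_\grarm(t-1)$, and add $1$. The only difference is in how the implicit inequality is resolved: the paper simply quotes the ready-made implication ``$\frac{1}{t}\log\!\big(\frac{\log((1+\omega)t)}{\Omega}\big)\ge c\Rightarrow t\le \frac{1}{c}\log\!\big(\frac{\log((1+\omega)/(c\Omega))}{\Omega}\big)$'' from \citet{jamieson2014lil} (with $\Omega=\theta/(2\tilde{\narms})$, $c=\eta_\grarm^2/(16\Gamma_\omega)$), whereas you re-derive the same conclusion by the boundary check $g(H_\grarm-1)\le H_\grarm-1$ together with $g'<1$---your step~(ii) is precisely the content of that cited inequality, so the two arguments coincide.
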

\begin{proof}
Let $\tau>0$, $\grarm\in\intint{\narms}$, and let $t\leq \tau$ be last time step before $\tau$ at which $\grarm$ was pulled. If such a $t$ does not exist, then $N_\grarm(\tau) = 0$ and the result holds. In all cases, we have $N_\grarm(t) = N_\grarm(\tau)$. 

We consider $t>0$ from now on.

By Lemma \ref{lem:rad}, we have $4\rad_\grarm(t-1)\geq \eta_\grarm$, and thus $N_\grarm(t-1) \leq \frac{16\psi_k(t-1)}{\eta_k^2}$, which writes, if $\cnt_\grarm(t) >0$:
\begin{equation}\label{eq:NasPsi}
\begin{aligned}
      N_\grarm(t-1) &\leq \frac{16\psi_k(t-1)}{\eta_k^2}\\
    &\leq \frac{16 \Gamma_\omega}{\eta_\grarm^2}\log\left(\frac{2\tilde{\narms}}{\theta}\log\left((1+\omega)\cnt_\grarm(t-1)\right)\right)\,.  
\end{aligned}
\end{equation}
Using $ \frac{1}{t}\log\left(\frac{\log((1+\omega)t)}{\Omega}\right) \geq c \Rightarrow t \leq \frac{1}{c} \log\left(\frac{\log((1+\omega)/c\Omega)}{\Omega}\right)$ (see Equation (1) in \cite{jamieson2014lil}) with $\Omega = \frac{\theta}{2\tilde{\narms}}$ and $c = \frac{\eta_\grarm^2}{16 \Gamma_\omega}$, we obtain
\begin{equation}
    N_\grarm(t-1) \leq \frac{16 \Gamma_\omega}{\eta_\grarm^2} \log\big(\frac{2\tilde{\narms}}{\theta}\log\big( \frac{(1+\omega)32\tilde{\narms} \Gamma_\omega}{\theta \eta_k^2}\big) \big)
\end{equation}
Since $N_\grarm(t) = N_\grarm(t-1) + 1$, using $\log^+$ instead of $\log$ inside to deal with the case $\cnt_\grarm(t-1)=0$ gives the desired result.
\end{proof}

\begin{lemma}\label{lem:baseline-bound}
Under event $\event$, at every time step $\tau$, we have
\begin{align*}
    \cnt_0&(\tau)\leq \max\bigg(\max_{\grarm\in\intint{K}} H_\grarm, \frac{6K+ 2}{\alpha\util_0} \\
    & + \sum_{k=1}^K \frac{64 \sigma^2(1+\sqrt{\omega})^2(1+\omega)\log\left(\frac{2(\narms+1)\log((1+\omega)H_\grarm)}{\theta}\right)}{\alpha \util_0 \eta_\grarm}\bigg)
\end{align*}

\end{lemma}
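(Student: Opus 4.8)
The plan is to classify each baseline pull according to which clause of line~4 of Alg.~\ref{alg:bandit-sync} triggers it, letting the two clauses produce the two terms of the $\max$. Fix $\tau$; if the baseline is never pulled up to $\tau$ then $\cnt_0(\tau)=0$ and we are done, so let $t^*\le\tau$ be the last step at which arm~$0$ is pulled, and note $\cnt_0(\tau)=\cnt_0(t^*-1)+1$. If $\cnt_0(\tau)\le\max_{\grarm\in\intint{\narms}}H_\grarm$ we are done, which accounts for the first term of the bound; so assume $\cnt_0(\tau)>\max_\grarm H_\grarm$. Since $\cnt_0(t^*-1)$ and every $\cnt_\grarm(t^*-1)$ are integers, $\cnt_0(\tau)>\max_\grarm H_\grarm$ together with $\cnt_\grarm(t^*-1)\le H_\grarm$ (Lemma~\ref{lem:counts}, valid on $\event$) forces $\cnt_0(t^*-1)\ge\cnt_\grarm(t^*-1)$ for every arm~$\grarm$; since $\rad$ is the same strictly-decreasing function of the pull count for all arms, this yields $\rad_0(t^*-1)\le\rad_\grarm(t^*-1)$ for all $\grarm$, so the first clause of line~4 is false, and therefore the baseline pull at $t^*$ must be caused by $\xi_{t^*}<0$.

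The key computation is to lower-bound $\xi_{t^*}$ on $\event$ (eq.~\eqref{eq:xi} with $\tau=t^*-1$) and turn $\xi_{t^*}<0$ into a bound on $\cnt_0(t^*-1)$. I would use the $\event_2$ inequality to write $\sum_{s\in\activepullS_{t^*-1}}\rew_s\ge\sum_{s\in\activepullS_{t^*-1}}\util_{\grarm_s}-\Phi(t^*)$, bound $\util_{\grarm_s}\ge\util_0-\eta_{\grarm_s}$ (which holds since $\eta_\grarm\ge\util_0+\epsilon-\util_\grarm\ge\util_0-\util_\grarm$), substitute $t^*-1=\cnt_0(t^*-1)+\card{\activepullS_{t^*-1}}$ with $\card{\activepullS_{t^*-1}}=\sum_\grarm\cnt_\grarm(t^*-1)$, and apply the confidence bounds $\ucb_0(t^*-1)\le\util_0+2\rad_0(t^*-1)$ and $\lcb_{\ell_{t^*}}(t^*-1)\ge-\rad_{\ell_{t^*}}(t^*-1)$. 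This gives an inequality of the form
\[
0>\xi_{t^*}\;\ge\;\alpha\util_0\,\cnt_0(t^*-1)\;-\;\sum_{\grarm=1}^\narms\cnt_\grarm(t^*-1)\,\eta_\grarm\;-\;2\Phi(t^*)\;-\;2\,t^*\rad_0(t^*-1)\;-\;\rad_{\ell_{t^*}}(t^*-1)\;-\;(1-\alpha)\util_0 .
\]
It then remains to control the negative terms: $\sum_\grarm\cnt_\grarm(t^*-1)\eta_\grarm\le\sum_\grarm H_\grarm\eta_\grarm$ by Lemma~\ref{lem:counts}; $\Phi(t^*)\le\sum_\grarm\cnt_\grarm(t^*-1)\rad_\grarm(t^*-1)=\sum_\grarm\sqrt{\cnt_\grarm(t^*-1)\,\psi_\grarm(t^*-1)}\le\sum_\grarm\sqrt{H_\grarm\,\psi_\grarm(H_\grarm)}$ by Lemma~\ref{lem:bound-constraint}; and, splitting $t^*=\cnt_0(t^*-1)+\sum_\grarm\cnt_\grarm(t^*-1)+1$ and using $\rad_0(t^*-1)\le\rad_\grarm(t^*-1)$,
\[
t^*\rad_0(t^*-1)\;\le\;\rad_0(t^*-1)\;+\;\sqrt{\cnt_0(t^*-1)\,\psi_0(t^*-1)}\;+\;\sum_{\grarm=1}^\narms\sqrt{H_\grarm\,\psi_\grarm(H_\grarm)} .
\]
Substituting yields $\alpha\util_0\,\cnt_0(t^*-1)\le (6\narms+2)+\sum_\grarm\frac{64\,\sigma^2(1+\sqrt{\omega})^2(1+\omega)\log(\cdots)}{\eta_\grarm}$ once the self-referential term $2\sqrt{\cnt_0(t^*-1)\,\psi_0(t^*-1)}$ has been moved to the left-hand side; dividing by $\alpha\util_0$ gives the second term of the $\max$.

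The delicate point is this last absorption. The estimate $\xi_{t^*}$ entangles the (estimated) safety budget with three confidence widths ($\Phi$, $\rad_0$, and $\rad_{\ell_{t^*}}$), and the real nuisance is that the gap $\ucb_0(t^*-1)-\util_0$ is multiplied by a coefficient of magnitude up to $t^*\le(\narms+2)\,\cnt_0(t^*-1)$, which is exactly what produces the self-normalized term $\sqrt{\cnt_0(t^*-1)\,\psi_0(t^*-1)}$ depending on the very quantity being bounded. Absorbing it cleanly --- and hence obtaining a bound that scales as $1/(\alpha\util_0)$ and not as $1/(\alpha\util_0)^2$ --- relies on having first established the regime $\cnt_0(t^*-1)\ge\max_\grarm H_\grarm$, where $\rad_0$ is the smallest radius and $\psi_0$ grows only like $\log\log$ of the count. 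The remaining numerical bookkeeping (the constant $6\narms+2$ and the factor $64$) is then routine: it collects the $+1$ appearing in each $H_\grarm$, the two copies of $\sum_\grarm\sqrt{H_\grarm\psi_\grarm(H_\grarm)}$ arising from $\Phi(t^*)$ and from $t^*\rad_0(t^*-1)$, and the leftover $\psi_0$ term.
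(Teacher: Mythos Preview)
Your high-level strategy---split on which clause of line~4 triggers, and in the regime $\cnt_0>\max_k H_k$ analyse the inequality $\xi_{t^*}<0$---is the paper's. The gap is in how you manipulate $\xi_{t^*}<0$: the step you call ``delicate'' is not actually resolved by the regime assumption.

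By applying $\ucb_0\le\util_0+2\rad_0$ when lower-bounding $(N_0-(1-\alpha)t^*)\,\ucb_0$, you produce the term $-2t^*\rad_0$, which contains $-2N_0\rad_0=-2\sqrt{N_0\psi_0}$. ``Moving it to the left-hand side'' gives
\[
\alpha\util_0\,N_0-2\sqrt{N_0\psi_0}\;\le\;C,
\]
a quadratic inequality in $\sqrt{N_0}$ whose solution is $N_0\le O\!\big(C/(\alpha\util_0)+\psi_0/(\alpha\util_0)^2\big)$. The regime $N_0>\max_k H_k$ only yields $\rad_0\le\eta_{\min}/4$, a bound in terms of the \emph{gaps}; since $\psi_0$ contains no factor of $\alpha\util_0$, the extra $1/(\alpha\util_0)$ does not cancel and you obtain a $1/(\alpha\util_0)^2$ dependence, strictly weaker than the lemma.

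The paper avoids this by \emph{not} expanding $\ucb_0$ on the $N_0$-side. It rewrites $\xi_t<0$ as
\[
\alpha\,N_0(t{-}1)\,\ucb_0(t{-}1)\;<\;(1-\alpha)\ucb_0(t{-}1)+(1-\alpha)\sum_{k\ge1}N_k(t{-}1)\,\ucb_0(t{-}1)-\!\!\sum_{s\in A_{t-1}}\!\! r_s+\Phi(t),
\]
keeping $\ucb_0$ as the multiplier of $N_0$ on the left. The expansion $\ucb_0\le\util_0+2\rad_0\le\util_0+2\rad_k$ is applied only to the right-hand terms $N_k\,\ucb_0$ with $k\ge1$, where it is harmless because those $N_k$ are already bounded by $H_k$. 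Only at the very end does one use $\ucb_0\ge\util_0$ on the left and, under the regime, $(1-\alpha)\ucb_0\le 2$ on the right. No term of the form $\sqrt{N_0\psi_0}$ is ever created, and the $1/(\alpha\util_0)$ scaling falls out directly.
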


\begin{proof} Let $\tau>0$ and $t\leq \tau$ the last time $0$ was pulled before $\tau$. We assume $t>0$.

\paragraph{Case 1:} $0$ was pulled because $\rad_0(t-1) > \min_{\grarm \in \intint{\narms}} \rad_\grarm(t-1)$.

Then $\cnt_0(\tau) = \cnt_0(t-1) + 1 
\leq 1+ \max\limits_{\grarm \neq 0} \cnt_\grarm(t-1) $.

By lemma \ref{lem:rad}, we thus have 
$\cnt_0(\tau) \leq \max_{k\in\intint{\narms}} H_\grarm$.

\paragraph{Case 2:} $0$ was pulled because $\xi_t < 0$.
Here the proof follows similar steps as that of Theorem 5 in \cite{wu2016conservative}.
\begin{align*}
    \sumActive \rew_s - \Phi(t) + &\lcb_{\ell_t}(t{-}1)\\
    &+ (N_0(t{-}1) - (   1-\alpha) t) \ucb_0(t{-}1) < 0
\end{align*}
We drop $\lcb_{\ell_t}(t{-}1)$, replace $t$ by $\sum_{k=0}^{\narms} N_\grarm(t-1) + 1$ and rearrange terms to obtain:
\begin{align}\label{eq:cons1}
    \alpha N_0&(t-1) \ucb_0(t-1) \leq  (1 - \alpha ) \ucb_0(t-1) \nonumber\\ &+ ( 1-\alpha) \sumArm N_\grarm(t-1) \ucb_0(t-1) - \sumActive \rew_s + \Phi(t)
\end{align}
Since we have $\rad_0(t-1) \leq \rad_\grarm(t-1)$ (otherwise we would be in case 1), and $\activepullS_{t-1} = \sumArm \cnt_\grarm(t-1)$, we bound the the sum over arms in \eqref{eq:cons1}: 
\begin{align*}
\sumArm N_\grarm(t-1) &\ucb_0(t-1) \\
&\leq \sumArm N_\grarm(t-1) (\util_0 + 2\rad_0(t-1))\\
& \leq \sumArm N_\grarm(t-1) (\util_0 + 2\rad_\grarm(t-1))\\
& = \sumActive \util_0 + \sumArm 2 \rad_\grarm(t-1) N_\grarm(t-1).
\end{align*}

Using Lemma \ref{lem:bound-constraint}, we also bound $-\sumActive \rew_s  \geq \sumActive \util_s  + \Phi(t)$ (under $\mathcal{E}$).

Plugging this into \eqref{eq:cons1} gives:
\begin{align*}
    \alpha N_0(t-1) & \ucb_0(t-1) \leq  (1 - \alpha ) \ucb_0(t-1) \nonumber \\
    &+  2(1-\alpha)\sumArm N_\grarm(t-1) \rad_\grarm(t-1) \\& + \sumActive ((1-\alpha)\util_0 - \util_{\grarm_s}) + 2\Phi(t).
\end{align*}

Recall that $\Phi(t) = \min(\sumArm \cnt_\grarm(t-1)\rad_\grarm(t-1), \phi(t)),$ and therefore $\Phi(t) \leq \sumArm \cnt_\grarm(t-1)\rad_\grarm(t-1).$

Using $\util_0 - \util_{\grarm_s} \leq \eta_{\grarm_s}$ and $\sumActive \eta_{\grarm_s} = \sumArm \cnt_\grarm(t-1) \eta_{\grarm}$, we obtain:

\begin{align*}
    \alpha N_0(t-1) & \ucb_0(t-1) \leq  (1 - \alpha ) \ucb_0(t-1)  \\& + \sumArm \bigg((\eta_\grarm - \alpha \util_0)\cnt_\grarm(t-1) \\
    &+ 4\sqrt{\Psi_\grarm(t-1) \cnt_\grarm(t-1)} \bigg).
\end{align*}

We bound $\tmp_\grarm := (\eta_\grarm - \alpha \util_0)\cnt_\grarm(t-1) + 4\sqrt{\Psi_\grarm(t-1) \cnt_\grarm(t-1)}$.

Since  \eqref{eq:NasPsi} $N_k(t-1)\leq \frac{16\psi_k(t-1)}{\eta_k^2}+1$ , and $\eta_\grarm - \alpha \util_0 \leq \eta_\grarm$, we have 
\begin{align*}
    \tmp_k \leq \frac{16\psi_k(t-1)}{\eta_k} + \eta_k + 4\sqrt{\frac{16\psi_k(t-1)^2}{\eta_k^2} + \psi_k(t-1)}
\end{align*}
Using $\sqrt{(\frac{x}{z})^2 + x} \leq \frac{x}{z} + \frac{z}{2}$ for $x\geq0, z >0$, with $x = 4\psi_k(t-1)$ and $z = \eta_\grarm$, we obtain:
\begin{align}\label{eq:f1}
\tmp_\grarm &\leq \frac{16 \psi_k(t-1)}{\eta_\grarm} + \frac{16 \psi_k(t-1)}{\eta_\grarm} + 3\eta_k \nonumber\\
&\leq \frac{32 \psi_k(t-1)}{\eta_\grarm} + 3\eta_k.
\end{align}
Using $\psi_\grarm(t-1) = \Gamma_\omega\log\left(\frac{2\tilde{\narms}}{\theta}\log((1+\omega)\cnt_\grarm(t-1))\right)$ if $\cnt_\grarm(t)>0$ and $N_k(t-1) \leq H_k$ by Lemma \ref{lem:counts}, we obtain
\begin{align*}
    \tmp_\grarm \leq \frac{32 \Gamma_\omega}{\eta_\grarm}\log\left(\frac{2\tilde{\narms}}{\theta}\log\left((1+\omega)H_\grarm\right)\right) +3\eta_k\,.
\end{align*}
This bound is also valid when $\cnt_\grarm(t) >0$.

Going back to \eqref{eq:cons1}, and since $\util_0 \leq\ucb_0(t-1)$ under $\mathcal E$, we have (notice $\eta_k \leq 2$ since $\mu_\grarm\in[0,1]$ and $\epsilon\in[0,1]$):
\begin{equation}\label{eq:theend}
\begin{aligned}
     \alpha N_0(t-1) \util_0 \leq & (1 - \alpha ) \ucb_0(t-1) + 6K   \\
     & + \sumArm \frac{32 \Gamma_\omega}{\eta_\grarm}\log\left(\frac{2\tilde{\narms}}{\theta}\log\left((1+\omega)H_\grarm\right)\right).
\end{aligned}
\end{equation}


To bound the first term of the right-hand side, let us first notice that the final result holds if $\cnt_0(t-1) \leq \max_{k\in\intint{K}} H_k$. So we can assume $\cnt_0(t-1) > \max_{k\in\intint{K}} H_k$ from now on. By the definition of the $H_k$s (see above \eqref{eq:NasPsi}), this implies $\cnt_0(t-1) > \frac{16\psi_0(t-1)}{\eta_{\min}^2}$, which in turn implies $4\beta_0(t-1) \leq \eta_{\min}$.

We thus use $\ucb_0(t-1) \leq \util_0 + 2\rad_0(t-1) \leq \util_0 + \frac{\eta_{\min}}{2} \leq 2$, which gives the final result.

The result directly follows from \eqref{eq:theend}.
\end{proof}

The proof of  Theorem \ref{th:banditsample} follows from $\tau = \sumArm \cnt_\grarm(\tau) + \cnt_0(\tau)$, by setting $\omega=1$ for ease of reading, and $\sigma=\frac{1}{2}$ since Bernoulli variables are $\frac{1}{2}$-subgaussian (using Hoeffding's inequality \cite{hoeffding1963probability}).

We prove Corollary \ref{cor:audit-sync} from Theorem \ref{th:banditcorrectness} and Theorem \ref{th:banditsample}.



We now prove Theorem \ref{th:costexp}:
\begin{proof}
Since playing the baseline is neutral in the cost of exploration, it can be re-written as:
\begin{align*}
    \costexp_\tau = \sum_{\grarm=1}^\narms (\util_0 - \util_\grarm)N_\grarm(\tau) \leq \sum_{\grarm:\util_k<\util_0} (\util_0 - \util_\grarm)N_\grarm(\tau),
\end{align*}
where $\tau$ is the time the algorithm stops. Using Lemma \ref{lem:counts} to upper bound $N_\grarm(\tau)$, we obtain the result.
\end{proof}

Corollary \ref{cor:audit-sync} simply follows from the fact that by applying each algorithm with confidence $\delta/\nusers$, the confidence intervals are then simultaneously valid for all users with probability $1-\delta$, so all the correctness/duration/cost proofs holds for all groups simultaneously with probability $1-\delta$. For the statistical guarantees on certifying the probabilistic envy-freeness criterion, we provide the proof of Theorem \ref{th:audit-proba} in App. \ref{app:proof-audit-proba}.

\subsection{Proof of Theorem \ref{thm:all}}\label{app:proof-all}

Theorems \ref{th:banditcorrectness}, \ref{th:banditsample}, and \ref{th:costexp} are summarized in Theorem \ref{thm:all} in the main paper. We restate Theorem \ref{thm:all} and 
prove it below:

\begin{theorem*}
Let $\epsilon\in(0,1]$, $\alpha\in(0,1], \delta\in(0,\frac{1}{2})$ and \begin{equation*}
    \eta_\grarm = \max(\util_\grarm-\util_0, \util_0 + \epsilon - \util_\grarm) \text{~and~} h_\grarm = \max(1, \frac{1}{\eta_\grarm}).
\end{equation*} 
Using $\lcb,\ucb$ and $\Phi$ given in Lemmas~\ref{lem:chernoff2} and \ref{lem:bound-constraint}, \banditalg achieves the following guarantees with probability at least $1-\delta$:
\begin{itemize}[leftmargin=*]
    \item \banditalg is correct and satisfies the conservative constraint on the recommendation performance \eqref{eq:c-constraint}.
    \item The duration is in
    $\displaystyle O\bigg(\sum_{k=1}^K\frac{h_\grarm \log\big(\frac{\narms\log(\frac{\narms h_\grarm}{\delta \eta_\grarm})}{\delta}\big)}{\min(\alpha\util_0,\eta_k)}\bigg)$.
    \item The cost is in $ O\bigg(\mathlarger{\mathlarger{\sum}}\limits_{k:\util_k<\util_0}\!\!\frac{(\util_0 - \util_k) h_\grarm}{\eta_k}\log\big(\frac{\narms\log(\frac{\narms h_\grarm}{\delta\eta_k})}{\delta}\big)\bigg)$.
\end{itemize}
\end{theorem*}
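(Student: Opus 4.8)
The plan is to obtain Theorem~\ref{thm:all} as a direct corollary of the three sharper statements proved in Appendix~\ref{sec:proofs}: the correctness-and-safety claim of Theorem~\ref{th:banditcorrectness}, the duration bound $\tau\le\sum_{k=0}^{\narms}H_k$ of Theorem~\ref{th:banditsample}, and the cost bound $\costexp_\tau\le\sum_{k:\util_k<\util_0}(\util_0-\util_k)H_k$ of Theorem~\ref{th:costexp}. The first bullet of Theorem~\ref{thm:all} is then exactly Theorem~\ref{th:banditcorrectness} restated, so the only work is to specialize the $H_k$'s to our setting ($\sigma=\tfrac12$, since Bernoulli rewards are $\tfrac12$-subgaussian, and $\omega=1$, which fixes $\theta=\log(2)\sqrt{\delta/6}$ so that $1/\theta=\Theta(1/\sqrt\delta)=O(1/\delta)$), and then to fold the explicit constants and nested logarithms into the compact $O(\cdot)$ expressions in the statement.

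For the duration I would start from $\tau\le\sum_{k=0}^{\narms}H_k$ with, after the substitution, $H_k=1+\tfrac{64}{\eta_k^2}\log\big(\tfrac{2(\narms+1)\log^+(128(\narms+1)/(\theta\eta_k^2))}{\theta}\big)$ for $k\ge1$ and $H_0=\max\big(\max_{k}H_k,\ \tfrac{6\narms+2}{\alpha\util_0}+\sum_{k}\tfrac{256}{\alpha\util_0\eta_k}\log(\cdots)\big)$. Two elementary facts make everything collapse: $\eta_k\le2$ (because $\util_k,\util_0\in[0,1]$ and $\epsilon\le1$) and $\alpha\util_0\le1$ (because $\alpha\le1$ and $\util_0\le1$). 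From $h_k=\max(1,1/\eta_k)$ one then checks, splitting on whether $\eta_k\le\alpha\util_0$ or $\eta_k>\alpha\util_0$ and on whether $\eta_k<1$ or $\eta_k\ge1$, that $\tfrac{1}{\eta_k^2}\le\tfrac{h_k}{\min(\alpha\util_0,\eta_k)}$ and $\tfrac{1}{\alpha\util_0\eta_k}\le\tfrac{h_k}{\min(\alpha\util_0,\eta_k)}$. The additive $1$'s (there are $\narms+1$ of them) and the $\tfrac{6\narms+2}{\alpha\util_0}$ term are absorbed because every summand of the target bound is already $\ge1$ (as $\min(\alpha\util_0,\eta_k)\le1$ and the logarithmic factor exceeds $\log4>1$). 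For the logarithmic factor, bounding the innermost $\log^+$ via $1/\eta_k^2\le h_k/(\delta\eta_k)$ together with $1/\theta=O(1/\delta)$ turns $\log(\cdots)$ into $O\big(\log(\tfrac{\narms\log(\narms h_k/(\delta\eta_k))}{\delta})\big)$. Summing over $k=0,\dots,\narms$ gives the stated duration.

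For the cost I would run the same substitution in $\costexp_\tau\le\sum_{k:\util_k<\util_0}(\util_0-\util_k)H_k$. When $\util_k<\util_0$ we have $\eta_k=\util_0+\epsilon-\util_k\ge\util_0-\util_k$, so $\tfrac{\util_0-\util_k}{\eta_k^2}\le\tfrac{(\util_0-\util_k)h_k}{\eta_k}$ since $h_k\ge1/\eta_k$; the leftover $(\util_0-\util_k)\cdot1$ coming from the ``$+1$'' in $H_k$ is dominated by $\tfrac{(\util_0-\util_k)h_k}{\eta_k}\log(\cdots)$ because $\tfrac{h_k}{\eta_k}\ge\tfrac{1}{\eta_k^2}\ge\tfrac14$ (using $\eta_k\le2$) and the logarithm is bounded below by a positive constant. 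Simplifying the logarithm exactly as in the duration case yields the claimed cost bound.

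Nothing here is deep once Theorems~\ref{th:banditsample} and \ref{th:costexp} are in hand; the one place needing care is the case analysis that packs the three terms of $H_0$ --- in particular its conservative-exploration term, which is the genuine source of the $\min(\alpha\util_0,\eta_k)$ in the denominator --- into the single expression $h_k/\min(\alpha\util_0,\eta_k)$. The real content lies upstream, in Lemmas~\ref{lem:rad}, \ref{lem:counts}, and \ref{lem:baseline-bound} and in the safety analysis behind Theorem~\ref{th:banditcorrectness}; Theorem~\ref{thm:all} itself is just their packaging into $O(\cdot)$ form.
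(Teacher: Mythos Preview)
Your proposal is correct and follows essentially the same route as the paper: invoke Theorems~\ref{th:banditcorrectness}, \ref{th:banditsample}, and \ref{th:costexp}, specialize to $\sigma=\tfrac12$, $\omega=1$, $\theta=\log(2)\sqrt{\delta/6}$, and then absorb the explicit $H_k$ and $H_0$ expressions into the stated $O(\cdot)$ forms using $1+1/\eta_k^2=O(h_k/\eta_k)$ and the observation that the $\max$ defining $H_0$ is controlled by $h_k/\min(\alpha\util_0,\eta_k)$. The only cosmetic difference is that the paper bounds the $H_k$ piece (as $O(h_k/\eta_k\cdot\log)$) and the conservative-exploration piece of $H_0$ (as $O(\sum_k h_k/(\alpha\util_0)\cdot\log)$) separately before combining them via the $\max$, whereas you bound both directly by $h_k/\min(\alpha\util_0,\eta_k)$ through an explicit case split; the content is identical.
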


\begin{proof}
With $\delta \in (0, \frac{1}{2})$, let $\theta = \log(2)\sqrt{\frac{\conf}{6}}$. Then Theorems \ref{th:banditsample} and \ref{th:costexp} hold for $(\conf, \theta)$.

\paragraph{Duration} We first show that:
\begin{align}\label{eq:bound-Hk}
    H_\grarm = O\bigg(\frac{h_\grarm}{\eta_\grarm}\log\big(\frac{\narms h_\grarm}{\conf \eta_\grarm}\big)\bigg),
\end{align}\begin{align}\label{eq:bound-logHk}
    \log(H_\grarm) = O\bigg(\log\big(\frac{\narms h_\grarm}{\conf \eta_\grarm}\big)\bigg).
\end{align}

Recall from Th. \ref{th:banditsample} that $H_\grarm$ is defined as:
\begin{align*}
H_\grarm = 1 + \frac{64}{\eta_\grarm^2}
\log\bigg(\frac{2(\narms+1)\log^+\big(\frac{256(\narms+1)}{\theta\eta_\grarm^2}\big)}{\theta}\bigg)
\end{align*}

We replace the $\log^+$ term from Th. \ref{th:banditsample} by $\log\big(\frac{\narms h_\grarm}{\conf \eta_\grarm}\big) > 0$, because $\frac{\narms h_\grarm}{\conf} \geq 3$ as soon as $\narms \geq 2$. We thus have
\begin{align}\label{eq:intermHk}
    H_\grarm = 1 + O\bigg(\frac{1}{ \eta_\grarm^2} \underbrace{\log\bigg(\frac{\narms}{\conf} \log\big(\frac{\narms h_\grarm}{\conf \eta_\grarm}\big)}_{=B}\bigg)\bigg),
\end{align}

Using $\log(x) \leq x \Rightarrow x\log(x) \leq x^2$ for $x\geq 0$, and the fact that $\log\big(\frac{\narms h_\grarm}{\conf \eta_\grarm}\big) \geq 0$, we have:
\begin{align*}
    B \leq \log\bigg(\frac{\narms h_\grarm}{\conf \eta_\grarm}\log\big(\frac{\narms h_\grarm}{\conf \eta_\grarm}\big)\bigg) \leq 2\log\big(\frac{\narms h_\grarm}{\conf \eta_\grarm}\big).
\end{align*}
Since $1 + \frac{1}{\eta_\grarm^2} \leq 2\frac{h_\grarm}{\eta_\grarm},$ eq. \eqref{eq:bound-Hk} holds.

We now bound $\log(H_\grarm)$:
\begin{align}
    \log(H_k) &= O\bigg(\log\Big(\frac{h_\grarm}{\eta_\grarm}\log\big(\frac{\narms h_\grarm}{\conf \eta_\grarm}\big)\Big)\bigg)\\
    & = O\bigg(\log\Big(\frac{K h_\grarm}{\delta \eta_\grarm}\log\big(\frac{\narms h_\grarm}{\conf \eta_\grarm}\big)\Big)\bigg)\\
    & = O\bigg(\log\Big(\frac{K h_\grarm}{\delta \eta_\grarm}\Big)\bigg)
\end{align}
where the last line comes from $\frac{K h_\grarm}{\delta \eta_\grarm}\log\big(\frac{\narms h_\grarm}{\conf \eta_\grarm}\big) \leq \big(\frac{K h_\grarm}{\delta \eta_\grarm}\big)^2$.

Therefore, eq. \eqref{eq:bound-logHk} holds.

Now, let 
\begin{align*}
    \Gamma = \frac{6K+ 2}{\alpha\util_0} + \sum_{k=1}^K \frac{128\log\left(\frac{2(\narms+1)\log(2H_\grarm)}{\theta}\right)}{\alpha \util_0 \eta_\grarm},
\end{align*}
so that $H_0 = \max(\max_{\grarm\in \intint{\narms}} H_\grarm, \Gamma)$.

We have:
\begin{align*}
    \Gamma &= O\bigg(\frac{K}{\alpha \util_0} + \sumArm \frac{h_\grarm}{\alpha \util_0}\log\big(\frac{\narms\log(H_\grarm)}{\conf}\big)\bigg) \\
    &= O\bigg(\sumArm \frac{h_\grarm}{\alpha \util_0}\log\big(\frac{\narms\log(H_\grarm)}{\conf}\big)\bigg)\\
    &=O\bigg(\sumArm \frac{h_\grarm}{\alpha \util_0}\log\big(\frac{\narms\log(\frac{\narms h_\grarm}{\conf \eta_\grarm})}{\conf}\big)\bigg),
\end{align*}
where the second equality is because $\narms = \sumArm 1 \leq \sumArm h_\grarm,$ and the last equality uses eq. \eqref{eq:bound-logHk}. Combining this with eq. \eqref{eq:bound-Hk} we have:
\begin{align*}
    H_0 = O\bigg(\sumArm \frac{h_\grarm}{\min(\alpha \util_0, \eta_\grarm)}\log\big(\frac{\narms\log(\frac{\narms h_\grarm}{\conf \eta_\grarm})}{\conf}\big)\bigg).
\end{align*}
Using eq. \eqref{eq:bound-Hk} again to bound $\tau = H_0 + \sumArm H_\grarm,$ , we get the desired bound for duration.

\paragraph{Cost} For the cost, we remind the bound given in Th. \ref{th:costexp}:
\begin{equation}
\begin{aligned}
    \costexp_\tau &\leq \sum_{\grarm: \util_k<\util_0} (\util_0-\util_\grarm)H_k \\
    & = O\bigg(\sum_{\grarm: \util_k<\util_0} \frac{(\util_0-\util_\grarm)h_k}{\eta_k} \log\Big(\frac{\narms}{\conf} \log\big(\frac{\narms h_\grarm}{\conf \eta_\grarm}\big)\Big)\bigg)
\end{aligned}
\end{equation}
using \eqref{eq:intermHk} and $1+ \frac{1}{\eta_k^2} = O(\frac{h_k}{\eta_k})$.


\end{proof}

\subsection{Proof of Theorem \ref{th:audit-proba}}\label{app:proof-audit-proba}

We restate Theorem \ref{th:audit-proba} which summarizes the guarantees for the audit of the probabilistic envy-freeness criterion with \auditalg, and we 
prove it below:

\begin{theorem*}
Let $\efparams\in(0,1],\delta\in(0,\frac{1}{2})$. Let $\nsampled = \ceil{\frac{\log(3/\delta)}{\lambda}}$ and $\narms=\ceil{\frac{\log(3\nsampled/\delta)}{\log(1/(1-\gamma))}}$. With probability at least $ 1-\delta$,
\begin{itemize}[leftmargin=*]
\item \auditalg satisfies the conservative constraint \eqref{eq:c-constraint} for all $\nsampled$ audited users,
\item the bounds on duration and cost from Th.~\ref{thm:all} (using $\frac{\delta}{3\tilde{M}}$ instead of $\delta$) are simultaneously valid,
\item if \auditalg outputs $(\efparams)$-\envyfree, then the recommender system is $(\efparams)$-envy-free, and if it outputs \notenvyfree, then $\exists (\user,\otheruser),\, \iutil^\user(\pi^\user) < \iutil^\user(\pi^n)$.
\end{itemize}
\end{theorem*}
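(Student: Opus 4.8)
The plan is to allocate the failure probability $\delta$ into three budgets of $\delta/3$: one for the $\nsampled$ parallel \banditalg instances, one for the sampling of target users, and one for the sampling of arms within each target user. \textbf{Step 1 (union bound over the \banditalg runs).} Each of the $\nsampled$ calls to \banditalg is run with confidence $\tfrac{\delta}{3\nsampled}$; applying Theorem~\ref{thm:all} (i.e.\ Theorems~\ref{th:banditcorrectness}, \ref{th:banditsample}, \ref{th:costexp}) to each run and union-bounding, with probability at least $1-\tfrac{\delta}{3}$ every run is simultaneously correct, obeys the conservative constraint~\eqref{eq:c-constraint}, terminates, and satisfies the duration and cost bounds of Theorem~\ref{thm:all} with $\tfrac{\delta}{3\nsampled}$ in place of $\delta$. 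On this event each run for a user $\user$ outputs \envy or $\epsilon$-\noenvy, where \envy implies there is a sampled arm $\otheruser$ with $\iutil^\user(\pi^\otheruser) > \iutil^\user(\pi^\user)$, and $\epsilon$-\noenvy implies $\iutil^\user(\pi^\otheruser) \le \iutil^\user(\pi^\user) + \epsilon$ for all of $\user$'s sampled arms (this is just the correctness guarantee of Theorem~\ref{th:banditcorrectness}, reading $\util_0 = \iutil^\user(\pi^\user)$ and the arm means as the $\iutil^\user$ of $\user$'s sampled policies).

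\textbf{Step 2 (user subsampling).} If the system is not $(\efparams)$-envy-free then strictly more than a $\lambda$ fraction of users are $(\epsilon,\gamma)$-envious, so each sampled user is $(\epsilon,\gamma)$-envious with probability $> \lambda$; whether the $\nsampled$ users are drawn with or without replacement, the chance of never hitting this set is at most $(1-\lambda)^{\nsampled}$. Using $\nsampled = \ceil{\log(3/\delta)/\lambda}$ gives $(1-\lambda)^{\nsampled} \le e^{-\lambda\nsampled} \le \tfrac{\delta}{3}$, so with probability at least $1-\tfrac{\delta}{3}$ at least one sampled user is $(\epsilon,\gamma)$-envious. \textbf{Step 3 (arm subsampling).} Conditionally on the set $\tilde S$ of sampled users, for any $\user \in \tilde S$ that is $(\epsilon,\gamma)$-envious a uniformly drawn arm $\otheruser$ satisfies $\iutil^\user(\pi^\user) + \epsilon < \iutil^\user(\pi^\otheruser)$ with probability $>\gamma$ by Definition~\ref{dfn:proba-envyfree}, hence all $\narms$ sampled arms miss this event with probability at most $(1-\gamma)^{\narms} \le \tfrac{\delta}{3\nsampled}$ by the choice $\narms = \ceil{\log(3\nsampled/\delta)/\log(1/(1-\gamma))}$. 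Union-bounding over the at most $\nsampled$ envious users in $\tilde S$ (which holds for every realization of $\tilde S$, hence unconditionally) shows that with probability $\ge 1-\tfrac{\delta}{3}$ every $(\epsilon,\gamma)$-envious sampled user has a sampled arm $\otheruser$ with $\iutil^\user(\pi^\user) + \epsilon < \iutil^\user(\pi^\otheruser)$.

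\textbf{Step 4 (combine).} Intersect the three events; this has probability $\ge 1-\delta$. The conservative-constraint and duration/cost claims for all $\nsampled$ users are immediate from Step~1. For correctness: if \auditalg outputs \notenvyfree then some \banditalg run returned \envy, which by Step~1 produces a pair $(\user,\otheruser)$ with $\iutil^\user(\pi^\user) < \iutil^\user(\pi^\otheruser)$. If instead \auditalg outputs $(\efparams)$-\envyfree then all $\nsampled$ runs returned $\epsilon$-\noenvy; assume for contradiction the system is not $(\efparams)$-envy-free. Then Steps~2--3 yield a sampled user $\user^\star$ and a sampled arm $\otheruser^\star$ with $\iutil^{\user^\star}(\pi^{\otheruser^\star}) > \iutil^{\user^\star}(\pi^{\user^\star}) + \epsilon$, i.e.\ $\max_k \util_k > \util_0 + \epsilon$ in $\user^\star$'s bandit instance; since that run terminated (Step~1) and, by correctness, cannot output $\epsilon$-\noenvy when $\max_k \util_k > \util_0 + \epsilon$, it must have output \envy, contradicting that \auditalg returned $(\efparams)$-\envyfree. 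Hence the system is $(\efparams)$-envy-free.

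The main obstacle is composing the two nested layers of randomness cleanly: one must freeze the user sample before invoking the arm-level concentration, union-bound the arm failures over the \emph{data-dependent} (but always at most $\nsampled$) set of envious sampled users, and keep the three $\delta/3$ pieces tight enough that the stated $\nsampled$ and $\narms$ exactly suffice. A secondary subtlety is matching the asymmetric stopping rules of \banditalg to Definition~\ref{dfn:proba-envyfree}: the \envy branch certifies a strict gap without $\epsilon$, so the implication ``a revealing sampled arm exists $\Rightarrow$ \banditalg outputs \envy'' is not a direct detection property but follows from the correctness guarantee of Theorem~\ref{th:banditcorrectness} combined with finite termination of the run.
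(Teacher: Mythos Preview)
Your proof is correct and follows the same overall architecture as the paper: a three-way split of the failure probability into $\delta/3$ for the \banditalg runs, $\delta/3$ for the user subsampling, and $\delta/3$ for the arm subsampling, combined by union bound.

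The only noteworthy difference is stylistic. The paper argues the correctness of the \envyfree output in the forward direction: it first shows that $\narms$ arms suffice so that $\util_0+\epsilon\ge\max_k\util_k$ implies the user is not $(\epsilon,\gamma)$-envious (invoking a quantile result of Sch\"olkopf--Smola), and then that $\nsampled$ users suffice so that ``no sampled user is $(\epsilon,\gamma)$-envious'' implies the system is $(\efparams)$-envy-free (via a binomial tail inversion bound of Langford). You instead take the contrapositive route: assume the system is not $(\efparams)$-envy-free, show a revealing arm exists for some sampled user with high probability, and conclude that \banditalg on that user would have output \envy by combining finite termination with the correctness guarantee. Your direct computations $(1-\lambda)^{\nsampled}\le e^{-\lambda\nsampled}$ and $(1-\gamma)^{\narms}\le \delta/(3\nsampled)$ are more elementary than the cited results and make the argument self-contained; the paper's forward phrasing has the minor advantage that the implication ``$\epsilon$-\noenvy $\Rightarrow$ not $(\epsilon,\gamma)$-envious'' is stated per user, which may be of independent interest. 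Both routes are equally rigorous and yield the same constants.
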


\begin{proof}
The first point is a consequence of Theorem \ref{th:banditcorrectness} and the second point is a consequence of Theorems \ref{th:banditsample} and \ref{th:costexp}. Since we apply \banditalg to each target user with confidence $\frac{\delta}{3 \nsampled}$, by the union bound the confidence intervals are simultaneously valid for all $\nsampled$ target users with probability $1-\frac{\delta}{3}$. Therefore, with probability at least $1-\frac{\delta}{3}$, the conservative constraint is satisfied for all $\nsampled$ users \emph{and} the bounds on cost and duration hold simultaneously for all $\nsampled$ users.

We now prove the third bullet point in two steps.
\paragraph{Step 1} We show that the value of $\narms=\frac{\log(3\nsampled/\delta)}{\log(1/(1-\gamma))}$ is chosen to guarantee the following result: with probability $1-\frac{\delta}{3\nsampled}$, if for a user we have $\mu_0 + \epsilon \geq \max\limits_{k\in \intint{\narms}}\mu_k$, then the user is not $(\epsilon,\gamma)$-envious.

First, we apply the theorem on random subset selection from (\citet{scholkopf2002learning}, Theorem 6.33), which guarantees that with probability $1 - (1-
\gamma)^\narms$, the arm with maximal reward among the $\narms$ arms is in the $(1-\gamma)$-quantile range of all possible $\nusers$ arms. Solving for $(1-
\gamma)^\narms = \frac{\delta}{3\nsampled}$, we get that when $\narms=\ceil{\frac{\log(3\nsampled/\delta)}{\log(1/(1-\gamma))}},$ the arm with maximal reward among the $\narms$ is in the $(1-\gamma)$ quantile range with probability $1 - \frac{\delta}{3\nsampled}.$  This means that if for a target user $\user$, we have $\iutil^\user(\pi^\user) + \epsilon = \mu_0 + \epsilon \geq \max\limits_{k\in \intint{\narms}}\mu_k,$ then with probability $1 - \frac{\delta}{3\nsampled}$, we also have:
\begin{align*}
    \mathbb{P}_{\otheruser \sim U_M}[\iutil^\user(\pi^\user) + \epsilon \geq \iutil^\user(\pi^\otheruser)] \geq 1-\gamma, 
\end{align*} meaning the user is not $(\epsilon,\gamma)$-envious. By a union bound over the $\nsampled$ target users, the property holds simultaneously for all $\nsampled$ target users with probability $1-\frac{\delta}{3}.$

\paragraph{Step 2} We now show that the number of users to audit $\nsampled=\ceil{\frac{\log(3/\delta)}{\lambda}}$ is chosen to guarantee that if none of the $\nsampled$ sampled users are $(\epsilon,\gamma)$-envious, then this holds true for an $(1-\lambda)$ fraction of the whole population with probability $1-\frac{\delta}{3}.$

Let $\delta'=\frac{\delta}{3}$. Denoting $q$ the probability that a user is not $(\epsilon,\gamma)$-envious, we want to guarantee that $q \geq 1-\lambda$ with probability at least $1-\delta'$, using $\nsampled$ Bernoulli trials where $p:= 1 - q$ is the probability of success. 

Let $\bar{B}(\nsampled, k, \delta')$ denote the largest $p'$ such that the probability of observing $k$ or more successes is at least $1-\delta'$ (i.e., $\bar{B}(\nsampled, k, \delta')$ is the binomial tail inversion). By definition, we have $p \leq \bar{B}(\nsampled, 0, \delta')$. Using the property that $\bar{B}(\nsampled, 0, \delta') \leq \frac{\log(1/\delta')}{\nsampled}$ (see e.g., \cite{langford2005tutorial}), we can guarantee that $p\leq \lambda$ as soon as  $\frac{\log(1/\delta')}{\nsampled} \leq \lambda$. Solving for $\nsampled$, we obtain that $\nsampled=\ceil{\frac{\log(1/\delta')}{\lambda}}=\ceil{\frac{\log(3/\delta)}{\lambda}}$ is sufficient to guarantee $p \leq \lambda$, or equivalenly $q\geq 1 - \lambda$ with probability $1-\frac{\delta}{3}.$

\paragraph{} We combining Step 1 and 2 by a union bound: if for $\nsampled$ users and $\narms$ arms, we have $\mu_0 + \epsilon \geq \max\limits_{k\in \intint{\narms}}\mu_k,$ then with probability $1 - \frac{2\delta}{3}$, an $(1-\lambda)$ fraction of the whole population is not $(\epsilon,\gamma)$-envious -- or equivalently, the recommender system is $(\efparams)$-envy-free. Since \banditalg is correct with probability $1-\frac{\delta}{3}$ when outputting that $\mu_0 + \epsilon \geq \max\limits_{k\in \intint{\narms}}\mu_k$ (i.e., $\epsilon$-\noenvy), the union bound guarantees with probability $1-\delta$ that \auditalg is correct when outputting $(\efparams)$-\envyfree.  Since \banditalg is correct with probability $\geq 1-\delta$ when outputting \envy, then so is \auditalg when outputting \notenvyfree, which achieves the proof of the third bullet point.

\end{proof}

\end{document}